\theoremstyle{plain}
\newtheorem{theorem}{Theorem}[section]
\newtheorem{proposition}[theorem]{Proposition}
\newtheorem{lemma}[theorem]{Lemma}
\theoremstyle{definition}
\newtheorem{definition}[theorem]{Definition}
\newtheorem{assumption}[theorem]{Assumption}
\theoremstyle{remark}
\newtheorem{remark}[theorem]{Remark}
\newcommand{\zerodisplayskips}{%
  \setlength{\abovedisplayskip}{4.5pt}%
  \setlength{\belowdisplayskip}{4.5pt}%
  \setlength{\abovedisplayshortskip}{4.5pt}%
  \setlength{\belowdisplayshortskip}{4.5pt}}
\appto{\normalsize}{\zerodisplayskips}
\appto{\small}{\zerodisplayskips}
\appto{\footnotesize}{\zerodisplayskips}
\DeclareMathOperator*{\argmax}{arg\,max}
\DeclareMathOperator*{\argmin}{arg\,min}
\DeclareMathOperator*{\opsE}{\mathbb{E}}
\DeclareMathOperator{\E}{\mathbb{E}}
\let\P\relax
\DeclareMathOperator{\P}{\mathbb{P}}
\let\KL\relax
\DeclareMathOperator{\KL}{KL}
\DeclareMathOperator{\TV}{TV}
\DeclareMathOperator{\Normal}{Normal}
\DeclareMathOperator{\Uniform}{Uniform}
\newcommand\numberthis{\addtocounter{equation}{1}\tag{\theequation}}
\newcommand{\norm}[1]{\left\lVert#1\right\rVert}
\newcommand{\regnorm}[1]{\lVert#1\rVert}
\newcommand{\bignorm}[1]{\big\lVert#1\big\rVert}
\newcommand{\indep}{\perp \!\!\! \perp}
\newcommand{\N}{\mathbb{N}}
\newcommand{\R}{\mathbb{R}}
\newcommand{\calA}{\mathcal{A}}
\newcommand{\calB}{\mathcal{B}}
\newcommand{\calC}{\mathcal{C}}
\newcommand{\calD}{\mathcal{D}}
\newcommand{\calF}{\mathcal{F}}
\newcommand{\calH}{\mathcal{H}}
\newcommand{\calO}{\mathcal{O}}
\newcommand{\calP}{\mathcal{P}}
\newcommand{\calQ}{\mathcal{Q}}
\newcommand{\calS}{\mathcal{S}}
\newcommand{\calU}{\mathcal{U}}
\newcommand{\calV}{\mathcal{V}}
\newcommand{\calX}{\mathcal{X}}
\newcommand{\calY}{\mathcal{Y}}
\newcommand{\calZ}{\mathcal{Z}}
\newcommand{\frakR}{\mathfrak{R}}
\newcommand{\frakRwn}{\mathfrak{R}_{\w,\nC}}
\newcommand{\frakRwnF}{\frakR_{\w,\nC}(\calF_\Pi)}
\newcommand{\eps}{\varepsilon}
\newcommand{\Ri}{R_c}
\newcommand{\Qi}{Q_c}
\newcommand{\Qtilde}{\tilde Q}
\newcommand{\Qhat}{\hat Q}
\newcommand{\ohat}{\hat{w}}
\newcommand{\kij}{k_c(i)}
\newcommand{\muhatikij}{\hat{\mu}_c^{-k_c(i)}}
\newcommand{\ohatikij}{\ohat_c^{-k_c(i)}}
\newcommand{\Di}{\calD_c}
\newcommand{\calDc}{\calD_c}
\newcommand{\tildecalDc}{\tilde{\calD}_c}
\newcommand{\calDcec}{\bar{\calD}_c}
\newcommand{\calDw}{\calD_\w}
\newcommand{\calDwew}{\bar{\calD}_\w}
\newcommand{\calDkek}{\bar{\calD}_k}
\newcommand{\calPc}{\calP_c}
\newcommand{\mui}{\mu_c}
\newcommand{\ei}{e_c}
\newcommand{\oi}{w_c}
\renewcommand{\ni}{n_c}
\renewcommand{\Xi}{X^c}
\newcommand{\Ai}{A^c}
\newcommand{\Wi}{W^c}
\newcommand{\Yi}{Y^c}
\newcommand{\vecYi}{\vec{Y}^c}
\newcommand{\Zi}{Z^c}
\newcommand{\Gi}{\Gamma^c}
\newcommand{\Xij}{X_{i}^{c}}
\newcommand{\Aij}{A_{i}^{c}}
\newcommand{\Wij}{W_{i}^{c}}
\newcommand{\Wijprime}{W_{i}^{c\prime}}
\newcommand{\Yij}{Y_{i}^{c}}
\newcommand{\Zij}{Z_{i}^{c}}
\newcommand{\Whatij}{\hat{W}_{i}^{c}}
\newcommand{\Gij}{\Gamma_{i}^{c}}
\newcommand{\Gijp}{\Gamma_{i}^{c\prime}}
\newcommand{\Gijpp}{\Gamma_{i}^{c\prime\prime}}
\newcommand{\Gijppp}{\Gamma_{i}^{c\prime\prime\prime}}
\newcommand{\Ghatij}{\hat{\Gamma}_{i}^{c}}
\newcommand{\vecGij}{\vec{\Gamma}_{i}^{c}}
\newcommand{\epsij}{\varepsilon_{i}^{c}}
\newcommand{\Xk}{X^k}
\newcommand{\ek}{e_k}
\newcommand{\gij}{\gamma_{i}^{c}}
\newcommand{\omegaij}{\omega_{i}^{c}}
\newcommand{\xij}{x_{i}^{c}}
\newcommand{\x}{\{\xij\mid c\in\calC, i\in[n_c]\}}
\newcommand{\alphaK}{\alpha_K}
\newcommand{\zetamu}{{\zeta_\mu}}
\newcommand{\zetao}{{\zeta_w}}
\newcommand{\nuc}{\nu_{c}}
\newcommand{\gc}{g_c}
\newcommand{\tilden}{\tilde{n}}
\newcommand{\tildenij}{\tilde{n}_{i}^{c}}
\newcommand{\tildex}{\tilde{x}}
\newcommand{\tildexij}{\tilde{x}_{i}^{c}}
\newcommand{\ones}{\mathbf{1}}
\newcommand{\pihat}{\hat\pi}
\newcommand{\pihatw}{\hat{\pi}_\w}
\newcommand{\pistari}{\pi^*_c}
\newcommand{\pistarw}{\pi^*_\w}
\newcommand{\weight}{\lambda}
\newcommand{\w}{\weight}
\newcommand{\wi}{\weight_c}
\newcommand{\Qw}{Q_\w}
\newcommand{\Qhatw}{\hat{Q}_\w}
\newcommand{\Qtildew}{\tilde{Q}_\w}
\newcommand{\Rw}{R_\w}
\newcommand{\muhati}{\hat{\mu}_c}
\newcommand{\ohati}{\hat{w}_c}
\newcommand{\Qhati}{\hat{Q}_c}
\newcommand{\Deltaw}{\Delta_\w}
\newcommand{\Deltahatw}{\hat{\Delta}_\w}
\newcommand{\Deltatildew}{\tilde{\Delta}_\w}
\newcommand{\nc}{n_{c}}
\newcommand{\nC}{n_{\calC}}
\newcommand{\barn}{\bar{n}}
\newcommand{\barni}{\barn_c}
\newcommand{\barV}{\bar{V}}
\newcommand{\supF}{\sup_{f\in\calF}}
\newcommand{\supH}{\sup_{h\in\calH}}
\newcommand{\supPi}{\sup_{\pi\in\Pi}}
\newcommand{\supPiab}{\sup_{\pi_a,\pi_b\in\Pi}}
\newcommand{\maxPi}{\max_{\pi\in\Pi}}
\newcommand{\maxC}{\max_{c\in\calC}}
\newcommand{\minC}{\min_{c\in\calC}}
\newcommand{\pia}{\pi_a}
\newcommand{\pib}{\pi_b}
\newcommand{\sumin}{\sum_{i=1}^{n}}
\newcommand{\sumaA}{\sum_{a\in\calA}}
\newcommand{\sumkK}{\sum_{k=1}^K}
\newcommand{\sumiM}{\sum_{c\in\calC}}
\newcommand{\fracwini}{\frac{\w_c}{n_c}}
\newcommand{\fracwisqni}{\frac{\w_c^2}{n_c}}
\newcommand{\fracwinisq}{\frac{\w_c^2}{n_c^2}}
\newcommand{\fracwibarni}{\frac{\w_c}{\barn_c}}
\newcommand{\fracwisqbarni}{\frac{\w_c^2}{\barn_c}}
\newcommand{\fracskewnessn}{\frac{\skewness}{n}}
\newcommand{\sqrtfracskewnessn}{\sqrt{\fracskewnessn}}
\newcommand{\fracVwn}{\frac{\Vwn}{n}}
\newcommand{\sqrtfracVwn}{\sqrt{\fracVwn}}
\newcommand{\sumjni}{\sum_{i=1}^{n_c}}
\newcommand{\sumijw}{\sumiM\fracwini\sumjni}
\newcommand{\sumijwsq}{\sumiM\sumjni\fracwinisq}
\newcommand{\sumij}{\sumiM\sumjni}
\newcommand{\swnC}{s_{\w,\nC}}
\newcommand{\UwnC}{U_{\w,\nC}}
\newcommand{\skewness}{\mathfrak{s}(\w\|\barn)}
\newcommand{\Bw}{B_{\w,\nC}}
\newcommand{\BwW}{B_{\w,\nC}(W)}
\newcommand{\Vwn}{V_{\w,\nC}}
\newcommand{\Uwn}{U_{\w,\barn}}
\newcommand{\regabs}[1]{|#1|}
\newcommand{\abs}[1]{\left|#1\right|}
\newcommand{\regbra}[1]{[#1]}
\newcommand{\bra}[1]{\left[#1\right]}
\newcommand{\rbra}[1]{\left(#1\right)}
\newcommand{\bigabs}[1]{\big|#1\big|}
\newcommand{\Bigabs}[1]{\Big|#1\Big|}
\newcommand{\biggabs}[1]{\bigg|#1\bigg|}
\newcommand{\Biggabs}[1]{\Bigg|#1\Bigg|}
\newcommand{\smallbra}[1]{[#1]}
\newcommand{\bigbra}[1]{\big[#1\big]}
\newcommand{\Bigbra}[1]{\Big[#1\Big]}
\newcommand{\biggbra}[1]{\bigg[#1\bigg]}
\newcommand{\bigpar}[1]{\big(#1\big)}
\newcommand{\Bigpar}[1]{\Big(#1\Big)}
\newcommand{\biggpar}[1]{\bigg(#1\bigg)}
\newcommand{\Bigmid}{\ \Big|\ }
\newcommand{\ellw}{\ell_{\w,2}}
\newcommand{\Ham}{\text{H}}
\newcommand{\VC}{\text{VC}}
\newcommand{\littleo}[1]{o\left(#1\,\right)}
\newcommand{\littleop}[1]{o_p\left(#1\,\right)}
\newcommand{\bigO}[1]{\calO\left(#1\,\right)}
\icmltitlerunning{Federated Offline Policy Learning}
\begin{document}

\twocolumn[
\icmltitle{Federated Offline Policy Learning}



\icmlsetsymbol{equal}{*}




\begin{icmlauthorlist}
\icmlauthor{Aldo Gael Carranza}{sta}
\icmlauthor{Susan Athey}{sta}
\end{icmlauthorlist}

\icmlaffiliation{sta}{Stanford University}

\icmlcorrespondingauthor{Aldo Gael Carranza}{aldogael@stanford.edu}

\icmlkeywords{Machine Learning, ICML, Offline Policy Learning, Contextual Bandits, Federated Learning}

\vskip 0.3in
]



\printAffiliationsAndNotice{}  

\begin{abstract}
We consider the problem of learning personalized decision policies from observational bandit feedback data across multiple heterogeneous data sources. In our approach, we introduce a novel regret analysis that establishes finite-sample upper bounds on distinguishing notions of global regret for all data sources on aggregate and of local regret for any given data source. We characterize these regret bounds by expressions of source heterogeneity and distribution shift. Moreover, we examine the practical considerations of this problem in the federated setting where a central server aims to train a policy on data distributed across the heterogeneous sources without collecting any of their raw data. We present a policy learning algorithm amenable to federation based on the aggregation of local policies trained with doubly robust offline policy evaluation strategies. Our analysis and supporting experimental results provide insights into tradeoffs
in the participation of heterogeneous data sources in offline policy learning.
\end{abstract}

\section{Introduction}

Offline policy learning from observational bandit feedback data
is an effective approach for learning personalized decision policies in applications where obtaining online, real-time data is impractical
\citep{swaminathan2015batch, kitagawa2018should,athey2021policy}. Typically, the observational data used in offline policy learning is assumed to originate from a single source distribution. However, in practice, we often have the opportunity to leverage multiple datasets collected from various experiments
under different populations, environments, or logging policies \citep{kallus2021optimal}.
For instance, a healthcare policymaker may have access to data from multiple hospitals that conduct different types of clinical trials on distinct patient populations.
Learning from multiple heterogeneous observational datasets,
with their more diverse and extensive coverage of the decision space, may lead to better personalized decision policies, assuming sufficient similarity and generalization across data sources.

However, several practical constraints, such as privacy concerns, legal restrictions, proprietary interests, or competitive barriers, can hinder the consolidation of datasets across sources.
Federated learning \citep{kairouz2021advances}
presents a potential solution to such obstacles by offering
a framework for training machine learning models in a decentralized manner, thereby minimizing systemic risks
associated with traditional, centralized machine learning.
Although federated learning principles have been widely implemented on standard supervised learning tasks, they have not been extensively explored in more prescriptive machine learning tasks such as policy learning.
Federated learning techniques applied to policy learning can enable platforms to learn targeted decision policies without centrally storing sensitive user information. It also has the potential to incentivize institutions to collaborate on developing policies that are more generalizable across diverse environments without having to share sensitive data, such as clinical patient data in hospitals.

In this work, we introduce the problem of learning personalized decision policies on observational bandit feedback data from multiple heterogeneous data sources.
Our main technical contribution is the presentation of a novel regret analysis that establishes finite-sample upper bounds on distinguishing notions of \textit{global regret} for all data sources on aggregate and of \textit{local regret} for any given data source.
We characterize these regret bounds by expressions of source heterogeneity and distribution shift.
Moreover, we consider the additional complexities of this problem in the federated setting where central data aggregation across sources is constrained and data source heterogeneity is known to be a significant practical concern.
We introduce an offline policy learning algorithm amenable to federation and we experimentally verify the effect of source heterogeneity on regret. Additionally, we present design choices to overcome local performance degradation due to distribution shift. Our analysis and supporting experimental results in this work help provide insights into the tradeoffs in the participation of heterogeneous data sources in offline policy learning.

\section{Related Work}

\paragraph{Offline Policy Learning}

In recent years, there have been significant advancements in offline policy learning from observational bandit feedback data.
\citet{swaminathan2015batch, kitagawa2018should} established frameworks for structured decision policy learning using offline policy evaluation strategies. \citet{athey2021policy} achieved optimal regret rates under unknown propensities through doubly robust estimators. \citet{zhou2023offline} extended these results to the multi-action setting. \citet{kallus2018balanced} found optimal weights for the target policy directly from the data. \citet{zhan2021policy} ensure optimal regret guarantees under adaptively collected data with diminishing propensities. \citet{jin2022policy} relaxed the uniform overlap assumption to partial overlap under the optimal policy.
We also mention that many contextual bandit methods often utilize offline policy learning oracles when devising adaptive action-assignment rules \citep{bietti2021contextual,krishnamurthy2021adapting, simchi2022bypassing, carranza2022flexible}.
More relevant to our setting
of policy learning under heterogeneous data sources,
\citet{agarwal2017effective,he2019off,kallus2021optimal} leveraged data from multiple historical logging policies, but they assume the same underlying populations and environments. 
Our problem setting is also very closely related to multi-task offline policy learning, where each task can be seen as a data source.
\citet{hong2023multi} study a hierarchical Bayesian approach to this problem where task similarity is captured using an unknown latent parameter.
In our experimental section, we further discuss this method and compare results of this method with those of our approach.

\vspace{-.75em}
\paragraph{Federated Learning}

\citet{kairouz2021advances, wang2019federated} offer comprehensive surveys on federated learning and its challenges. \citet{mohri2019agnostic} presented an agnostic supervised federated learning framework, introducing multiple-source learning concepts like weighted Rademacher complexity and skewness measures which we make use of in our work. \citet{wei2021federated} established excess risk bounds for supervised federated learning under data heterogeneity. \citet{li2019convergence,li2020federated,karimireddy2020scaffold} explore the impact of client heterogeneity on model convergence in federated learning. Contextual bandits in federated settings have been studied by \citet{agarwal2020federated,dubey2020differentially,huang2021federated,agarwal2023empirical}. However, offline policy evaluation and learning in federated settings remain relatively underexplored.  \citet{xiong2021federated} investigated federated methods for estimating average treatment effects across heterogeneous data sources. \citet{zhou2022federated,shen2023distributed} delved into federated offline policy optimization in full reinforcement learning settings but with significant limitations,
including relying on strong linear functional form assumptions with highly suboptimal rates or a difficult saddle point optimization approach focusing more on policy convergence.

\vspace{-1em}
\section{Preliminaries}

\subsection{Setting}\label{sec:Preliminaries-Setting}

We introduce the problem of offline policy learning from observational bandit feedback data across multiple heterogeneous data sources. Throughout the paper, we refer to a heterogeneous data source as a \textit{client} and the central planner that aggregates client models as the \textit{central server}, following standard federated learning terminology.

Let $\calX\subset\R^p$ be the context space, $\calA=\{a_1,\dots,a_d\}$ be the finite action space with $d$ actions, and $\calY\subset\R$ be the reward space. A \textit{decision policy} $\pi:\calX\to\calA$ is a mapping from the context space $\calX$ to actions $\calA$. We assume there is a central server and a finite set of clients $\calC$, with each client $c\in\calC$ possessing a \textit{local data-generating distribution} $\Di$ defined over $\calX\times\calY^d$ which governs how client contexts $\Xi$ and client potential reward outcomes $\Yi(a_1),\dots,\Yi(a_d)$ are generated. Moreover, the central server specifies a fixed distribution $\w$ over the set of clients $\calC$ describing how clients will be sampled or aggregated\footnote{Clients are sampled in the cross-device FL setting and aggregated in the cross-silo FL settings.}, which we will simply refer to as the \textit{client sampling distribution}.

The central server seeks to train a decision policy that performs well on the \textit{global data-generating mixture distribution} defined by $\calDw\coloneqq\sumiM\wi\calDc$. At the same time, if there is a potential target client of interest, the central server may not want the personalized policy to perform poorly on the local distribution of this client, otherwise their locally trained policy may provide greater utility to the client, and thus their participation is disincentivized.
In the following section, we introduce the exact policy performance measures that capture these two potentially opposing objectives.

\subsection{Objective}

We consider the immediate reward gained by a client by taking actions according to any given policy. Additionally, we extend this metric to a global version that captures the aggregate reward gained from the mixture of clients under the client sampling distribution.

\begin{definition}
    The \textit{local policy value} under client $c$ and the \textit{global policy value} under client sampling distribution $\w$ of a policy $\pi$ are, respectively,
    \begin{gather*}
        \Qi(\pi)\coloneqq\opsE_{\Zi\sim\calDc}\smallbra{Y^c(\pi(X^c))} \\
        \Qw(\pi)\coloneqq\opsE_{c\sim\w\vphantom{\Zi\sim\calDc}}\opsE_{\vphantom{c\sim\w}\Zi\sim\calDc}\regbra{\Yi(\pi(\Xi))},
    \end{gather*}
    where the expectations are taken with respect to the corresponding local data-generating distributions $\Zi=(\Xi,\Yi(a_1),\dots,\Yi(a_d))\sim\calDc$ and the client sampling distribution $c\sim\w$.
\end{definition}

The performance of a policy is typically characterized by a notion of regret against an optimal policy in a specified \textit{policy class} $\Pi\subset\{\pi:\calX\to\calA\}$, which we assume to be fixed throughout the paper. We define local and global versions of regret based on their respective versions of policy values.

\begin{definition}
    The \textit{local regret} under client $c$ and the \textit{global regret} under client sampling distribution $\w$ of a policy $\pi$ relative to the given policy class $\Pi$ are, respectively,
    \begin{gather*}
        \Ri(\pi)\coloneqq\max_{\pi'\in\Pi}\Qi(\pi')-\Qi(\pi) \\
        \Rw(\pi)\coloneqq\max_{\pi'\in\Pi}\Qw(\pi')-\Qw(\pi).
    \end{gather*}
\end{definition}

The objective of the central server is to determine a policy in the specified policy class $\Pi$ that minimizes global regret. On the other hand, the central server also aims to characterize the corresponding local regret of a target client under the obtained policy since this quantity captures the client's corresponding individual utility to a global policy.

\subsection{Data-Generating Processes}\label{sec:Preliminaries-Data}

We assume each client $c\in\calC$ has a \textit{local observational data set}
$\{(\Xij,\Aij,\Yij)\}_{i=1}^{n_c}\subset\calX\times\calA\times\calY$ consisting of $n_c\in\N$ triples of contexts, actions, and rewards collected using a \textit{local experimental stochastic policy} $e_c:\calX\to\Delta(\calA)$ in the following manner. For the $i$-th data point of client $c$,
\vspace{-.25em}
\begin{compactenum}
    \item nature samples
    $(\Xij,\Yij(a_1),\dots,\Yij(a_d))\sim\calDc$;

    \item client $c$ is assigned action $\Aij\sim e_c(\cdot|\Xij)$;
    
    \item client $c$ observes the realized outcome $\Yij=\Yij(\Aij)$ ;

    \item client $c$ logs the data tuple $(\Xij,\Aij,\Yij)$ locally.\footnote{If the propensity $e_c(\Aij|\Xij)=\P(\Aij|\Xij)$ is known, it also locally logged as it facilitates subsequent policy value estimation.}
\end{compactenum}

We will let $\smash[b]{n\coloneqq\sumiM\ni}$ denote the \textit{total sample size} across clients.
Note that although the counterfactual reward outcomes $\Yij(a)$ for all $a\in\calA\backslash\{\Aij\}$ exist in the local data-generating process, they are not observed in the realized data. All clients only observe the outcomes associated to their assigned treatments. For this reason, such observational data is also referred to as \textit{bandit feedback data} \citep{swaminathan2015batch}.

Given these data-generating processes, it will be useful to introduce the data-generating distributions that also incorporate how actions are sampled. 
For each client $c\in\calC$, the local historical policy $e_c$ induces
a \textit{complete local data-generating distribution}
$\calDcec$ defined over $\calX\times\calA\times\calY^d$
that dictates how the entire local contexts, actions, and potential outcomes were sampled in the local data-generating process, i.e., $(\Xij,\Aij,\Yij(a_1),\dots,\Yij(a_d))\sim\calDcec$.
Given this construction of the complete client distributions, we also introduce the \textit{complete global data-generating mixture distribution} defined by $\calDwew\coloneqq\sumiM\wi\calDcec$.

\subsection{Data Assumptions}\label{sec:Preliminaries-DataAssumptions}
We make the following standard assumptions on the data-generating process of any given client.
\begin{assumption}[Local Ignorability]\label{ass:dgp}
    For any client $c\in\calC$, the complete local data-generating distribution $(\Xi,\Ai,\Yi(a_1),\dots,\Yi(a_d))\sim\calDcec$ satisfies:
    \vspace{-.25em}
    \begin{compactenum}
        \item[(a)] \textit{Boundedness}: The marginal distribution of $\calDcec$ on the set of potential outcomes $\calY^d$ has bounded support, i.e., there exists some $B_c>0$ such that $\abs{\Yi(a)}\le B_c$ for all $a\in\calA$.
        
        \item[(b)] \textit{Unconfoundedness}:
        Potential outcomes are independent of the observed action conditional on the observed context, i.e., $(\Yi(a_1),\dots,\Yi(a_d))\indep \Ai\mid\Xi$.

        \item[(c)] \textit{Overlap}: For any given context, every action has a non-zero probability of being sampled, i.e., there exists some $\eta_c>0$ such that $\P(\Ai=a|\Xi=x)\ge\eta_c$ for any $a\in\calA$ and $x\in\calX$.
    \end{compactenum}
\end{assumption}

Note that the \textit{boundedness} assumption is not essential and we only impose it for simplicity in our analysis. With additional effort, we can instead rely on light-tail distributional assumptions such as sub-Gaussian potential outcomes as in \citep{athey2021policy}. \textit{Unconfoundedness} ensures that action assignment is as good as random after accounting for measured covariates, and it is necessary to ensure valid policy value estimation
using inverse propensity-weighted strategies.
The \textit{uniform overlap} condition ensures that all actions are taken sufficiently many times to guarantee accurate evaluation of any policy.
This assumption may not be entirely necessary as recent work \citep{jin2022policy} introduced an approach that does away with the uniform overlap assumption for all actions and only relies on overlap for the optimal policy.
However, in our work, we made the above assumptions to simplify our analysis and maintain the focus of our contributions on the effects of data heterogeneity on policy learning.
In any case, these stated assumptions are standard and they are satisfied in many experimental settings such as randomized controlled trials or A/B tests.

Next, we also impose the following local data scaling assumption on each client. 
\begin{assumption}[Local Data Scaling]\label{ass:LocalDataSizeScaling}
    All local sample sizes asymptotically increase with the total sample size, i.e., for each $c\in\calC$, $n_c=\Omega(\nu_c(n))$ where $\nu_c$ is an increasing function of the total samples size $n$.
\end{assumption}
This assumption states that, asymptotically, the total sample size cannot increase without increasing across all data sources. We emphasize that this assumption is quite benign since $\nu_c$ could be any slowly increasing function (e.g., an iterated logarithm) and the asymptotic lower bound condition even allows step-wise increments. We only impose this assumption to ensure that the regret bounds in our analysis scale with respect to the total sample size with sensible constants. However, it does come at the cost of excluding scenarios in which a client always contributes $O(1)$ amount of data relative to the total data, no matter how much more total data is made available in aggregate, in which case one may expect it is better to exclude any such client.

\section{Approach}\label{sec:Approach}

The approach for the central server is to use the available observational data to construct an appropriate estimator of the global policy value and use this estimator to find an optimal global policy.

\subsection{Nuisance Parameters}
We define the following functions which we refer to as \textit{nuisance parameters} as they will be required to be separately known or estimated in the policy value estimates.

\begin{definition}
    The local \textit{conditional response} function $\mui$ and \textit{inverse conditional propensity} function $\oi$ of client $c\in\calC$
    with complete local data-generating distribution $(\Xi,\Ai,\Yi(a_1),\dots,\Yi(a_d))\sim\calDcec$
    are defined, respectively, for any $x\in\calX$ and $a\in\calA$, as
    \begin{gather*}
        \mui(x;a)\coloneqq\E[\Yi(a)|\Xi=x] \\
        \oi(x;a)\coloneqq1/\P(\Ai=a|\Xi=x).
    \end{gather*}
    For notational convenience, we let $\mui(x)=(\mui(x;a))_{a\in\calA}$ and $\oi(x)=(\oi(x;a))_{a\in\calA}$.
\end{definition}
In our estimation strategy, each client must estimate the conditional response and inverse conditional propensity functions when they are unknown. Following the literature on double machine learning \cite{chernozhukov2018double}, we make the following high-level assumption on the estimators of these local nuisance parameters.
\begin{assumption}\label{ass:FiniteSampleError}
    For any client $c\in\calC$, the local estimates $\muhati$ and $\ohati$ of the nuisance parameters $\mui$ and $\oi$, respectively, trained on $m$ local data points satisfy
    the following squared error bound:
    \begin{equation*}
        \resizebox{\linewidth}{!}{$\E\!\bigbra{\!\norm{\muhati(\Xi)\!-\!\mui(\Xi)}_2^2\!}\!\cdot\!\E\!\bigbra{\!\norm{\ohati(\Xi)\!-\!\oi(\Xi)}_2^2\!}\le\frac{o(1)}{m}$,}
    \end{equation*}
    where the expectation is taken with respect to the marginal distribution of $\calDc$ over contexts.
\end{assumption}

We emphasize this is a standard assumption in the double machine learning literature,
and we can easily construct estimators that satisfy these rate conditions, given sufficient regularity on the nuisance parameters \citep{zhou2023offline}.
See Appendix \ref{app:NuisanceParameterEstimation} for more details.
They can be estimated with widely available out-of-the-box regression and classification implementations.
Moreover, this condition is general and flexible enough to allow one to trade-off the accuracies of estimating the nuisance parameters. This is an important property in offline policy learning where distribution shift in the batch data can complicate consistent reward estimation.

\subsection{Policy Value Estimator}\label{sec:Approach-PolicyValueEstimators}

Next, we define our policy value estimators.
For any client $c\in\calC$,
we define the \textit{local augmented inverse propensity weighted} (AIPW) score
for each $a\in\calA$
to be
\begin{equation*}
    \Gi(a)\coloneqq\mui(\Xi;a)+\widebar{Y}^c(\Ai)\cdot\oi(\Xi;a)\cdot\ones\{\Ai=a\},
\end{equation*}
where
\begin{math}
\widebar{Y}^c=\Yi(\Ai)-\mui(\Xi;a)
\end{math}
are the centered outcomes
and
$(\Xi,\Ai,\Yi(a_1),\dots,\Yi(a_d))\sim\calDcec$.
One can readily show that $\Qi(\pi)=\opsE_{\calDcec}[\Gi(\pi(\Xi))]$, and therefore $\Qw(\pi)=\opsE_{\w}\opsE_{\calDcec}[\Gi(\pi(\Xi))]$ (see the proof in Lemma \ref{lem:ExpectedOracleEqualsLocalPolicyValue}). Accordingly, our procedure is to estimate the local AIPW scores and appropriately aggregate them to form the global policy value estimator.

We assume we have constructed nuisance parameter estimates $\muhati$ and $\ohati$ that satisfy Assumption \ref{ass:FiniteSampleError}. Then, for each data point $(\Xij,\Aij,\Yij)$ in the local observational data set of client $c\in\calC$, we define the \textit{approximate local AIPW} score for each $a\in\calA$ to be
\begin{equation*}
    \Ghatij(a)\coloneqq\muhati(\Xij;a)+\hat{\widebar{Y}}_i^c\cdot\ohati(\Xij;a)\cdot\ones\{\Aij=a\},
\end{equation*}
where
\begin{math}
    \hat{\widebar{Y}}_i^c=\Yij-\muhati(\Xij;a)
\end{math}
are the approximate centered outcomes.
Using these estimated scores, we introduce the following \textit{doubly robust global policy value estimate}:
\begin{equation*}
    \Qhatw(\pi)=\opsE_{c\sim\w}\bra{\frac{1}{\ni}\sum_{i=1}^{\ni}\Ghatij(\pi(\Xij))}.
\end{equation*}
Our proposed estimator is a generalized aggregate version of the doubly robust policy value estimator introduced in the standard offline policy learning setting \citep{zhou2023offline}.
It is doubly robust in the sense that it is accurate as long as one of nuisance parameter estimates is accurate for each client.
To ensure we can use the same data to estimate the nuisance parameters and to construct the policy value estimates, we utilize a \textit{cross-fitting} strategy locally for each client.
See Appendix \ref{app:CAIPWEstimation} for more details on this cross-fitting strategy.

\subsection{Optimization Objective}\label{sec:OptimizationObjective}

The objective of the central server is to find a policy that maximizes the doubly robust global policy value estimate:
\begin{equation*}
    \smash[t]{\pihatw=\argmax_{\pi\in\Pi}\Qhat_\w(\pi)}.
\end{equation*}
\vspace{-1.5em}

Note that in the centralized setting, this optimization can be done using standard policy optimization techniques \citep{athey2021policy} on the centrally accumulated heterogeneous datasets with appropriate reweighting.
However, as we discussed previously, the centralized collection of datasets can present difficulties in privacy sensitive settings.

For this reason, we seek to provide an optimization procedure that is also amenable to federated settings to overcome these challenges. 
In the federated setting, the central server does not have access to client raw data to estimate local policy values nor does it have access to the local policy values; only model updates can be shared through the network.
In Section \ref{sec:Algorithm}, we discuss our optimization procedure for parametric policy classes that manages these constraints.

\section{Regret Bounds}\label{sec:RegretBounds}

In this section, we establish regret bounds for the global policy solution $\pihatw$ to the optimization objective above. Refer to Appendices \ref{app:AuxiliaryResults}, \ref{app:PolicyClassComplexityMeasures}, \ref{app:BoundingGlobalRegret}, \ref{app:BoundingLocalRegret} for a detailed discussion and proofs of the results in this section.

\vspace{-.5em}
\subsection{Complexity and Skewness}\label{sec:ComplexityAndSkewness}

First, we introduce important quantities that appear in our regret bounds.

\vspace{-.75em}
\paragraph{Policy Class Complexity}
The following quantity provides a measure of policy class complexity based on a variation of the classical entropy integral introduced by \citet{dudley1967sizes}, and it is useful in establishing a class-dependent regret bound. See Appendix \ref{app:HammingDistance} for more details on its definition.

\begin{definition}[Entropy integral]
Let
\begin{math}
    \Ham(\pi_1,\pi_2;\tildex)\coloneqq\frac{1}{\tilden}\sum_{i=1}^{\tilden}\ones\{\pi_1(\tilde{x}_i)\neq\pi_2(\tilde{x}_i)\}
\end{math}
be the Hamming distance between any two policies $\pi_1,\pi_2\in\Pi$ given a covariate set $\tildex\subset\calX$ of size $\tilden\in\N$.
The \textit{entropy integral} of a policy class $\Pi$ is
\vspace{-.25em}
\begin{equation*}
    \kappa(\Pi)\coloneqq\int_0^1\sqrt{\log N_{\Ham}(\epsilon^2,\Pi)}d\epsilon,
\end{equation*}
\vspace{-1.25em}

where $N_{\Ham}(\epsilon^2,\Pi)$ is the maximal $\epsilon^2$-covering number of $\Pi$ under the Hamming distance over covariate sets of arbitrary size.
\end{definition}
The entropy integral is constant for a fixed policy class, and rather weak assumptions on the class are sufficient to ensure it is finite such as sub-exponential growth on its Hamming covering number, which is satisfied by many policy classes including parametric and finite-depth tree policy classes \citep{zhou2023offline}. In the binary action setting, the entropy integral of a policy class relates to its VC-dimension with $\kappa(\Pi)=\sqrt{\smash[b]{\VC(\Pi)}}$, and for $D$-dimensional linear classes $\kappa(\Pi)=\calO(\sqrt{\smash[b]{D}}\,)$.

\vspace{-.75em}
\paragraph{Client Skewness}
The following quantity measures the imbalance of the client sampling distribution $\w$ relative to the \textit{empirical distribution of samples
across clients} defined by $\barn\coloneqq(\ni/n)_{c\in\calC}$.
This quantity naturally arises in the generalization bounds of weighted mixture distributions \citep{mansour2021theory}.

\begin{definition}[Skewness]
    The \textit{skewness} of a given client sampling distribution $\w$ is
    \begin{equation*}
        \skewness\coloneqq\opsE_{c\sim\w}\bra{\frac{\wi}{\barni}}=1+\chi^2(\w||\barn),
    \end{equation*}
    where $\barn\coloneqq(\ni/n)_{c\in\calC}$ and $\chi^2(\w||\barn)$ is the chi-squared divergence of $\w$ from $\barn$.
\end{definition}

\subsection{Global Regret Bound}

The following result captures a root-$n$ finite-sample bound for the global regret that parallels the optimal regret bounds typically seen in the offline policy learning literature.
In Appendix \ref{app:ProofSketch}, we provide a high-level sketch of the proof.

\begin{restatable}[Global Regret Bound]{theorem}{MainTheorem}\label{thm:MainTheorem}
    Suppose Assumption \ref{ass:dgp}, \ref{ass:LocalDataSizeScaling}, and \ref{ass:FiniteSampleError} hold.
    Then, with probability at least $1-\delta$,
    \begin{equation*}
        \Rw(\pihatw)\le C_{\Pi,\delta}\cdot\sqrt{V\cdot\frac{\skewness}{n}} + \littleop{\!\sqrtfracskewnessn},
    \end{equation*}
    where
    \begin{gather*}
        V=\max\limits_{c\in\calC}\sup\limits_{\pi\in\Pi}\E_{\calDcec}\!\bigbra{\Gi(\pi(\Xi))^2}, \\
        C_{\Pi,\delta}=c_1\kappa(\Pi)+\sqrt{c_2\log(c_2/\delta)},
    \end{gather*}
    and $c_1,c_2$ are universal constants.
\end{restatable}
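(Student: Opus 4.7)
The plan is to follow the standard empirical-risk-minimization regret template, adapted to the multi-source weighted setting, while tracking carefully how the weights $\wi$ and sample sizes $\nc$ combine into the skewness factor $\skewness$. First, optimality of $\pihatw$ for $\Qhatw$ together with the standard add-and-subtract trick yields $\Rw(\pihatw) \le 2 \sup_{\pi\in\Pi} \abs{\Qhatw(\pi) - \Qw(\pi)}$. Introducing the oracle estimator $\Qtildew(\pi) \coloneqq \sumiM \fracwini \sumjni \Gij(\pi(\Xij))$ built from the true nuisances, the task splits into controlling the nuisance-induced error $\sup_\pi \abs{\Qhatw - \Qtildew}$ and the oracle empirical-process deviation $\sup_\pi \abs{\Qtildew - \Qw}$.

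For the nuisance error, the doubly robust algebra for AIPW scores gives that $\Ghatij - \Gij$ decomposes (conditional on the cross-fit fold used to fit $\muhati$ and $\ohati$) into a conditionally mean-zero piece and a cross-product piece. Cauchy-Schwarz combined with Assumption \ref{ass:FiniteSampleError} bounds the cross-product term by a client-level contribution of $\littleop{1/\sqrt{\nc}}$; aggregating across clients with weights $\wi$ and using the identity $\sumiM \wi^2/\nc = \skewness/n$ produces $\sup_\pi \abs{\Qhatw - \Qtildew} = \littleop{\sqrt{\skewness/n}}$. Uniformity over $\Pi$ is handled by a maximal inequality over a Hamming cover of $\Pi$, whose logarithmic cardinality is controlled by $\kappa(\Pi)$, since the residual $\pi$-dependence enters only through the action indicator $\ones\{\pi(\Xij)=\Aij\}$.

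For the oracle empirical-process deviation, $\Qtildew(\pi) - \Qw(\pi)$ is a weighted average of $n$ independent mean-zero summands in which the $(c,i)$-th observation carries weight $\wi/\nc$. A Talagrand-type concentration inequality for the supremum gives a high-probability deviation of order $\sqrt{c_2\, V\, \skewness\, \log(c_2/\delta)/n}$ above its expectation, with $V$ appearing as the uniform second-moment bound on the per-observation AIPW scores. The expected supremum is in turn bounded by symmetrization with independent Rademacher signs per observation followed by Dudley's entropy-integral chaining over the score class $\calF_\Pi = \{(c,i)\mapsto \Gij(\pi(\Xij)) : \pi\in\Pi\}$, yielding $c_1 \kappa(\Pi)\sqrt{V\, \skewness/n}$: the effective $L_2$-radius of the weighted empirical process is $\sqrt{V\, \skewness/n}$, and Hamming $\epsilon$-covers of $\Pi$ translate into $L_2$-covers of $\calF_\Pi$ at scale $\epsilon\sqrt{V\, \skewness/n}$ in the weighted empirical measure.

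The main obstacle is the weighted chaining step. Dudley's integral must be applied to an empirical process whose observations carry heterogeneous weights $\wi/\nc$, and one must verify that the distribution-free Hamming covering number of $\Pi$ correctly controls $L_2$-covers of the AIPW score class $\calF_\Pi$ under the weighted empirical measure $\sumiM (\wi/\nc) \sumjni \delta_{(\Xij,\Aij,\Yij)}$, with the skewness factor entering precisely through the variance normalization $\sumiM \wi^2/\nc = \skewness/n$. This is where Assumption \ref{ass:LocalDataSizeScaling} is invoked, to ensure the Talagrand-type lower-order remainder is absorbed into $\littleop{\sqrt{\skewness/n}}$ uniformly as $n$ grows. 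Combining the three pieces and absorbing universal constants into $c_1,c_2$ yields the stated bound.
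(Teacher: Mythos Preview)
Your proposal is correct and follows essentially the same architecture as the paper: decompose into oracle deviation plus nuisance error, bound the oracle piece by symmetrization, weighted Dudley chaining to $\kappa(\Pi)$, and Talagrand concentration, and bound the nuisance piece via the doubly robust three-term decomposition with cross-fitting. The paper differs only in working with policy-value \emph{differences} $\Deltaw(\pia,\pib)$ throughout rather than single values $\Qw(\pi)$, and, for the conditionally mean-zero nuisance pieces, it does not invoke a separate maximal inequality over a Hamming cover but instead reapplies the full oracle-regret bound (Proposition~\ref{prop:OracleRegretBound}) to those pieces directly---since they are themselves weighted empirical processes whose variance scales with the vanishing nuisance error, this immediately yields the uniform $\littleop{\sqrt{\skewness/n}}$ rate.
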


First, note that $V$ captures a notion of the worst-case AIPW score variance across clients.
Next, we observe that root-$n$ rate is moderated by a skewness term which can also scale with the total sample size.
For example, if $\w=\barn$ then $\skewness/n=1/n$, and if $\w=(1,0,\dots,0)$ then $\skewness/n=1/n_1$. Thus, this skewness-moderated rate generalizes and smoothly interpolates between the rates one expects from the uniform weighted model and the single source model. Indeed, when clients are identical and $\w=\barn$, we recover the best known rates from standard offline policy learning \citep{zhou2023offline}.

From this observation, it may seem that the best design choice for the client sampling distribution is the empirical sample distribution, i.e., $\w=\barn$. However, as we will observe in the next section, there are terms in the local regret bounds that introduce trade-offs on the choice of $\w$ when considering a specific target client.

\subsection{Local Regret Bound}
In this next result, we capture the discrepancy in local and global regret due to client heterogeneity.
This result is helpful in understanding the extent at which the global and local regret minimization objectives can be in conflict for a particular target client.

\begin{restatable}[Local Regret Bound]{theorem}{SubMainTheorem}\label{thm:SubMainTheorem}
    Suppose Assumption \ref{ass:dgp} holds. Then, for any client $c\in\calC$,
    \begin{equation*}
        \Ri(\pihatw)\le U\cdot \TV(\calDcec,\calDwew) + \Rw(\pihatw),
    \end{equation*}
    where $\TV$ is the total variation distance and $U=3B/\eta$ with $B=\maxC B_c$ and $\eta=\minC\eta_c$.
\end{restatable}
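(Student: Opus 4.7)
The plan is to decompose $\Ri(\pihatw)$ by adding and subtracting global policy values, isolating an exact copy of $\Rw(\pihatw)$ plus two local-versus-global discrepancy terms that can be controlled by total variation. Let $\pistari\in\argmax_{\pi\in\Pi}\Qi(\pi)$. Then
\begin{align*}
\Ri(\pihatw)
&= \Qi(\pistari) - \Qi(\pihatw) \\
&= \bigl[\Qi(\pistari)-\Qw(\pistari)\bigr]
   + \bigl[\Qw(\pistari)-\Qw(\pihatw)\bigr]
   + \bigl[\Qw(\pihatw)-\Qi(\pihatw)\bigr].
\end{align*}
Since $\pistari\in\Pi$, the middle bracket is at most $\maxPi \Qw(\pi)-\Qw(\pihatw)=\Rw(\pihatw)$. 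The first and third brackets each have the form $\pm[\Qi(\pi)-\Qw(\pi)]$ at some $\pi\in\Pi$, so the task reduces to a uniform-in-$\pi$ bound on $|\Qi(\pi)-\Qw(\pi)|$ in terms of $\TV(\calDcec,\calDwew)$.

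For that discrepancy bound, the key observation is that both policy values admit a common-integrand representation on the extended joint space $\calX\times\calA\times\calY^d$. The integrand $f(x,a,y_1,\dots,y_d):=y_{\pi(x)}$ depends only on the $(x,y_1,\dots,y_d)$ coordinates, so $\Qi(\pi)=\E_{\calDc}[Y(\pi(X))]=\int f\,d\calDcec$, the second equality holding because $\calDc$ is the marginal of $\calDcec$. Linearity in the mixture $\calDwew=\sumiM\wi\calDcec$ then gives $\Qw(\pi)=\int f\,d\calDwew$. The boundedness assumption yields $\|f\|_\infty\le B$, so by the standard variational characterization of total variation for signed measures,
\[
\bigabs{\Qi(\pi)-\Qw(\pi)}
= \Bigabs{\int f\,(d\calDcec - d\calDwew)}
\le 2B\cdot\TV(\calDcec,\calDwew).
\]

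Putting the two steps together gives $\Ri(\pihatw)\le\Rw(\pihatw)+4B\cdot\TV(\calDcec,\calDwew)$. To match the stated constant $U=3B/\eta$, note that the overlap assumption with $|\calA|\ge 2$ forces $\eta_c\le 1/|\calA|\le 1/2$ for every $c$, hence $\eta\le 1/2$ and $4B\le 6B\le 3B/\eta = U$ (the degenerate case $|\calA|=1$ makes all policies trivial so the inequality holds vacuously). The argument is largely routine; the main conceptual step is spotting the right additive decomposition so that (i) the optimality of $\pistarw$ collapses one bracket to $\Rw(\pihatw)$, and (ii) each remaining bracket reduces to integrating the \emph{same} bounded function $Y(\pi(X))$ against the two data-generating distributions, making the $L^\infty$-$L^1$ duality bound immediate. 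Were one to try to carry the argument through the AIPW score $\Gi$ instead, the global value would not have a matching single-integrand representation over $\calDwew$, which is what would really be the only delicate point to avoid.
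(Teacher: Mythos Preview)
Your proof is correct and shares the paper's additive decomposition, arriving at $\Ri(\pihatw)\le 2\supPi|\Qi(\pi)-\Qw(\pi)|+\Rw(\pihatw)$ in the same way. The difference is in how the discrepancy $|\Qi(\pi)-\Qw(\pi)|$ is converted into a TV bound. The paper represents both policy values as expectations of the AIPW score $\Gi(\pi(\Xi))$, which is uniformly bounded by $U=3B/\eta$, and invokes the integral-probability-metric form of total variation at scale $U$. You instead use the raw potential-outcome integrand $f(Z)=Y(\pi(X))$, bounded only by $B$, obtaining the tighter intermediate constant $4B$ and then relaxing to $U$ via $\eta\le 1/2$. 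Your route is more elementary and gives a sharper inequality before the final relaxation. Your closing remark is also on point: because $\Gi$ depends on the client-specific nuisances $(\mui,\oi)$, it is not a single well-defined function on the sample space of $\calDwew$ once the client label has been marginalized out, so treating $Q(T(\cdot);\pi)$ as a common integrand for both $\calDcec$ and $\calDwew$ (as the paper does) requires some care; your choice of $f$ sidesteps that issue entirely.
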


Note that the first term in this regret bound is inherently irreducible relative to the sample sizes and it is due to distribution shift between the complete local client distribution $\calDcec$ and the complete global mixture distribution $\calDwew$.
Thus, we can observe how the design choice on the client distribution $\w$ must balance a trade-off to achieve low skewness and low expected distribution shift across sources. Skewing towards the target client will reduce the distribution shift term, but it will further eat at the rates in the global regret bound.
In our experiments, we take a heuristic approach to obtain a skewed $\w$, but we note that this may be chosen in a more principled manner, e.g., minimax skewness as in \citep{mohri2019agnostic}.
Observe that the constant $U$ in the distribution shift term is defined by the constants in the boundedness and overlap assumptions stated in Assumption \ref{ass:dgp}.
However, in Appendix \ref{app:AlternativeLocalRegretBound}, we provide an alternate upper bound that does not rely on bounded AIPW scores and instead is scaled by the worst-case AIPW variance, which can be smaller and also appears in our global regret bound.

Next, we demonstrate how the distribution shift term can be further tensorized into contributions due to distribution shift in the covariates, propensities, and potential outcomes.

\begin{restatable}[Local Distribution Shift Bound]{theorem}{SubSubMainTheorem}\label{thm:SubSubMainTheorem}
    For any given client $c\in\calC$, suppose $\smash[tb]{(\Xi,\vecYi)\sim\calDc}$. We let $\smash[tb]{p_{\Xi}}$ denote the marginal distribution of $\Xi$ and let $\smash[tb]{p_{\vecYi|\Xi}}$ denote the conditional distribution of $\smash[tb]{\vecYi}$ given $\Xi$. Recall $e_c$ denotes the local conditional propensity function.
    Then, the irreducible distribution shift term in the local regret bound can be further bounded as
    \begin{equation*}
    \begin{split}
        \TV(\calDcec,\calDwew)\le\opsE_{k\sim\w}\Bigbra{&\sqrt{\smash[b]{\KL(p_{\Xi}||p_{\Xk})}}+\sqrt{\smash[b]{\KL(\ei||\ek)}} \\
        &+\sqrt{\smash[b]{\KL(p_{\vec{Y}^c|\Xi}||p_{\vec{Y}^k|\Xk})}}\,},
    \end{split}
    \end{equation*}
    where $\TV$ is the total variation distance and $\KL$ is the Kullback-Leibler divergence.\footnote{Note that the last two terms in the expectation of this inequality are conditional KL divergences on $p_{\Xi}$. See Appendix \ref{app:DistributionShiftBound} for more details.}
\end{restatable}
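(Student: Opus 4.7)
The plan is to reduce the joint TV distance to a sum of marginal/conditional TV distances via a chain-rule decomposition, then convert each piece to KL using Pinsker's inequality, and finally push expectations under the square root using Jensen. The three main ingredients are (i) convexity of TV with respect to mixtures, (ii) the chain rule for TV combined with local unconfoundedness (Assumption \ref{ass:dgp}(b)), and (iii) Pinsker's inequality plus Jensen's inequality applied to $\sqrt{\cdot}$.

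First I would reduce to the pairwise case. Since $\calDwew=\sumiM\wi\calDkek$ and TV is convex in each argument, the triangle inequality gives
\begin{equation*}
    \TV(\calDcec,\calDwew)=\TV\Bigpar{\calDcec,\sum_{k\in\calC}\w_k\calDkek}\le\sum_{k\in\calC}\w_k\TV(\calDcec,\calDkek)=\opsE_{k\sim\w}\bra{\TV(\calDcec,\calDkek)}.
\end{equation*}
So it suffices to bound $\TV(\calDcec,\calDkek)$ uniformly in $k$.

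Next I would apply the chain rule for TV to the factorization induced by the complete data-generating process. Writing a generic density of $\calDcec$ as $p_{\Xi}(x)\,e_c(a\mid x)\,p_{\vec{Y}^c\mid\Xi}(y\mid x,a)$, and similarly for $k$, the standard one-step chain rule inequality $\TV(P_{XY},Q_{XY})\le\TV(P_X,Q_X)+\E_{P_X}[\TV(P_{Y\mid X},Q_{Y\mid X})]$ applied twice yields
\begin{equation*}
    \TV(\calDcec,\calDkek)\le\TV(p_{\Xi},p_{\Xk})+\E_{p_{\Xi}}\bigbra{\TV(e_c(\cdot\mid\Xi),e_k(\cdot\mid\Xi))}+\E_{p_{\Xi},e_c}\bigbra{\TV(p_{\vec{Y}^c\mid\Xi,\Ai},p_{\vec{Y}^k\mid\Xk,\Ak})}.
\end{equation*}
By the unconfoundedness assumption (Assumption \ref{ass:dgp}(b)), the conditional outcome law does not depend on the action, so the last expectation reduces to $\E_{p_{\Xi}}[\TV(p_{\vec{Y}^c\mid\Xi},p_{\vec{Y}^k\mid\Xk})]$.

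Finally I would convert each TV to $\sqrt{\KL}$ via Pinsker's inequality $\TV(P,Q)\le\sqrt{\KL(P\|Q)}$, and then move the expectation over $\Xi$ inside the square root by Jensen's inequality applied to the concave function $\sqrt{\cdot}$:
\begin{equation*}
    \E_{p_{\Xi}}\bigbra{\sqrt{\KL(e_c(\cdot\mid\Xi)\|e_k(\cdot\mid\Xi))}}\le\sqrt{\E_{p_{\Xi}}\bra{\KL(e_c(\cdot\mid\Xi)\|e_k(\cdot\mid\Xi))}}=\sqrt{\KL(e_c\|e_k)},
\end{equation*}
where the last equality is the definition of the conditional KL divergence (analogously for the outcome term). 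Combining the three pieces with the mixture reduction from the first step yields the claimed bound. The only real subtlety is ensuring that unconfoundedness is invoked cleanly when decomposing the action and outcome factors, so that the outcome conditional is genuinely over $\vec{Y}\mid\Xi$ rather than over $\vec{Y}\mid(\Xi,\Ai)$; everything else is bookkeeping on top of standard tools.
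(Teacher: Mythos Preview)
Your proof is correct and reaches the same bound, but it inverts the order of the two key steps relative to the paper. The paper first applies Pinsker's inequality to the \emph{joint} pairwise TV, obtaining $\TV(\calDcec,\calDkek)\le\sqrt{\KL(\calDcec\|\calDkek)}$, then uses the chain rule for KL divergence (together with unconfoundedness) to write $\KL(\calDcec\|\calDkek)$ as an exact sum of the three conditional KL terms, and finally invokes $\sqrt{a+b+c}\le\sqrt{a}+\sqrt{b}+\sqrt{c}$. You instead apply a chain rule at the TV level first, then Pinsker pointwise on each conditional piece, then Jensen to push the expectation over $\Xi$ inside the square root. Both routes are valid; the paper's order yields a slightly sharper intermediate bound ($\sqrt{\sum\KL}$ before splitting) and relies only on the KL chain rule, which is an equality, whereas your route avoids that final subadditivity step but uses the less common TV chain-rule inequality and needs Jensen. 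The handling of unconfoundedness is identical in substance.
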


This result directly reveals the contribution to local regret from each possible source of distribution shift. This implies that if we have prior knowledge that certain components of the data-generating distribution match, then we can claim tighter bounds on the local regret of clients.

In summary, the analysis we presented in this section help provide insights into
understanding the tradeoffs and value of information in heterogeneous client participation in offline policy learning
(see Appendix \ref{app:ValueOfInformation} for further discussion).

\section{Algorithm}\label{sec:Algorithm}

We present FedOPL, a federated algorithm for finding the optimal global policy $\pihatw$ for the optimization problem stated in Section \ref{sec:OptimizationObjective}.
The standard approaches in federated learning are based on the federated averaging (FedAvg) algorithm \citep{konevcny2016federated,mcmahan2017communication}, which works iteratively by selecting clients to participate in the training round, fine-tuning a local parametric model instance on each client using their own data, and then aggregating local model parameters on the central server via a weighted average.
To enable standard federated learning strategies for policy optimization, we consider parametric policy classes $\Pi_\Theta=\{\pi_\theta:\calX\to\calA\mid\theta\in\Theta\}$, and we construct an iterative parametric policy optimization procedure for the local policy updates.

Specifically, we observe that the corresponding local policy optimization procedures
\begin{math}
    \argmax_{\theta\in\Theta}\Qhati(\pi_\theta)
\end{math}
are equivalent to cost-sensitive multi-class classification (CSMC) problems \citep{beygelzimer2009error, dudik2011efficient}, where the actions are the labels and the AIPW scores are the negative costs for the labels. Therefore, we can conduct iterative local policy model updates using online CSMC oracle methods \citep{beygelzimer2008machine, agarwal2017effective}, often used for policy learning in contextual bandit algorithms \citep{agarwal2014taming, bietti2021contextual}.

See Algorithms \ref{alg:FedAvgCSMC_Server} and \ref{alg:FedAvgCSMC_Client} for the server-side and client-side implementations of Fed-OPL based on online CSMC local model updates.
Note that prior to policy learning, each client must estimate their local AIPW scores, as described in Section \ref{sec:Approach-PolicyValueEstimators}. For data efficiency in constructing these scores, we use a cross-fitting strategy on local observational data to estimate local nuisance parameters. See Algorithm \ref{alg:CAIPW} in Appendix \ref{app:CAIPWEstimation} for more details on the implementation of this AIPW estimation procedure.

\vspace{-.75em}
\paragraph{Remark on Optimality Gap}
Note that obtaining the optimal global policy is necessary for achieving the stated regret bounds above.
FedOPL is guaranteed to converge to the optimal policy if the optimization problem is concave.
However, this only holds for certain policy classes such as linear policy classes.
Nevertheless, we can still establish useful regret bounds for nearly optimal policies under general policy classes. The regret bound is simply modified to include an additive term that captures the policy value optimality gap. In particular, if $\tilde\pi_\w$ is the approximate global policy, then we can easily extend the global regret bound to be
\begin{math}
    \Rw(\tilde\pi_\w)\le \Delta_\w(\pihatw,\tilde\pi_\w) + O_p(\sqrt{\skewness/n}),
\end{math}
where the asymptotic term is the exact bound established above and the first term $\Delta_\w(\pihatw,\tilde\pi_\w)=\Qw(\pihatw)-\Qw(\tilde\pi_\w)$ is the policy value gap between the optimal global policy $\pihatw$ and the approximate global policy $\tilde\pi_\w$.
Under a judicious choice of policy class and corresponding optimization procedure, the value gap can be rendered insignificant or of a similar order of magnitude as other terms in the regret bounds,
especially under
heterogeneous environments where there is significant irreducible distribution shift.

\begin{figure}
    \vspace{-1.1em}
        \begin{algorithm}[H]
        \caption{FedOPL: Server-Side}
        \label{alg:FedAvgCSMC_Server}
        \begin{algorithmic}[1]
        \REQUIRE clients $\calC$, client distribution $\w$
        \vspace{2pt}
        \STATE Initialize global model parameters $\theta_g$
        \FOR{each round $t=1,2,\dots$}
            \STATE Sample a subset of clients $\calS\subset\calC$
            \FOR{each client $c\in\calS$ \textbf{in parallel}}
                \STATE Send global parameters $\theta_g$ to client $c$
                \STATE Await local updates $\theta_c$ from client $c$
            \ENDFOR
            \STATE Update global parameters: \\ 
            \hspace{1em} $\theta_g \leftarrow \sum_{c\in\calS}\wi \theta_c/\sum_{c\in\calS}\wi$
        \ENDFOR
        \end{algorithmic}
        \end{algorithm}
        \vspace{-2em}
        \begin{algorithm}[H]
        \caption{FedOPL: Client-Side}
        \label{alg:FedAvgCSMC_Client}
        \begin{algorithmic}[1]
        \REQUIRE local steps $T$, local batch size $B$,  \\
        \vspace{1pt}
        local data $\{(\Xij,\Ghatij(a_1),\dots,\Ghatij(a_d))\}_{i=1}^{\nc}$
        \vspace{3pt}
        \STATE Receive global parameters $\theta_g$ from server
        \STATE Initialize local parameters $\theta_c\leftarrow\theta_g$
        \vspace{1pt}
        \FOR{$t=1,\dots,T$}
            \STATE $\calB \leftarrow$ sample a batch of $B$ local examples
            \STATE Update local parameters using online CSMC oracle: \\
            \hspace{1em}$ \theta_c \leftarrow \text{CSMC}(\theta_c, \calB)$
        \ENDFOR
        \vspace{1pt}
        \STATE Send local parameters $\theta_c$ to server
        \end{algorithmic}
        \end{algorithm}
    \vspace{-2.5em}
\end{figure}

\vspace{-.15em}
\section{Experiments}\label{sec:Experiments}

For our experiments, we compare empirical local and global regrets across different experimental settings involving homogeneous and heterogeneous clients. 

\subsection{Setup}
We describe the experimental setup common to all of our experiments.
For the environment, we consider the client set $\calC=[C]$ with $C=3$, the action set $\calA=\{a_1,\dots,a_d\}$ with $d=4$, and the context space $\calX=\R^p$ with $p=d\times q$ where $q=10$.
For every client $c\in\calC$, we consider the following data-generating process:
\vspace{-.35em}
\begin{compactitem}
    \item $\Ai\sim\Uniform(\calA)$,
    \item $\Xi\sim\Normal(0,\sigma_c^2 I_p)$,
    \item $\Yi(a)|\Xi\sim\Normal(\mui(\Xi;a),\rho_c^2)$ for all $a\in\calA$,
\end{compactitem}
\vspace{-.35em}
where the noise parameters $\sigma_c, \rho_c$ and the true conditional reward functions $\mui$ are specified in each experiment below.
For a given total sample size $n\in\N$, each client $c\in\calC$ is allocated a local sample sample size determined by an increasing function $\nc=\nuc(n)$, which we specify in each experiment below.
By construction, this entire data-generating process satisfies our data assumptions stated in Section \ref{sec:Preliminaries-Data}.

For the parametric policy class, we consider the policy class induced by linear scores: $\pi_\theta(x)=\argmax_{a\in\calA}\phi(x,a)^\top \theta_a$ where $\theta\in\Theta=\R^{d\times q}$ and $\phi:\calX\times\calA\to\R^q$ is a feature extractor, which we simply set to $\phi(x,a)=x_a$.
Then, for the local CSMC oracle, we employ the cost-sensitive one-against-all (CSOAA) implementation in Vowpal Wabbit \citep{vowpalwabbit}. This method performs online multiple regressions of costs (i.e., negative rewards) on contexts for each action, and at inference time, it selects the action with the lowest predicted cost \citep{agarwal2017effective}.

For our results, we consider a training sample size grid in the range up to $N$ samples, where $N=1$K for the homogeneous experiments and $N=10$K for the heteregeneous experiments (longer range due to slower convergence). For each sample size $n$ in this grid, we sample $\nc=\nuc(n)$ training samples from each client distribution defined above, and we train a global policy $\pihatw$ to optimize the corresponding global policy value estimate.
We will then compute and plot the empirical estimates of the global regret $\Rw(\pihatw)$ and local regret $\Ri(\pihatw)$ for a fixed client $c$.
Our plots will display the mean and the standard deviation bands of these empirical regrets over five different seed runs.
We compute these metrics on 10K separate test samples from each client. The empirical regrets are computed against the optimal models trained on 100K total separate training samples.

\subsection{Baselines}

For comparison against pure local training, we will also plot the local regret $\Ri(\pihat_c)$ of the locally trained policy $\pihat_c$. The locally trained policy will be trained with the same number $n$ of total sample sizes across all clients.
This will help us understand the benefits of federation compared to simply training a policy locally.

We will also compare against the HierOPO algorithm for multi-task offline policy learning introduced by \citet{hong2023multi}.
HierOPO is a hierarchical Bayesian approach where task similarity is captured using an unknown latent parameter.
Although this approach can be applied to our setting, we note significant limitations.
This approach assumes the reward model is known and, without access to an informative prior on the latent task parameter, the regret bounds only scale with the local sample size.
Furthermore, this algorithm is less well-suited for federation since it relies on aggregation of summary statistics across tasks rather than model parameters, thereby increasing communication complexity for higher-dimensional problems and increasing privacy risks.
In our experiments, we will let $\pihat_H$ denote the policy trained under the HierOPO algorithm.

\subsection{Homogeneous Clients}

For sanity check, we consider the homogeneous setting where all clients are identical. We should expect the local and global regrets to be the same.
For every client $c\in\calC$, we set
$\mui(x;a)=\phi(x,a)^\top\theta_a$
where $\theta\sim\Normal(0,\omega^2 I_p)$ with $\omega^2=1$ and $\sigma_c^2=\rho_c^2=1$.
Moreover, we simply divide the total sample size across all clients, i.e., $\nu_c(n)=\lfloor n/C\rfloor$.
Since all clients are the same, we simply use the empirical mixture $\w=\barn$ in our optimization procedure.
Figure \ref{fig:hom} displays our results.
As expected, each of these curves is nearly identical since the global and local regrets are identical in this scenario.

\begin{figure}[ht]
    \centering
    \begin{minipage}{.84\linewidth}
        \centering
        \includegraphics[trim=5pt 0pt 50pt 23pt, clip, width=\linewidth]{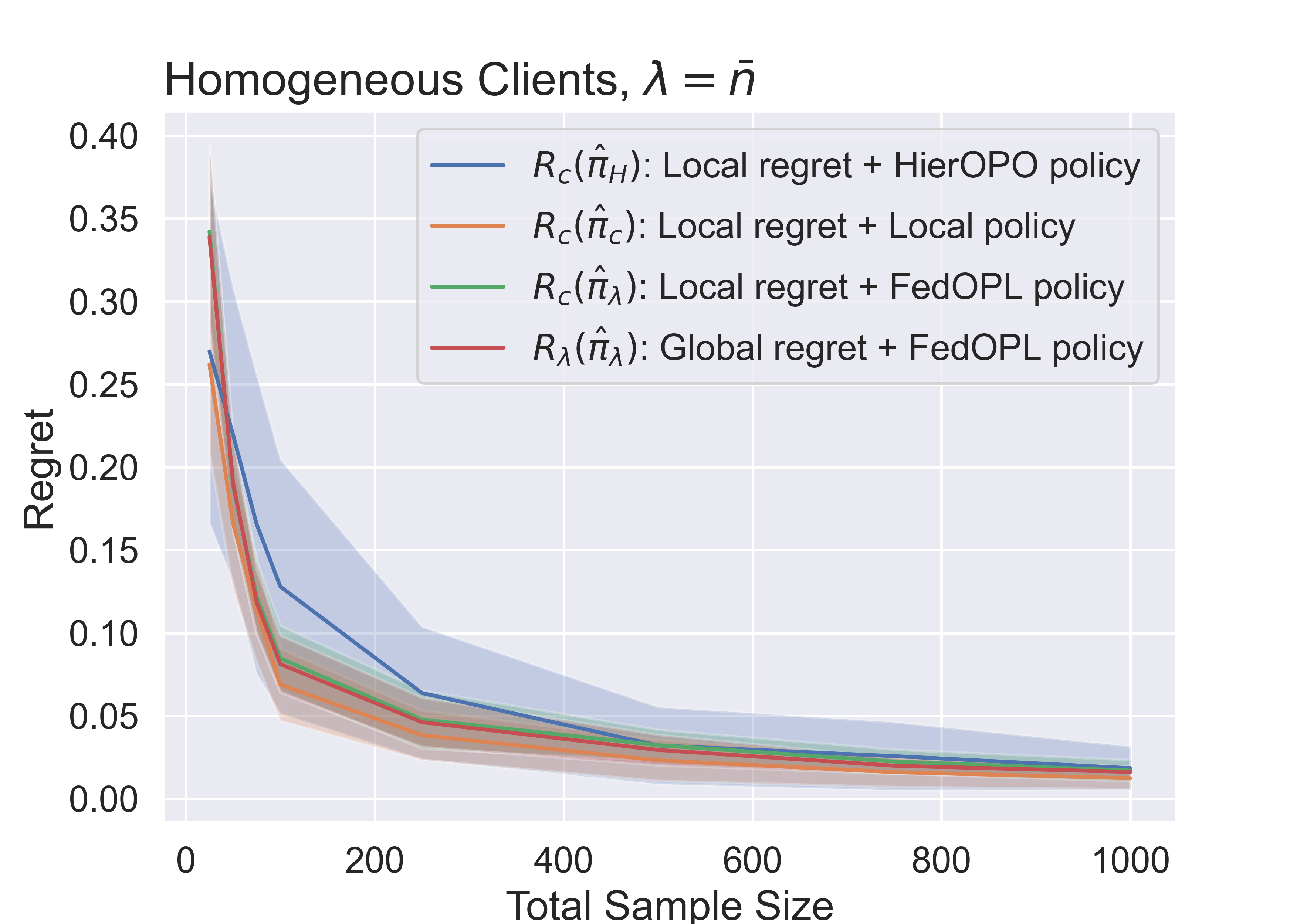}
    \end{minipage}
    \vspace{-.5em}
    \caption{Empirical regret curves under homogeneous clients. All local regrets shown are shown for client $c=1$.}
    \label{fig:hom}
    \vspace{-1.25em}
\end{figure}

\subsection{Heterogeneous Clients}
\vspace{-.25em}

Next, we consider the more interesting setting where one client is different than all other clients. Specifically, we set
\begin{compactitem}
    \item $\mu_c(x,a)\!=\!\phi(x,a)^\top\theta_a$ and $\sigma_c^2=\rho_c^2=5$ for $c\neq 1$,
    \item $\mu_1(x;a)\!=\!k\sin(\phi(x,a)^\top\theta_a/k)$ and $\sigma_1^2=\rho_1^2=10$,
\end{compactitem}
where $k=50$ and $\theta\sim\Normal(0,\omega^2I_p)$ with $\omega^2=5$.
The above hyperparameters were simply chosen to better highlight the differences between regret curves, but we found our results to be robust to different hyperparameter choices.
Under this data-generating process, beyond the distribution shift in the contexts and rewards due to different variances, there is also distribution shift due to different mean rewards.
The idea behind this choice of reward function is that since the scaled sine function is nearly linear near zero, there is a wide range of contexts where the reward function for client $c=1$ is nearly identical to the linear reward function of all other clients.
Therefore, there is distribution shift between client $c=1$ and all other clients, but there is some amount of similarity that can be exploited.
Moreover, to illustrate the benefits of federation under sample size heterogeneity, we will also have client $c=1$ contribute significantly less data than the others with $n_1=\nu_1(n)=\lfloor\log n\rfloor$ and all other clients will evenly distribute the rest of the total sample size.

\vspace{-.75em}
\paragraph{Empirical Client Sampling Distribution}
The top figure in Figure \ref{fig:side_by_side} plots the same type of regret curves as in the homogeneous experiment. In FedOPL, we use the empirical client sampling distribution $\w=\barn$. We observe that
$R_1(\pihatw)$
significantly suffers from distribution shift, as predicted. In fact, the locally trained policy
$\pihat_1$
performs better than the globally trained policy
$\pihatw$
at a sufficiently large sample size.
Note that the HierOPO policy only outperforms the locally trained model at low sample sizes due to worse scaling.

\vspace{-.75em}
\paragraph{Skewed Client Sampling Distribution}
The bottom figure in Figure \ref{fig:side_by_side} plots similar regret curves, but instead with the global policy trained with a skewed client sampling distribution $\w=\barn+\bar{\varepsilon}$ where $\bar{\varepsilon}_c=-\alpha\cdot\barn_c$ for $c\neq 1$ and $\bar{\varepsilon}_1=\alpha\cdot(1-\barn_1)$ for $\alpha=0.2$. 
Here, we observe that
$\pihatw$
still suffers some amount in terms of local regret, but not to such an extent that
$\pihat_1$
beats it. Moreover, the local regret shift decreases with larger sample size. The idea is that this skewness upscales the distribution of client 1 to diminish the amount of distribution shift of $\bar{\calD}_1$ from
$\calDwew$
especially at larger total sample sizes, at the cost of negatively affecting the other more homogeneous clients.
Refer to the results in Appendix \ref{app:AdditionalExperimentalResults} to see how the more similar clients are affected by this design choice modification to favor client 1. The takeaway in those experiments is that the other clients have less distribution shift from the average so their performance degradation is lesser under $\w=\barn$, but their performance further slightly degrades under the skewed client sampling distribution $\w=\barn+\bar{\varepsilon}$.

\begin{figure}[ht]
    \centering
    \begin{minipage}{.825\linewidth}
        \centering\includegraphics[trim=8pt 0pt 50pt 23pt, clip, width=\linewidth]{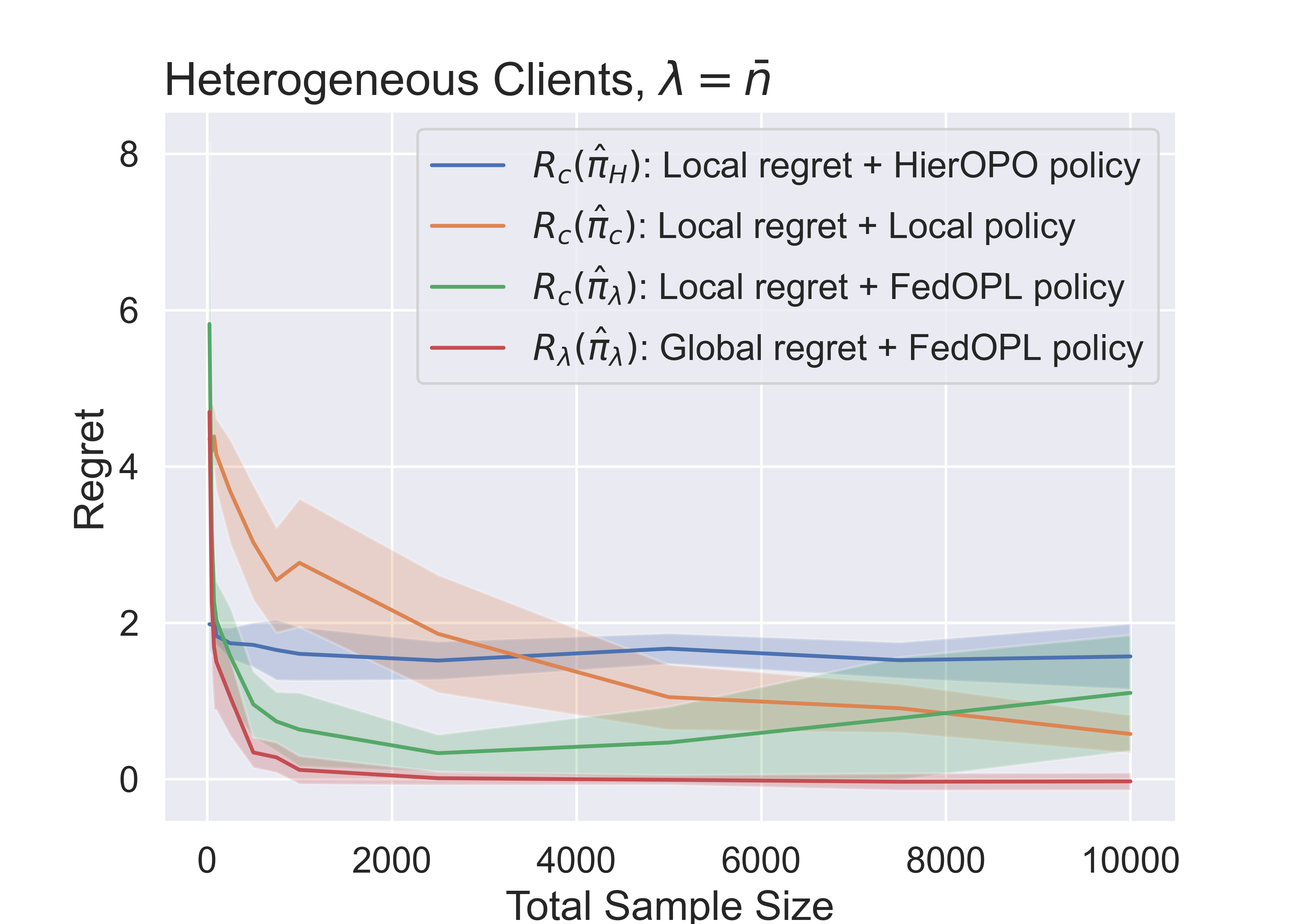}
        \label{fig:image2}
        \vspace{-.5em}
    \end{minipage}
    \begin{minipage}{.825\linewidth}
        \centering
        \includegraphics[trim=8pt 0pt 50pt 23pt, clip, width=\linewidth]{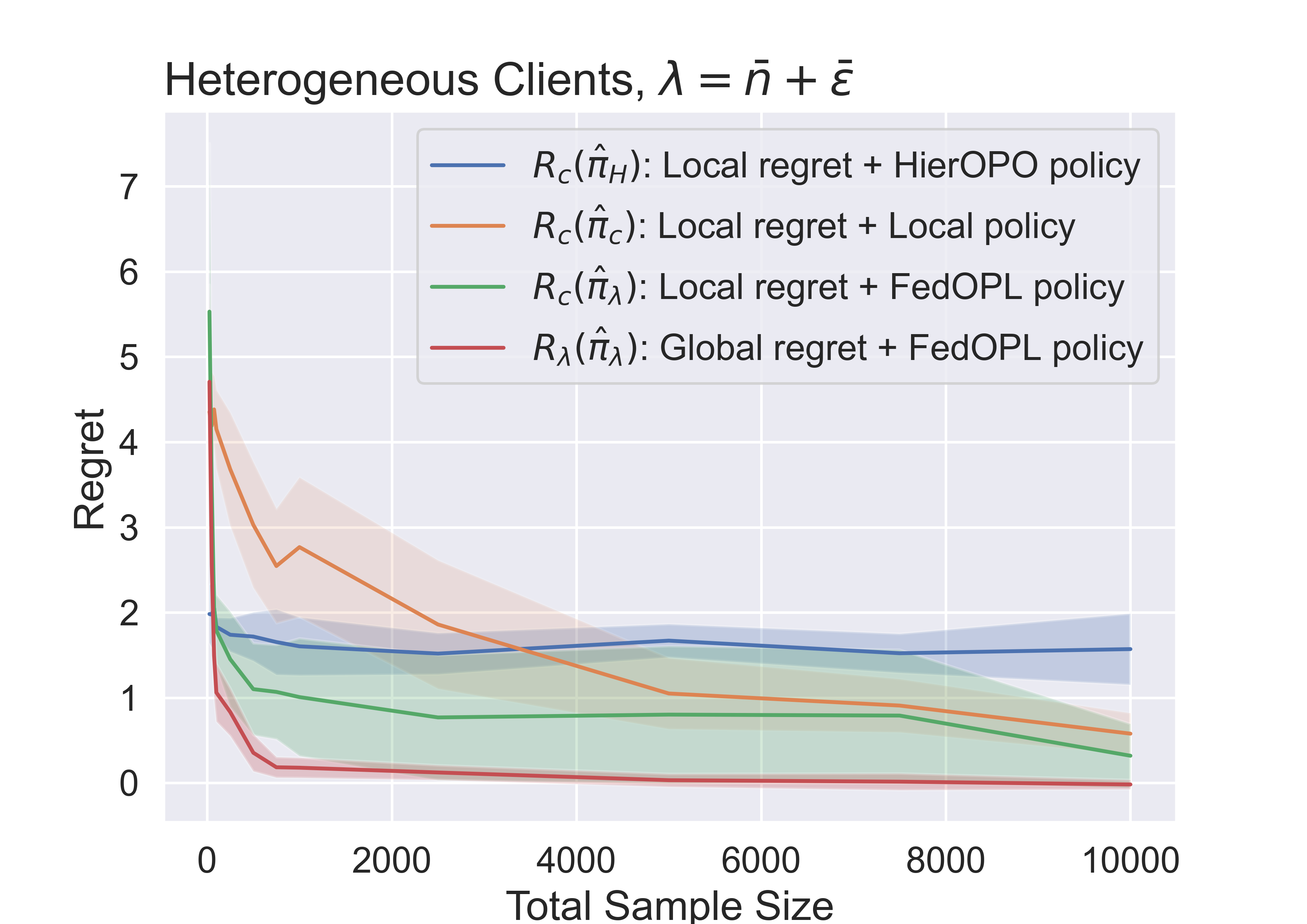}
        \label{fig:image3}
    \end{minipage}
    \vspace{-1.5em}
    \caption{Empirical regret curves under heterogeneous clients. Top: $\lambda=\barn$; Bottom: $\lambda=\barn+\bar{\varepsilon}$. All local regrets shown are shown client $c=1$.}
    \label{fig:side_by_side}
    \vspace{-1.5em}
\end{figure}

\section{Conclusion}
We studied the problem of offline policy learning from observational bandit feedback data across multiple heterogeneous data sources.
Moreover, we considered the practical aspects of this problem in a federated setting to address privacy concerns.
We presented a novel regret analysis and supporting experimental results that demonstrate tradeoffs of client heterogeneity on policy performance.
In Appendix \ref{app:AdditionalDiscussion}, we provide additional discussion on the broader impact, limitations, and potential future directions to our work.

\bibliography{references}
\bibliographystyle{icml2024}

\newpage
\appendix
\onecolumn
\section{Auxiliary Results}\label{app:AuxiliaryResults}

The following known results will be used in our regret bound proofs. See Chapter 2 of \cite{koltchinskii2011oracle} for discussions of these results.

\begin{lemma}[Hoeffding's inequality]\label{lem:HoeffdingInequality}
    Let $Z_1,\dots,Z_n$ be independent random variables with $Z_i\in[a_i,b_i]$ almost surely. For all $t>0$, the following inequality holds
    \begin{equation*}
        \P\rbra{\biggabs{\sum_{i=1}^nZ_i-\E\bra{Z_i}}\ge t}\le 2\exp\rbra{-\frac{2t^2}{\sum_{i=1}^n(b_i-a_i)^2}}.
    \end{equation*}
\end{lemma}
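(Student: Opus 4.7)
The plan is to prove this via the classical Chernoff/Cramér method: exponentiate, apply Markov's inequality, exploit independence to factorize the moment generating function, bound each factor using Hoeffding's lemma, and finally optimize the free parameter in the exponent. First I would center the variables by defining $X_i = Z_i - \E[Z_i]$, so that $\E[X_i] = 0$ and $X_i \in [a_i - \E[Z_i],\, b_i - \E[Z_i]]$, an interval of the same length $b_i - a_i$. Write $S_n = \sum_{i=1}^n X_i$ and focus first on the one-sided tail $\P(S_n \ge t)$.

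For any $\lambda > 0$, Markov's inequality applied to the non-negative random variable $e^{\lambda S_n}$ together with independence of the $X_i$ gives
\begin{equation*}
\P(S_n \ge t) \;=\; \P\rbra{e^{\lambda S_n} \ge e^{\lambda t}} \;\le\; e^{-\lambda t}\,\E\bra{e^{\lambda S_n}} \;=\; e^{-\lambda t}\prod_{i=1}^n \E\bra{e^{\lambda X_i}}.
\end{equation*}
The crux of the argument, and the step I expect to be the main obstacle, is controlling each factor $\E[e^{\lambda X_i}]$ purely in terms of the interval length $b_i-a_i$. This is Hoeffding's lemma: for any mean-zero random variable $X$ supported in $[\alpha,\beta]$, one has $\E[e^{\lambda X}] \le \exp(\lambda^2(\beta-\alpha)^2/8)$.

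To establish Hoeffding's lemma I would use convexity of $x \mapsto e^{\lambda x}$: for every $x \in [\alpha,\beta]$,
\begin{equation*}
e^{\lambda x} \;\le\; \frac{\beta - x}{\beta - \alpha}\, e^{\lambda \alpha} + \frac{x - \alpha}{\beta - \alpha}\, e^{\lambda \beta}.
\end{equation*}
Taking expectations and using $\E[X] = 0$ yields $\E[e^{\lambda X}] \le \frac{\beta}{\beta-\alpha} e^{\lambda\alpha} - \frac{\alpha}{\beta-\alpha}e^{\lambda\beta}$. Introducing the substitutions $u = \lambda(\beta-\alpha)$ and $p = -\alpha/(\beta-\alpha) \in [0,1]$, the right side equals $e^{\phi(u)}$ with $\phi(u) = -pu + \log(1 - p + p\,e^{u})$. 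A direct computation shows $\phi(0) = \phi'(0) = 0$, and $\phi''(u)$ is the variance of a Bernoulli random variable, hence bounded by $1/4$. A Taylor expansion then gives $\phi(u) \le u^2/8$, which is exactly the claimed MGF bound upon re-substituting.

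Plugging the lemma back into the Chernoff bound with $\alpha = a_i - \E[Z_i]$ and $\beta = b_i - \E[Z_i]$,
\begin{equation*}
\P(S_n \ge t) \;\le\; \exp\rbra{-\lambda t + \frac{\lambda^2}{8}\sum_{i=1}^n (b_i - a_i)^2}.
\end{equation*}
Optimizing over $\lambda > 0$ by choosing $\lambda = 4t/\sum_{i=1}^n (b_i - a_i)^2$ produces the one-sided bound $\P(S_n \ge t) \le \exp(-2t^2/\sum_{i=1}^n (b_i-a_i)^2)$. Finally, applying the identical argument to $-X_i$ (which lies in an interval of the same length and also has zero mean) yields the matching bound on $\P(S_n \le -t)$, and a union bound over the two tails introduces the factor of $2$, completing the proof.
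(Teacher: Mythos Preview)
Your proof is correct and is the standard textbook argument for Hoeffding's inequality. However, the paper does not actually prove this lemma: it is listed among the ``Auxiliary Results'' in the appendix as a known fact, with a pointer to Chapter~2 of \cite{koltchinskii2011oracle} for discussion. So there is nothing to compare against; you have supplied a complete proof where the paper simply cites the literature.
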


\begin{lemma}[Talagrand's inequality]\label{lem:TalagrandInequality}
    Let $Z_1,\dots,Z_n$ be independent random variables in $\calZ$. For any class of real-valued functions $\calH$ on $\calZ$ that is uniformly bounded by a constant $U>0$ and for all $t>0$, the following inequality holds
    \begin{equation*}
        \P\rbra{\biggabs{\supH\Bigabs{\sum_{i=1}^n h(Z_i)} - \E\Bigbra{\supH\Bigabs{\sum_{i=1}^n h(Z_i)}}}\ge t}\le C\exp\rbra{-\frac{t}{CU}\log\rbra{1+\frac{Ut}{D}}},
    \end{equation*}
    where $C$ is a universal constant and $D\ge\E\bra{\supH\sum_{i=1}^n h^2(Z_i)}$.
\end{lemma}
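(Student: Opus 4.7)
The plan is to prove this via the entropy method, which is the standard modern route to Talagrand-type concentration inequalities. Denote $Z = \sup_{h\in\calH}|\sum_{i=1}^n h(Z_i)|$; by replacing $\calH$ with $\calH \cup (-\calH)$ the outer absolute value can be removed at no cost, so I will work with the supremum directly. Assume for simplicity the supremum is attained (a standard approximation and measurability argument handles the general case). The goal is to control the moment generating function $\varphi(\lambda) = \E[\exp(\lambda(Z - \E Z))]$ for $\lambda$ in a suitable range and then apply Markov's inequality to both $Z - \E Z$ and $-(Z - \E Z)$ for the two-sided deviation bound.

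First I would establish the key modified log-Sobolev inequality. Define the leave-one-out version $Z^{(i)} = \sup_{h\in\calH} |\sum_{j\neq i} h(Z_j)|$. Tensorizing entropy across the independent coordinates and applying Bousquet's refinement of the Ledoux--Massart entropy inequality yields
\[
\mathrm{Ent}(e^{\lambda Z}) \leq \sum_{i=1}^n \E\bigl[e^{\lambda Z}\,\psi\bigl(-\lambda(Z - Z^{(i)})\bigr)\bigr], \qquad \psi(x) = e^x - x - 1.
\]
Combined with the bounded differences $0 \leq Z - Z^{(i)} \leq U$ and a symmetrization-plus-contraction step that replaces the Efron--Stein proxy $\sum_i \E[(Z - Z^{(i)})^2]$ by $\E[\sup_{h\in\calH}\sum_i h^2(Z_i)] \leq D$ (using Rademacher symmetrization of $Z$ followed by the Ledoux--Talagrand contraction principle applied to $t\mapsto t^2/U$), this entropy bound converts into a Herbst-type differential inequality for $\log\varphi(\lambda)$.

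Integrating that differential inequality gives a bound of the form $\log \varphi(\lambda) \leq (D/U^2)\,\psi(c_1 \lambda U)$ up to universal constants, valid for $\lambda \geq 0$. A Chernoff bound $\P(Z - \E Z \geq t) \leq e^{-\lambda t}\varphi(\lambda)$ followed by Legendre--Fenchel optimization over $\lambda$ converts this into the Bennett-type rate $\exp(-(t/CU)\log(1 + Ut/D))$ stated in the lemma; the optimizer is $\lambda^* = U^{-1}\log(1 + Ut/D)$, which produces exactly the $\log$ factor in the exponent. Running the same argument on $-Z$ handles the lower tail, and adjusting the universal constant $C$ delivers the two-sided statement.

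The main obstacle will be the tensorization-plus-contraction step: passing from the Efron--Stein weak variance $\sum_i \E[(Z - Z^{(i)})^2]$ to the weak variance proxy $D \geq \E[\sup_{h\in\calH} \sum_i h^2(Z_i)]$ requires a careful Rademacher symmetrization combined with the Ledoux--Talagrand contraction principle, and the constants must be tracked precisely so that the exponent exhibits the Bennett form $t\log(1 + Ut/D)$ rather than the weaker Bernstein form $\min(t^2/D, t/U)$. Obtaining the sharp Bennett form in particular requires Bousquet's entropy inequality rather than Massart's earlier version, which is why one typically cites the cleaned-up treatment in Chapter 2 of \cite{koltchinskii2011oracle} rather than proving this from scratch.
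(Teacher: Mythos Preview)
The paper does not prove this lemma at all: it is stated in Appendix~A as one of three ``known results'' and the reader is referred to Chapter~2 of \cite{koltchinskii2011oracle} for a proof. So there is no paper-proof to compare against in the usual sense.

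Your proposal, by contrast, sketches an actual proof via the entropy method (tensorized modified log-Sobolev, Bousquet's refinement, Herbst integration, Chernoff/Legendre--Fenchel optimization). This is precisely the route taken in the Koltchinskii reference the paper cites, so in that sense your approach is the ``same'' as what the paper implicitly relies on --- you are just filling in what the paper outsources. The sketch is broadly correct for the upper tail; the one place I would flag is your claim that symmetrization plus contraction lets you replace the Efron--Stein proxy $\sum_i \E[(Z-Z^{(i)})^2]$ directly by $D\ge\E[\sup_h\sum_i h^2(Z_i)]$. In Bousquet's argument this step is handled differently: one bounds the self-bounding increment by $h^*(Z_i)^2$ for the (random) maximizer $h^*$ and then takes the supremum inside, which gives $\sup_h\sum_i h^2(Z_i)$ pointwise rather than via contraction. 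The contraction route you describe would work but is more circuitous and may cost extra constants. Also, the lower tail for suprema of this type is genuinely harder than the upper tail (Bousquet's inequality is one-sided; the matching lower tail requires Klein--Rio or a separate argument), so ``running the same argument on $-Z$'' glosses over a real asymmetry. None of this matters for the paper's purposes, since it only invokes the lemma as a black box.
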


\begin{lemma}[Ledoux-Talagrand contraction inequality]\label{lem:ContractionInequality}
    Let $Z_1,\dots,Z_n$ be independent random variables in $\calZ$. For any class of real-valued functions $\calH$ on $\calZ$ and any $L$-Lipschitz function $\varphi$, the following inequality holds
    \begin{equation*}
        \E\bra{\supH\biggabs{\sum_{i=1}^n\eps_i(\varphi\circ h)(Z_i)}}\le 2L\E\bra{\supH\biggabs{\sum_{i=1}^n\eps_i h(Z_i)}},
    \end{equation*}
    where $\eps_1,\dots,\eps_n$ are independent Rademacher random variables.
\end{lemma}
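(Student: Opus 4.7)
The plan is to prove this via the classical coordinate-by-coordinate contraction argument, with the factor $2L$ (rather than $L$) arising from the absolute values on both sides and the fact that $\varphi$ need not vanish at $0$.

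First, I would reduce to the signed, centered version $\psi := \varphi - \varphi(0)$, which is $L$-Lipschitz with $\psi(0) = 0$. The identity $\sum_i \eps_i \varphi(h(Z_i)) = \sum_i \eps_i \psi(h(Z_i)) + \varphi(0) \sum_i \eps_i$ together with $\E[\sum_i \eps_i]=0$ shows that $\varphi$ can be replaced by $\psi$ inside an expected \emph{signed} sup. Splitting $|x| \le \max(x,-x) \le x + (-x)^+$ and noting that $-\varphi$ is also $L$-Lipschitz (with centered part $\tilde\psi := -\psi$) gives
\begin{equation*}
    \E\bigbra{\supH \bigabs{\sum_{i=1}^n \eps_i \varphi(h(Z_i))}} \le \E\bigbra{\supH \sum_{i=1}^n \eps_i \psi(h(Z_i))} + \E\bigbra{\supH \sum_{i=1}^n \eps_i \tilde\psi(h(Z_i))}.
\end{equation*}
This reduces the task to proving the signed centered contraction $\E[\supH \sum_i \eps_i \psi(h(Z_i))] \le L\,\E[\supH |\sum_i \eps_i h(Z_i)|]$ for any $L$-Lipschitz $\psi$ with $\psi(0)=0$; summing the two copies then yields the $2L$ factor.

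The core step is a one-coordinate swap. Conditioning on every $\eps_j$ with $j \neq i$ and writing $A(h) := \sum_{j \neq i} \eps_j \psi(h(Z_j))$, a direct computation of the Rademacher expectation gives
\begin{equation*}
    \E_{\eps_i}\bigbra{\supH \bigpar{A(h) + \eps_i \psi(h(Z_i))}} = \tfrac{1}{2}\sup_{h,h'\in\calH} \bigbra{A(h) + A(h') + \psi(h(Z_i)) - \psi(h'(Z_i))}.
\end{equation*}
By the $(h,h')\leftrightarrow(h',h)$ swap symmetry of $A(h)+A(h')$, I may assume the maximizing pair satisfies $\psi(h(Z_i))\ge\psi(h'(Z_i))$, so the $\psi$-difference equals its absolute value and Lipschitzness bounds it by $L|h(Z_i)-h'(Z_i)|$. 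Writing $|u|=\max(u,-u)$ and reusing the same swap symmetry, the remaining sup decouples into $\sup_h (A(h) + L h(Z_i)) + \sup_{h'}(A(h') - L h'(Z_i)) = 2\,\E_{\eps_i}\supH(A(h) + L\eps_i h(Z_i))$. Dividing by $2$ yields the one-coordinate swap $\E_{\eps_i}\supH(A(h)+\eps_i\psi(h(Z_i))) \le \E_{\eps_i}\supH(A(h)+L\eps_i h(Z_i))$.

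Finally, I would iterate this swap over $i=1,\ldots,n$ via the tower property of expectation: at step $i$, replace $\psi(h(Z_i))$ by $L h(Z_i)$ inside the sup conditional on the remaining $\eps_j$'s. After $n$ steps every $\psi$-term has been replaced, producing the signed centered contraction with constant $L$; combining with the reduction step gives the stated $2L$ constant. The main subtlety, in my view, is the one-coordinate step: both invocations of the $(h,h')\leftrightarrow(h',h)$ symmetry (first to force $\psi(h(Z_i))\ge\psi(h'(Z_i))$ at the maximizer, and then to replace $|h(Z_i)-h'(Z_i)|$ by the signed difference) require that $h$ and $h'$ range over the \emph{same} class, which is why the absorption of $\varphi(0)$ and the $|x|=\max(x,-x)$ reduction must be carried out before the coordinate-wise induction rather than intertwined with it.
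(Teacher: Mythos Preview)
The paper does not prove this lemma at all: it is stated as a known result with a citation to Chapter~2 of Koltchinskii (2011). Your one-coordinate swap argument is exactly the classical Ledoux--Talagrand proof and is carried out correctly; the iteration via the tower property is also fine.

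However, your reduction from the absolute-value version to the signed version has a genuine gap. First, the chain of inequalities ``$|x|\le\max(x,-x)\le x+(-x)^+$'' is simply wrong: for $x=-1$ the right-hand side is $-1+1=0<1=|x|$. Second, and more fundamentally, the lemma \emph{as stated} (with no hypothesis that $\varphi(0)=0$) is false, so no centering trick can rescue it: take $n=1$, $\calH=\{h\}$ with $h\equiv 0$, and $\varphi(t)=t+1$ (which is $1$-Lipschitz); then the left-hand side equals $\E|\eps_1\varphi(0)|=1$ while the right-hand side equals $2\E|\eps_1\cdot 0|=0$. The paper is slightly sloppy here, but its only application is with $\varphi(u)=u^2$, which does satisfy $\varphi(0)=0$.

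The correct route is therefore to \emph{assume} $\varphi(0)=0$ from the start (so your $\psi=\varphi$ and the centering step disappears), and then handle the absolute values by augmenting the class: set $\calH'=\calH\cup\{0\}$. Since $\varphi(0)=0$, the signed sup over $\calH'$ is nonnegative, so $\sup_{\calH}|\,\cdot\,|\le\sup_{\calH'}|\,\cdot\,|=\max\bigl(\sup_{\calH'}(\cdot),\sup_{\calH'}(-\cdot)\bigr)\le\sup_{\calH'}(\cdot)+\sup_{\calH'}(-\cdot)$ is now legitimate. Applying your (correct) signed contraction to $\calH'$ with $\varphi$ and with $-\varphi$ and summing gives the factor $2L$, and the final $\sup_{\calH'}|\sum_i\eps_i h(Z_i)|$ collapses back to $\sup_{\calH}|\sum_i\eps_i h(Z_i)|$ because the zero function contributes $0$.
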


Lastly, we state an auxiliary inequality that serves as a typical candidate for the quantity denoted by $D$ above in Talagrand's inequality. This result follows as a corollary of the Ledoux-Talagrand contraction inequality and a symmetrization argument. We provide a proof for completeness.

\begin{lemma}\label{lem:ExpectedSupSumBound}
    Let $Z_1,\dots,Z_n$ be independent random variables in $\calZ$. For any class of real-valued functions $\calH$ on $\calZ$ and any $L$-Lipschitz function $\varphi$, the following inequality holds
    \begin{equation*}
        \E\bra{\supH\sum_{i=1}^n (\varphi\circ h)(Z_i)}\le\supH\sum_{i=1}^n\E\bigbra{(\varphi\circ h)(Z_i)}+4L\E\bra{\supH\biggabs{\sum_{i=1}^n\eps_ih(Z_i)}},
    \end{equation*}
    where $\eps_1,\dots,\eps_n$ are independent Rademacher random variables.
\end{lemma}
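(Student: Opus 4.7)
The plan is to run the classical symmetrization plus contraction argument. First, I would center the sum by writing
\begin{equation*}
\sum_{i=1}^n (\varphi\circ h)(Z_i) = \sum_{i=1}^n\bigbra{(\varphi\circ h)(Z_i)-\E\bigbra{(\varphi\circ h)(Z_i)}} + \sum_{i=1}^n\E\bigbra{(\varphi\circ h)(Z_i)},
\end{equation*}
and then pull the second (deterministic) sum outside the expectation and supremum via the bound $\sup_h(A(h)+B(h))\le \sup_h A(h)+\sup_h B(h)$. This yields the first term on the right-hand side and reduces the problem to bounding $\E\bra{\supH\sum_{i=1}^n\bigbra{(\varphi\circ h)(Z_i)-\E[(\varphi\circ h)(Z_i)]}}$ by $4L\,\E\bra{\supH\bigabs{\sum_{i=1}^n\eps_i h(Z_i)}}$.

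Next, I would symmetrize in the standard way. Let $Z_1',\dots,Z_n'$ be an independent copy of $Z_1,\dots,Z_n$. Since $\E[(\varphi\circ h)(Z_i)]=\E_{Z'}[(\varphi\circ h)(Z_i')]$, Jensen's inequality (pulling the inner expectation out of the supremum) gives
\begin{equation*}
\E\bra{\supH\sum_{i=1}^n\bigbra{(\varphi\circ h)(Z_i)-\E[(\varphi\circ h)(Z_i)]}}\le \E\bra{\supH\sum_{i=1}^n\bigbra{(\varphi\circ h)(Z_i)-(\varphi\circ h)(Z_i')}}.
\end{equation*}
The summands $(\varphi\circ h)(Z_i)-(\varphi\circ h)(Z_i')$ are symmetric in distribution, so inserting independent Rademacher signs $\eps_i$ does not change the joint law. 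Splitting the resulting sum and using that $Z_i$ and $Z_i'$ have the same distribution bounds the right-hand side above by $2\,\E\bra{\supH\bigabs{\sum_{i=1}^n\eps_i (\varphi\circ h)(Z_i)}}$.

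Finally, I would invoke the Ledoux–Talagrand contraction inequality (Lemma \ref{lem:ContractionInequality}), with the Lipschitz constant $L$ of $\varphi$, to get
\begin{equation*}
\E\bra{\supH\biggabs{\sum_{i=1}^n\eps_i(\varphi\circ h)(Z_i)}}\le 2L\,\E\bra{\supH\biggabs{\sum_{i=1}^n\eps_i h(Z_i)}}.
\end{equation*}
Chaining these three steps produces the desired factor of $4L$ and closes the argument.

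There is no real obstacle here — this is a textbook combination of centering, symmetrization via a ghost sample, and Ledoux–Talagrand contraction. The only minor subtlety is that the contraction inequality as stated is for the expected supremum of absolute values, so one must remember to upper bound the (signed) symmetrized supremum by its absolute-value counterpart before applying contraction, which is precisely what produces the factor $4L=2\cdot 2L$ rather than $2L$.
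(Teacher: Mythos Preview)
Your proposal is correct and follows essentially the same route as the paper: center the sum to isolate the $\supH\sum_i\E[(\varphi\circ h)(Z_i)]$ term, apply the standard ghost-sample symmetrization to pick up a factor of $2$, and then invoke the Ledoux--Talagrand contraction inequality (Lemma~\ref{lem:ContractionInequality}) for the remaining factor of $2L$. The only cosmetic difference is that the paper bounds $\sup_h A(h)-\sup_h B(h)\le\sup_h|A(h)-B(h)|$ directly before symmetrizing, whereas you split $\sup_h(A+B)\le\sup_h A+\sup_h B$ first and take absolute values after symmetrizing; both orderings land on the same $4L$ constant.
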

\begin{proof}
    We have that
    \begin{align}
        &\E\bra{\supH\sumin(\varphi\circ h)(Z_i)}-\supH\sumin\E\bigbra{(\varphi\circ h)(Z_i)} \label{eq:Lemma-ExpectedSupSumBound-eq1} \\
        &=\E\bra{\supH\sumin(\varphi\circ h)(Z_i)-\supH\sumin\E\bigbra{(\varphi\circ h)(Z_i)}} \\
        &\le\E\bra{\supH\biggabs{\sumin(\varphi\circ h)(Z_i)-\sumin\E\bigbra{(\varphi\circ h)(Z_i)}}} \label{eq:Lemma-ExpectedSupSumBound-eq3} \\
        &=\E\bra{\supH\biggabs{\sumin\Bigpar{(\varphi\circ h)(Z_i)-\E\bigbra{(\varphi\circ h)(Z_i)}}}} \\
        &\le2\E\bra{\supH\biggabs{\sumin\eps_i(\varphi\circ h)(Z_i)}} \label{eq:Lemma-ExpectedSupSumBound-eq5} \\
        &\le 4L\E\bra{\supH\biggabs{\sumin\eps_i h(Z_i)}}. \label{eq:Lemma-ExpectedSupSumBound-eq6}
    \end{align}
    Inequality \eqref{eq:Lemma-ExpectedSupSumBound-eq3} follows from the triangle inequality, inequality \eqref{eq:Lemma-ExpectedSupSumBound-eq5} follows from a standard symmetrization argument (see \cite{koltchinskii2011oracle}), and inequality \eqref{eq:Lemma-ExpectedSupSumBound-eq6} follows from the Ledoux-Talagrand contraction inequality (see Lemma \ref{lem:ContractionInequality}). The result follows by moving the second term in Equation \eqref{eq:Lemma-ExpectedSupSumBound-eq1} to the right-hand side in the last inequality.
\end{proof}

\section{Complexity and Heterogeneity Measures}\label{app:PolicyClassComplexityMeasures}

In this section, we introduce important quantities of policy class complexity and client heterogeneity that appear in our analysis. All throughout, we let $n=\sumiM\ni$ be the total sample size across clients, $\nC=(\nc)_{c\in\calC}$ the vector of sample sizes across clients, and $\barn=(\ni/n)_{c\in\calC}$ the empirical distribution over clients. 

\subsection{Policy Class Complexity}

\subsubsection{Hamming Distance \& Entropy Integral}\label{app:HammingDistance}

We provide additional details on the definition of the entropy integral introduces in Section \ref{sec:ComplexityAndSkewness}.

\begin{definition}[Hamming distance, covering number, and entropy integral]\label{def:HammingDistance-CoveringNumber-EntropyIntegral}
    Consider a policy class $\Pi$ and a multi-source covariate set $x=\x\subset\calX$ across clients $\calC$ with client sample sizes $\nC$. We define the following:
    \begin{enumerate}[(a)]
        \item the Hamming distance between any two policies $\pi_1,\pi_2\in\Pi$ given multi-source covariate set $x$  is
        \begin{equation*}
            \Ham(\pi_1,\pi_2;x)\coloneqq\frac{1}{\sumiM\nc}\sumiM\sumjni\ones\{\pi_1(\xij)\neq\pi_2(\xij)\};
        \end{equation*}
        
        \item an $\epsilon$-cover of $\Pi$ under the Hamming distance given covariate set $x$ is any policy set $S$ such that for any $\pi\in\Pi$ there exists some $\pi'\in S$ such that $\Ham(\pi,\pi';x)\le\epsilon$;
    
        \item the $\epsilon$-covering number of $\Pi$ under the Hamming distance given covariate set $x$ is
        \begin{equation*}
            N_\Ham(\epsilon,\Pi;x)\coloneqq\min\{|S|\mid S\in\calS_{\Ham}(\epsilon,\Pi;x)\},
        \end{equation*}
        where $\calS_{\Ham}(\epsilon,\Pi;x)$ is the set of all $\epsilon$-covers of $\Pi$ with respect to $ \Ham(\cdot,\cdot;x)$;
        
        \item the $\epsilon$-covering number of $\Pi$ under the Hamming distance is
        \begin{equation*}
            N_\Ham(\epsilon,\Pi)\coloneqq\sup\{N_\Ham(\epsilon,\Pi;x)\mid x\in\calX_\calC\},
        \end{equation*}
        where $\calX_\calC$ is the set of all covariate sets in $\calX$ across clients $\calC$ with arbitrary sample sizes;
        
        \item the entropy integral of $\Pi$ is
        \begin{equation*}
            \kappa(\Pi)\coloneqq\int_0^1\sqrt{\log N_\Ham(\epsilon^2,\Pi)}d\epsilon.
        \end{equation*}
    \end{enumerate}
\end{definition}

\subsubsection{\texorpdfstring{$\ellw$}{l2} Distance}\label{app:ellw2Distance}
Consider the function class
\begin{equation*}
    \calF_\Pi\coloneqq\{Q(\cdot,\pi):\Omega\to\R\mid\pi\in\Pi\},
\end{equation*}
where
\begin{equation*}
    Q(\omegaij;\pi)\coloneqq\gij(\pi(\xij))
\end{equation*}
for any covariate-score vector $\omegaij=(\xij,\gij)\in\Omega=\calX\times\R^d$ and $\pi\in\Pi$, where $\gij(a)$ is the $a$-th coordinate of the score vector $\gij$.

\begin{definition}[$\ellw$ distance and covering number]
    Consider a policy class $\Pi$, function class $\calF_\Pi$, and a multi-source covariate-score set $\omega=\{\omegaij\mid c\in\calC,i\in[\nc]\}\subset\Omega$ across clients $\calC$ with client sample sizes $\nC$ and client distribution $\w$. We define:
    \begin{enumerate}[(a)]
        \item the $\ellw$ distance with respect to function class $\calF_\Pi$ between any two policies $\pi_1,\pi_2\in\Pi$ given covariate-score set $\omega$ is
        \begin{equation*}
            \ellw(\pi_1,\pi_2;\omega)=\sqrt{\frac{\sumijwsq\bigpar{Q(\omegaij;\pi_1)-Q(\omegaij;\pi_2)}^2}{\sup_{\pi_a,\pi_b\in\Pi}\sumijwsq\bigpar{Q(\omegaij;\pi_a)-Q(\omegaij;\pi_b)}^2}};
        \end{equation*}
        
        \item an $\epsilon$-cover of $\Pi$ under the $\ellw$ distance given covariate-score set $\omega$ is any policy set $S$ such that for any $\pi\in\Pi$ there exists some $\pi'\in S$ such that $\ellw(\pi,\pi';\omega)\le\epsilon$;
        
        \item the $\epsilon$-covering number of $\Pi$ under the $\ellw$ distance given covariate-score set $\omega$ is
        \begin{equation*}
            N_{\ellw}(\epsilon,\Pi;\omega)\coloneqq\min\{|S|\mid S\text{ is an $\epsilon$-cover of $\Pi$ w.r.t.~$\ellw(\cdot,\cdot;\omega)$}\}.
        \end{equation*}
    \end{enumerate}
\end{definition}

\vspace{1em}
The following lemma relates the covering numbers of the two policy distances we have defined.

\begin{lemma}
Let $\omega=\{\omegaij\mid c\in\calC,i\in[\nc]\}\subset\Omega$ be a multi-source covariate-score set across clients $\calC$ with client sample sizes $\nC$ and client distribution $\w$. For any $\epsilon>0$,
\begin{equation*}
    N_{\ellw}(\epsilon,\Pi;\omega)\le N_{\Ham}(\epsilon^2,\Pi).
\end{equation*}
\end{lemma}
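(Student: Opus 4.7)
The plan is to construct a multi-source covariate set $\tilde x \in \calX_\calC$ (depending on $\omega$) such that any $\epsilon^2$-Hamming cover of $\Pi$ with respect to $\tilde x$ automatically serves as an $\epsilon$-$\ellw$ cover of $\Pi$ with respect to $\omega$. Since $N_{\Ham}(\epsilon^2, \Pi)$ is defined as the supremum of $N_{\Ham}(\epsilon^2, \Pi; x')$ over all $x' \in \calX_\calC$, such a construction immediately yields
\begin{equation*}
    N_{\ellw}(\epsilon, \Pi; \omega) \le N_{\Ham}(\epsilon^2, \Pi; \tilde x) \le N_{\Ham}(\epsilon^2, \Pi).
\end{equation*}
Equivalently, it suffices to exhibit $\tilde x$ with $\ellw^2(\pi_1, \pi_2; \omega) \le \Ham(\pi_1, \pi_2; \tilde x)$ for every $\pi_1, \pi_2 \in \Pi$.

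The first step is the pointwise inequality
\begin{equation*}
    \bigpar{Q(\omegaij; \pi_1) - Q(\omegaij; \pi_2)}^2 \le M_{c,i} \cdot \ones\{\pi_1(\xij) \neq \pi_2(\xij)\},
\end{equation*}
where $M_{c,i} \coloneqq \sup_{\pi_a, \pi_b \in \Pi}(Q(\omegaij; \pi_a) - Q(\omegaij; \pi_b))^2$; both sides vanish when the policies agree at $\xij$, and otherwise the left side is dominated by the pointwise supremum. Next, I would construct $\tilde x$ by replicating each covariate $\xij$ with integer multiplicity approximately proportional to $\fracwinisq M_{c,i}$, scaled by a large common factor so integer rounding becomes negligible. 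Summing the pointwise bound against these weights yields, in the limit,
\begin{equation*}
    \sumij\fracwinisq\bigpar{Q(\omegaij; \pi_1) - Q(\omegaij; \pi_2)}^2 \le \Ham(\pi_1, \pi_2; \tilde x) \cdot \sumij\fracwinisq M_{c,i}.
\end{equation*}
Dividing both sides by $L \coloneqq \sup_{\pi_a, \pi_b \in \Pi}\sumij\fracwinisq\bigpar{Q(\omegaij; \pi_a) - Q(\omegaij; \pi_b)}^2$, the denominator of $\ellw^2$, reduces matters to controlling the ratio $\sumij\fracwinisq M_{c,i}/L$.

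The main obstacle, and likely the crux of the argument, is ensuring this ratio is at most $1$: since $L \le \sumij\fracwinisq M_{c,i}$ always, the naive construction above only shows $\ellw^2 \le \Ham \cdot r$ with $r \ge 1$, which is not tight enough. The natural refinement is to instead choose multiplicities proportional to $\fracwinisq \delta_{c,i}^\ast$, where $\delta_{c,i}^\ast \coloneqq (Q(\omegaij; \pi_a^\ast) - Q(\omegaij; \pi_b^\ast))^2$ is the pointwise squared difference realized by a sup-approaching pair $(\pi_a^\ast, \pi_b^\ast)$; this forces the Hamming denominator to equal $L$ exactly, at the cost of requiring a more delicate pointwise inequality that absorbs contributions at indices where the sup-approaching pair agrees while $(\pi_1, \pi_2)$ disagree. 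Handling these residual indices—likely through a careful selection of the sup-approaching pair whose disagreement pattern dominates that of arbitrary $(\pi_1, \pi_2) \in \Pi$, or via a supremum-by-supremum limiting argument—will be the delicate technical step, after which the covering number bound follows directly.
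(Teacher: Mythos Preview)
Your plan correctly identifies the replication-and-limit technique, but insisting on a \emph{single} augmented covariate set $\tilde x$ that works uniformly for all pairs $(\pi_1,\pi_2)$ creates the ratio obstacle you could not close. The refinement via a sup-approaching pair $(\pi_a^*,\pi_b^*)$ will not work in general: at indices where $\pi_a^*(\xij)=\pi_b^*(\xij)$ but $\pi_1(\xij)\neq\pi_2(\xij)$ your chosen multiplicity is zero while the contribution to the $\ellw$ numerator is not, and there is no reason any single pair's disagreement pattern dominates that of every other pair in $\Pi$. The ``delicate technical step'' you flag is not a detail to be filled in; under your uniform-$\tilde x$ constraint it is a genuine obstruction.

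The paper sidesteps this by allowing the augmented set to depend on the \emph{specific pair} being bounded. One takes an $\epsilon^2$-Hamming cover $S_0$ of size $N_{\Ham}(\epsilon^2,\Pi)$; since this cover satisfies the $\epsilon^2$-bound for \emph{every} covariate set, one may fix $\pi\in\Pi$, choose its covering element $\pi'\in S_0$, and only \emph{then} build $\tilde\omega$ by replicating each $\omegaij$ with multiplicity $\bigl\lceil m\cdot\fracwinisq\bigpar{Q(\omegaij;\pi)-Q(\omegaij;\pi')}^2/L\bigr\rceil$. Because this multiplicity vanishes exactly where $\pi$ and $\pi'$ agree, one obtains $\Ham(\pi,\pi';\tilde\omega)\ge\frac{m}{m+n}\,\ellw^2(\pi,\pi';\omega)$ directly, with no ratio to control; combining with $\Ham(\pi,\pi';\tilde\omega)\le\epsilon^2$ and sending $m\to\infty$ finishes the argument. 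The conceptual point you were missing is that a pair-dependent construction is permitted: you never need $\ellw^2\le\Ham$ uniformly over $\Pi\times\Pi$, only for the single pair $(\pi,\pi')$ that the cover hands you.
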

\begin{proof}
    Fix $\epsilon>0$. 
    Without loss of generality, we assume $N_{\Ham}(\epsilon^2,\Pi)<\infty$, otherwise the result trivially holds. Let $S_0=\{\pi_1,\dots,\pi_{N_0}\}$ be a corresponding Hamming $\epsilon^2$-cover of $\Pi$.

    Consider any arbitrary $\pi\in\Pi$. By definition, there exists a $\pi'\in S_0$ such that for any multi-source covariate set $\tildex=\{\tildexij\mid c\in\calC, i\in[\tilden_c]\}$ with any given sample sizes $\tilden_c>0$ the following holds:
    \begin{equation*}
        \Ham(\pi,\pi';\tildex)=\frac{1}{\tilden}\sumiM\sum_{i=1}^{\tilden_c}\ones\{\pi(\tildexij)\neq\pi'(\tildexij)\}\le\epsilon^2,
    \end{equation*}
    where $\tilden=\sumiM\tilden_c$. Using this pair of policies $\pi, \pi'$ we generate an augmented data set $\tilde\omega$ from $\omega$ as follows.
    Let $m$ be a positive integer and define $\tilde\omega$ to be a collection of multiple copies of all covariate-score tuples $\omegaij\in \omega$, where each $\omegaij$ appears
    \begin{equation*}
        \tildenij\coloneqq\left\lceil\frac{m\cdot\fracwinisq\bigpar{Q(\omegaij;\pi)-Q(\omegaij;\pi')}^2}{\sup_{\pi_a,\pi_b}\sumijwsq\bigpar{Q(\omegaij;\pi_a)-Q(\omegaij;\pi_b)}^2}\right\rceil    
    \end{equation*}
    times in $\tilde\omega$. Therefore, the client sample sizes in this augmented data set are $\tilden_c=\sumjni\tildenij$ and the total sample size is $\tilden=\sumij\tildenij$. The total sample size is bounded as
    \begin{align*}
        \tilden&=\sumij\left\lceil\frac{m\cdot\fracwinisq\bigpar{Q(\omegaij;\pi)-Q(\omegaij;\pi')}^2}{\sup_{\pi_a,\pi_b}\sumijwsq\bigpar{Q(\omegaij;\pi_a)-Q(\omegaij;\pi_b)}^2}\right\rceil \\
        &\le\sumij\rbra{\frac{m\cdot\fracwinisq\bigpar{Q(\omegaij;\pi)-Q(\omegaij;\pi')}^2}{\sup_{\pi_a,\pi_b}\sumijwsq\bigpar{Q(\omegaij;\pi_a)-Q(\omegaij;\pi_b)}^2}+1} \\
        &\le\frac{m\cdot\sumijwsq\bigpar{Q(\omegaij;\pi)-Q(\omegaij;\pi')}^2}{\sup_{\pi_a,\pi_b}\sumijwsq\bigpar{Q(\omegaij;\pi_a)-Q(\omegaij;\pi_b)}^2}+n\le m+n.
    \end{align*}
    Then, we have
    \begin{align*}
        \Ham(\pi,\pi';\tilde\omega)&=\frac{1}{\tilde{n}}\sumiM\sum_{i=1}^{\tilden_c}\ones\{\pi(\xij)\neq\pi'(\xij)\} \\
        &=\frac{1}{\tilden}\sumij\tildenij\cdot\ones\{\pi(\xij)\neq\pi'(\xij)\} \\
        &\ge\frac{1}{\tilden}\sumij\frac{m\cdot\fracwinisq\bigpar{Q(\omegaij;\pi)-Q(\omegaij;\pi')}^2}{\sup_{\pi_a,\pi_b}\sumijwsq\bigpar{Q(\omegaij;\pi_a)-Q(\omegaij;\pi_b)}^2}\ones\{\pi(\xij)\neq\pi'(\xij)\} \\
        &=\frac{m}{\tilden}\sumij\frac{\fracwinisq\bigpar{Q(\omegaij;\pi)-Q(\omegaij;\pi')}^2}{\sup_{\pi_a,\pi_b}\sumijwsq\bigpar{Q(\omegaij;\pi_a)-Q(\omegaij;\pi_b)}^2} \\
        &\ge\frac{m}{m+n}\ellw^2(\pi,\pi';\omega).
    \end{align*}
    
    This implies that
    \begin{equation*}
        \ellw(\pi,\pi';\omega)\le\sqrt{\frac{m+n}{m}\Ham(\pi,\pi';\tilde{\omega})}\le\sqrt{1+\frac{n}{m}}\cdot\epsilon.
    \end{equation*}
    Letting $m\to\infty$ yields $\ellw(\pi,\pi';\omega)\le\epsilon$. This establishes that for any $\pi\in\Pi$, there exists a $\pi'\in S_0$ such that $\ellw(\pi,\pi';\omega)\le\epsilon$, and thus $N_{\ellw}(\epsilon,\Pi;\omega)\le N_{\Ham}(\epsilon^2,\Pi)$.
\end{proof}

\subsubsection{Weighted Rademacher complexity}
Our learning bounds will rely on the following notion of \textit{weighted Rademacher complexity} introduced in \cite{mohri2019agnostic}.

\begin{definition}[Weighted Rademacher complexity]
    Suppose there is a set of clients $\calC$, with each client $c\in\calC$ having a data-generating distribution $\calPc$ defined over a space $\Omega$.
    Moreover, the clients have fixed sample sizes $\nC=(n_c)_{c\in\calC}$ and there is a distribution $\w$ over the set of clients $\calC$.
    For each client $c\in\calC$, let
    $W_1^c,\dots,W_{n_c}^c$ be independent random variables sampled from $\calPc$, and let $W=\{W_i^c\mid c\in\calC, i\in[n_c]\}$ represent the collection of samples across all clients.

    The \textit{empirical weighted Rademacher complexity} of a function class $\calF$ on $\Omega$ given multi-source data $W$ under mixture weights $\w$ and sample sizes $\nC$ is
    \begin{align*}
        \frakR_{\w,n_\calC}(\calF;W)\coloneqq\E\bra{\supF\biggabs{\sumijw\epsij f(\Wij)}\ \Big|\ W},
    \end{align*}
    where the expectation is taken with respect to the collection of independent Rademacher random variables $\eps=\{\epsij\mid c\in\calC, i\in[n_c]\}$.
    Additionally, the \textit{weighted Rademacher complexity} of $\calF$ under mixture weights $\w$ and sample sizes $n_\calC$ is
    \begin{align*}
        \frakR_{\w,\nC}(\calF)\coloneqq\E\bra{\supF\biggabs{\sumijw\epsij f(\Wij)}},
    \end{align*}
    where the expectation is taken with respect to the multi-source random variables $W$ and the independent Rademacher random variables $\eps$.
\end{definition}

\subsection{Client Heterogeneity}

\subsubsection{Client Distribution Skewness}\label{app:Preliminaries-ClientDistributionSkewness}

An important quantity that arises in our analysis is
\begin{equation*}
    \sumiM\fracwisqbarni=\opsE_{c\sim\w}\bra{\fracwibarni},
\end{equation*}
which captures a measure of the imbalance of the client distribution $\w$ relative to the empirical client distribution $\barn$. The following result makes this interpretation more clear:
\begin{align*}
    \sumiM\fracwisqbarni&=\sumiM\fracwisqbarni+\sumiM\barni-2\sumiM\w_c+1 \\
    &=\sumiM\rbra{\fracwisqbarni+\frac{\barni^2}{\barni}-\frac{2\wi\barni}{\barni}}+1 \\
    &=\sumiM\frac{(\wi-\barni)^2}{\barni} + 1 \\
    &=\chi^2(\w||\barn)+1.
\end{align*}
where $\chi^2(\w||\barn)$ is the chi-squared divergence from $\barn$ to $\w$.
Following \cite{mohri2019agnostic}, we call this quantity the skewness.
\begin{definition}[Skewness]
    The \textit{skewness} of a given distribution $\w$ over clients is
    \begin{equation*}
        \skewness\coloneqq\opsE_{c\sim\w}\bra{\fracwibarni}=1+\chi^2(\w||\barn),
    \end{equation*}
    where $\chi^2(\w||\barn)$ is the chi-squared divergence of $\w$ from $\barn$.
\end{definition}

\subsubsection{Client Distribution Shift}

In Section \ref{sec:Preliminaries-Data}, we defined $\calDcec$ to be the complete local data-generating distribution of client $c\in\calC$ and $\calDwew=\sumiM\wi\calDcec$ to the complete global data-generating distribution as a mixture of all other complete client data-generating distributions.
As we will observe in the local regret bounds, client heterogeneity will be captured by the distribution shift of the local distributions to the global distribution. In particular, the distribution shift of a local distribution $\calDcec$ from the global distribution $\calDwew$ will be captured by their total variation distance $\TV(\calDcec,\calDwew)$ and also their KL divergence $\KL(\calDcec||\calDwew)$. We will also introduce an alternate bound that will capture the distribution shift with their chi-squared divergence $\chi^2(\calDcec||\calDwew)$. See Appendix \ref{app:BoundingLocalRegret} for these results.

\section{Bounding Global Regret}\label{app:BoundingGlobalRegret}

\subsection{Preliminaries}

\subsubsection{Function Classes}
As mentioned in Appendix \ref{app:ellw2Distance}, the function class we will be considering in our analysis is
\begin{equation}\label{eq:FunctionClass}
    \calF_\Pi\coloneqq\{Q(\cdot;\pi):\Omega\to\R\mid\pi\in\Pi\},
\end{equation}
where
\begin{equation}\label{eq:QFunction}
    Q(\omega_i^c;\pi)\coloneqq\gij(\pi(\xij)),
\end{equation}
for any covariate-score vector $\omegaij=(\xij,\gij)\in\Omega=\calX\times\R^d$ and $\pi\in\Pi$, where $\gij(a)$ is the $a$-th coordinate of the score vector $\gij$. It will also be useful to consider the Minkowski difference of $\calF_\Pi$ with itself,
\begin{equation}\label{eq:DeltaFunctionClass}
    \Delta\calF_\Pi\coloneqq\calF_\Pi-\calF_\Pi=\{\Delta(\cdot;\pia,\pib):\Omega\to\R\mid\pia,\pib\in\Pi\},
\end{equation}
where
\begin{equation}\label{eq:DeltaQFunction}
    \Delta(\omegaij;\pia,\pib)\coloneqq Q(\omegaij;\pia)-Q(\omegaij;\pib)=\gij(\pia(\xij))-\gij(\pib(\xij)),
\end{equation}
for any $\omegaij=(\xij,\gij)\in\Omega$ and $\pia,\pib\in\Pi$.

\subsubsection{Policy Value Estimators}\label{app:Preliminaries-PolicyValueEstimators}

\paragraph{Augmented Inverse Propensity Weighted Scores}
We use propensity-weighted scores to estimate policy values. For any $c\in\calC$, consider the available observable samples $(\Xij,\Aij,\Yij)$ taken from the partially observable counterfactual sample $\Zij=(\Xij,\Aij,\Yij(a_1),\dots,\Yij(a_d))\sim\calDcec$, for $i\in[\nc]$. As discussed in Section \ref{sec:Approach-PolicyValueEstimators}, using this data, we considered construction of the oracle local AIPW scores
\begin{equation*}
    \Gij(a)=\mui(\Xij;a)+\bigpar{\Yij(\Aij)-\mui(\Xij;a)}\oi(\Xij;a)\ones\{\Aij=a\}
\end{equation*}
for each $a\in\calA$. Similarly, we discussed the construction of the approximate local AIPW scores
\begin{equation*}
    \Ghatij(a)=\muhati(\Xij;a)+\bigpar{\Yij(\Aij)-\muhati(\Xij;a)}\ohat(\Xij;a)\ones\{\Aij=a\}
\end{equation*}
for each $a\in\calA$, given fixed estimates $\muhati$ and $\ohati$ of $\mui$ and $\oi$, respectively. In practice, we use cross-fitting to make the estimates fixed and independent relative to the data on which they are evaluated. Note that only this second set of scores can be constructed from the observed data. The first set is ``constructed" for analytic purposes in our proofs.

\paragraph{Policy Value Estimates and Policy Value Difference Estimates}

Using the local data and the constructed AIPW scores, we let $\Wij=(\Xij,\Gij(a_1),\dots,\Gij(a_d))$ and $\Whatij=(\Xij,\Ghatij(a_1),\dots,\Ghatij(a_d))$ for each $i\in[\nc]$. We define the oracle and approximate \textit{policy value} estimates of $\Qw(\pi)$, respectively, as
\begin{align*}
    \Qtildew(\pi)&=\sumijw\Gij(\pi(\Xij))=\sumijw Q(\Wij;\pi), \\
    \Qhatw(\pi)&=\sumijw\Ghatij(\pi(\Xij))=\sumijw Q(\Whatij;\pi),
\end{align*}
for any $\pi\in\Pi$, where we use the function class defined in Equation \ref{eq:QFunction} for the alternate representations that we will use throughout our proofs for notational convenience. It will also be very useful to define the following corresponding \textit{policy value difference} quantities:
\begin{align*}
    \Deltaw(\pia,\pib)&\coloneqq\Qw(\pia)-\Qw(\pib), \\
    \Deltatildew(\pia,\pib)&\coloneqq\Qtildew(\pia)-\Qtildew(\pib)=\sumijw\Delta(\Wij;\pia,\pib), \\
    \Deltahatw(\pia,\pib)&\coloneqq\Qhatw(\pia)-\Qhatw(\pib)=\sumijw\Delta(\Whatij;\pia,\pib),
\end{align*}
for any $\pia,\pib\in\Pi$, where we use the function class defined in Equation \eqref{eq:DeltaQFunction} for the alternate representations that we will use throughout our proofs for notational convenience.

\paragraph{Unbiased Estimates}
The following result can be used to readily show that the oracle estimators for the local and global policy values are unbiased.
\begin{lemma}\label{lem:ExpectedOracleEqualsLocalPolicyValue}
    Suppose Assumption \ref{ass:dgp} holds.
    For any $\pi\in\Pi$,
    \begin{equation*}
        \opsE_{\Zi\sim\calDcec}\regbra{\Gi(\pi(\Xi))}=\Qi(\pi)
    \end{equation*}
    and
    \begin{equation*}
        \opsE_{c\sim\w}\opsE_{\Zi\sim\calDcec}[\Gi(\pi(\Xi))]=\Qw(\pi),
    \end{equation*}
    where $\Zi=(\Xi,\Ai,\Yi(a_1),\dots,\Yi(a_d))\sim\calDcec$.
\end{lemma}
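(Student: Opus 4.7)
The plan is to prove the first equality by conditioning on $\Xi$ and invoking the unconfoundedness assumption, and then to obtain the second equality as an immediate consequence of the first via the tower property and the definition of $\Qw$.

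For the first statement, fix a client $c \in \calC$ and a policy $\pi \in \Pi$. I will split $\Gi(\pi(\Xi))$ into the direct term $\mui(\Xi;\pi(\Xi))$ and a centered, inverse-propensity-weighted correction term $(\Yi(\pi(\Xi)) - \mui(\Xi;\pi(\Xi)))\cdot\oi(\Xi;\pi(\Xi))\cdot\ones\{\Ai = \pi(\Xi)\}$. Conditioning on $\Xi = x$ and setting $a^* = \pi(x)$, the direct term is deterministic and equals $\mui(x;a^*)$. For the correction term, I will apply Assumption \ref{ass:dgp}(b) (unconfoundedness), which gives $\Yi(a^*) \indep \Ai \mid \Xi$, to factor the conditional expectation as $\oi(x;a^*)\cdot \E[\Yi(a^*) - \mui(x;a^*) \mid \Xi = x] \cdot \P(\Ai = a^* \mid \Xi = x)$. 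The middle factor is zero by the very definition of $\mui$, so the correction term has conditional mean zero. Thus $\E[\Gi(\pi(\Xi)) \mid \Xi = x] = \mui(x;\pi(x))$, and taking expectations over $\Xi$ under the marginal of $\calDcec$ (which coincides with that of $\calDc$ on $\calX \times \calY^d$) yields $\E[\Gi(\pi(\Xi))] = \E[\mui(\Xi;\pi(\Xi))] = \E[\Yi(\pi(\Xi))] = \Qi(\pi)$.

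For the second statement, I apply Fubini/tower over the client sampling distribution and substitute the first equality, obtaining
\begin{equation*}
    \opsE_{c\sim\w}\opsE_{\Zi\sim\calDcec}[\Gi(\pi(\Xi))] = \opsE_{c\sim\w}[\Qi(\pi)] = \sumiM \wi \Qi(\pi) = \Qw(\pi),
\end{equation*}
where the final equality is just the definition of $\Qw$ as the $\w$-mixture of the local policy values.

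There is no substantial technical obstacle here: this is the standard doubly robust identity applied client by client, together with linearity over the mixture $\calDwew = \sumiM \wi \calDcec$. The only point requiring care is a correct application of unconfoundedness to the product $(\Yi(a^*) - \mui(x;a^*))\ones\{\Ai = a^*\}$ so that the action indicator and the centered potential outcome decouple; once this factorization is in hand, the vanishing of the correction term and the final identity follow directly from the definitions of $\mui$, $\oi$, and $\Qw$.
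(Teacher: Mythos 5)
Your proposal is correct and follows essentially the same route as the paper's proof: condition on the context, use unconfoundedness to decouple the centered potential outcome from the action indicator so the correction term has zero conditional mean (the paper equivalently cancels $\oi\ei=1$ before noting the centered term vanishes), and then pass to the mixture over $c\sim\w$ by linearity. The minor presentational difference—fixing $a^*=\pi(x)$ directly rather than summing over $a\in\calA$ with $\ones\{\pi(\Xi)=a\}$—does not change the argument.
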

\begin{proof}
    First, observe that for any $a\in\calA$,
    \begin{align*}
        \Gi(a)&=\mui(\Xi;a)+\rbra{\Yi(\Ai)-\mui(\Xi;a)}\oi(\Xi;a)\ones\{\Ai=a\} \\
        &=\mui(\Xi;a)+\rbra{\Yi(a)-\mui(\Xi;a)}\oi(\Xi;a)\ones\{\Ai=a\}
    \end{align*}
    due to the indicator in the definition,
    and so for any $(\Xi,\Ai,\Yi(a_1),\dots,\Yi(a_d))\sim\calDcec$,
    \begin{align*}
        \E_{\Ai,\vecYi}\bra{\Gi(a)\mid\Xi}&=\mui(\Xi;a)+\E_{\Ai,\vecYi}\bra{\rbra{\Yi(a)-\mui(\Xi;a)}\oi(\Xi;a)\ones\{\Ai=a\}\mid\Xi}  \\
        &=\mui(\Xi;a)+\E_{\vecYi}\bra{\Yi(a)-\mui(\Xi;a)\mid\Xi}\cdot\E_{\Ai}\bra{\oi(\Xi;a)\ones\{\Ai=a\}\mid\Xi} \\
        &=\mui(\Xi;a)+\E_{\vecYi}\bra{\Yi(a)-\mui(\Xi;a)\mid\Xi}\cdot\oi(\Xi;a)\ei(\Xi;a) \\
        &=\mui(\Xi;a)+\E_{\vecYi}\bra{\Yi(a)\mid\Xi}-\mui(\Xi;a) \\
        &=\E_{\vecYi}\bra{\Yi(a)\mid\Xi}.
    \end{align*}
    The second equality follows from the unconfoundedness assumption stated in Assumption \ref{ass:dgp}. This immediately implies that
    \begin{align*}
        \opsE_{\Zi\sim\calDcec}\regbra{\Gi(\pi(\Xi))}
        &=\opsE_{\Zi\sim\calDcec}\bra{\sumaA\ones\{\pi(\Xi)=a\}\Gi(a)} \\
        &=\E_{\Xi}\bra{\sumaA\ones\{\pi(\Xi)=a\}\E_{\Ai,\vecYi}\bra{\Gi(a)\mid\Xi}} \\
        &=\E_{\Xi}\bra{\sumaA\ones\{\pi(\Xi)=a\}\E_{\Ai,\vecYi}\bra{\Yi(a)\mid\Xi}} \\
        &=\E_{\Xi}\bra{\E_{\Ai,\vecYi}\bra{\Yi(\pi(\Xi))\mid\Xi}} \\
        &=\opsE_{\Zi\sim\calDcec}\bra{\Yi(\pi(\Xi)} \\
        &=\Qi(\pi),
    \end{align*}
    and
    \begin{align*}
        \opsE_{c\sim\w}\opsE_{\Zi\sim\calDcec}\regbra{\Gi(\pi(\Xi))}=\opsE_{c\sim\w}\bra{\Qi(\pi)}=\Qw(\pi).
    \end{align*}
\end{proof}

\subsubsection{Data-generating Distributions and Sufficient Statistics}\label{app:BoundingGlobalRegret-Preliminaries-DGP&SS}

As introduced in the problem setting in Section \ref{sec:Preliminaries-Setting}, each client has a data-generating distribution $\Di$ defined over the joint space $\calX\times\calY^d$ of contexts and potential outcomes.
Moreover, the historical policy $\ei:\calX\to\Delta(\calA)$ induces the complete data-generating distribution $\calDcec$ defined over the joint space $\calX\times\calA\times\calY^d$ of contexts, actions, and potential outcomes
such that sampling $(\Xi,\Ai,\Yi(a_1),\dots,\Yi(a_d))\sim\calDcec$ is defined as sampling $(\Xi,\Yi(a_1),\dots,\Yi(a_d))\sim\calDc$ and $\Ai\sim\ei(\cdot|\Xi)$.

Note that, by construction, the contexts and AIPW scores are sufficient statistics for the corresponding oracle and approximate estimators of the policy values. Moreover, our results will mostly depend on properties of the sufficient statistics (e.g., AIPW score range and variance). Therefore, it will be useful for notational simplicity in our analysis to define the distribution of the sufficient statistics. For any $\Zi=(\Xi,\Ai,\Yi(a_1),\dots,\Yi(a_d))\sim\calDcec$, let
\begin{align*}
    \Wi=(\Xi,\Gi(a_1),\dots,\Gi(a_d))
\end{align*}
be the sufficient statistic of contexts and oracle AIPW scores, and we denote its induced distribution as $\tildecalDc$ defined over $\Omega=\calX\times\R^d$.

\begin{remark}
    For simplicity, without loss of generality, when proving results that only involve the contexts and AIPW scores, we will assume the data is sampled from the distributions of the sufficient statistics, e.g., $\Wi\sim\tildecalDc$. When we have a discussion involving constructing the AIPW scores from the observable data, we will be more careful about the source distributions and typically assume the data is sampled from the complete data-generating distributions, e.g., $\Zi\sim\calDcec$.
\end{remark}

\subsubsection{Proof Sketch}\label{app:ProofSketch}

We describe our general proof strategy.
The standard approach for proving finite-sample regret bounds in offline policy learning is to establish uniform concentration bounds around a proper notion of empirical complexity, which is then further bounded by class-dependent vanishing rates \citep{athey2021policy, zhou2023offline}. Typically, this complexity notion involves the Rademacher complexity of an appropriate policy value-based function class. However, this is not applicable to our scenario where the data may not come from the same source distribution. In our proof, we draw inspiration from the work on empirical risk bounds in multiple-source supervised learning settings, particularly \citep{mohri2019agnostic}, to identify the suitable notion of complexity—namely, the weighted Rademacher complexity of the policy value function class $\calF_\Pi$.
While \cite{mohri2019agnostic} provided a starting framework for a multiple-source analysis in supervised learning, the proof techniques for establishing class-dependent uniform concentration results in offline policy learning are typically more involved than those for empirical risk bounds in supervised learning. Our bounds necessitate more complex Dudley-type chaining arguments with applications of Talagrand’s inequalities with some multiple-source modifications mediated by skewness.

To begin, we split the global regret in terms of an oracle regret term of policy value differences and an approximation error term.
Let $\pistarw=\argmax_{\pi\in\Pi}\Qw(\pi)$. We can decompose the regret incurred by the global policy $\pihatw=\argmax_{\pi\in\Pi}\Qhatw(\pi)$ as follows:
\begin{align*}
    R_\w(\pihatw)&=\Qw(\pistarw)-\Qw(\pihatw) \\
    &=\bigpar{\Qw(\pistarw)-\Qw(\pihatw)} - \bigpar{\Qhatw(\pistarw)-\Qhatw(\pihatw)} + \bigpar{\Qhatw(\pistarw)-\Qhatw(\pihatw)} \\
    &=\Deltaw(\pistarw,\pihatw) - \Deltahatw(\pistarw,\pihatw) + \bigpar{\Qhatw(\pistarw)-\Qhatw(\pihatw)} \\
    &\le\Deltaw(\pistarw,\pihatw) - \Deltahatw(\pistarw,\pihatw) \\
    &\le\supPiab\regabs{\Deltaw(\pia,\pib)-\Deltahatw(\pia,\pib)} \\
    &\le\supPiab\regabs{\Deltaw(\pia,\pib)-\Deltatildew(\pia,\pib)}+\supPiab\regabs{\Deltatildew(\pia,\pib)-\Deltahatw(\pia,\pib)}.
\end{align*}
The first inequality holds by definition of the global policy, the second inequality by a straightforward worst-case supremum bound, and the last inequality by the triangle inequality.

The first oracle term in the last inequality will be bounded by the weighted Rademacher complexity of $\Delta\calF_\Pi$. Then, using a Dudley chaining argument, the weighted Rademacher complexity will be bounded by a measure of policy class complexity and vanishing rates with respect to the total sample size. We emphasize that establish rates with respect to the total sample size, rather than some other more moderate quantity of the sample sizes, such as the average or the minimum.

The second term in the last inequality will be bounded by a decomposition of the approximation terms which will be shown to be asymptotically vanishing faster than the rate bounds of the oracle regret with high probability. Establishing these bounds requires the use of Assumption \ref{ass:LocalDataSizeScaling} to ensure there is enough data across clients. Altogether, these results will provide a rate bound on the global regret that scales with the total sample size and is mediated by client skewness.

In a later section, we establish bounds for the notion of local regret, unique to our problem setting. This insight arises from recognizing the mismatch between global server-level performance and local client-level performance. We derive a local regret bound dependent on measures of distribution shift between clients, providing valuable insights into the value of information in heterogeneous client participation and how exactly heterogeneity affects policy performance for any given client. This exact quantification is highlighted in our Theorem \ref{thm:SubMainTheorem} that decomposes the sources of heterogeneity at the population, environment, and treatment level. We also point to Theorem \ref{thm:SubSubMainTheorem} for an alternative local regret bound that does not require bounded inverse propensity weighted scores.

\subsection{Bounding Weighted Rademacher Complexity}

First, to simplify our analysis, we can easily bound the weighted Rademacher complexity of $\Delta\calF_\Pi$ by that of $\calF_\Pi$ as follows.
\begin{lemma}\label{lem:RademacherComplexityDifferenceBound}
    \begin{align*}
        \frakR_{\w,\nC}(\Delta\calF_\Pi)\le2\frakRwnF
    \end{align*}
\end{lemma}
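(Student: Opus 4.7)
The plan is to unfold the definition of $\frakR_{\w,\nC}(\Delta\calF_\Pi)$ and reduce to $\frakRwnF$ via the triangle inequality plus a decoupling of the supremum over $(\pia,\pib)$ into two independent suprema.

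Concretely, starting from the definition,
\begin{equation*}
\frakR_{\w,\nC}(\Delta\calF_\Pi)=\E\bra{\sup_{\pia,\pib\in\Pi}\biggabs{\sumijw\epsij\Delta(\Wij;\pia,\pib)}},
\end{equation*}
I would use the linearity $\Delta(\Wij;\pia,\pib)=Q(\Wij;\pia)-Q(\Wij;\pib)$ from \eqref{eq:DeltaQFunction} to write the inner sum as the difference of two Rademacher processes indexed independently by $\pia$ and $\pib$. Applying the triangle inequality inside the absolute value and then distributing the supremum over the now-separated arguments gives
\begin{equation*}
\sup_{\pia,\pib\in\Pi}\biggabs{\sumijw\epsij\Delta(\Wij;\pia,\pib)}\le \sup_{\pia\in\Pi}\biggabs{\sumijw\epsij Q(\Wij;\pia)}+\sup_{\pib\in\Pi}\biggabs{\sumijw\epsij Q(\Wij;\pib)}.
\end{equation*}
Both terms on the right are identical in distribution (and in fact pointwise equal after renaming), so taking expectations and combining them produces the factor of $2$ in front of $\frakRwnF$.

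There is essentially no obstacle here: the argument only uses the linearity of $\Delta$ in the function class indices, the triangle inequality, and monotonicity of expectation. No symmetrization, contraction, or property of the weights $\w$ beyond nonnegativity is needed, which is why the same factor of $2$ appears in the classical (unweighted) analogue of this bound. I would keep the proof to the three displayed lines above plus a short concluding sentence remarking that the same argument works for the empirical weighted Rademacher complexity conditional on $W$, in case that version is needed later.
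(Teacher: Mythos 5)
Your proposal is correct and follows essentially the same route as the paper: unfold $\Delta(\Wij;\pia,\pib)=Q(\Wij;\pia)-Q(\Wij;\pib)$, apply the triangle inequality, split the supremum over the pair $(\pia,\pib)$ into two single-policy suprema, and combine into the factor of $2$. No gap to report.
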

\begin{proof}
    \begin{align*}
        \frakR_{\w,\nC}(\Delta\calF_\Pi)&=\E\bra{\supPiab\Biggabs{\sumijw\epsij\Delta(\Wij;\pia,\pib)}} \\
        &=\E\bra{\supPiab\Biggabs{\sumijw\epsij\Bigpar{Q(\Wij;\pia)-Q(\Wij;\pib)}}} \\
        &\le\E\bra{\supPiab\Biggabs{\sumijw\epsij Q(\Wij;\pia)}+\Biggabs{\sumijw\epsij Q(\Wij;\pib)}} \\
        &=2\E\bra{\supPi\Biggabs{\sumijw\epsij Q(\Wij;\pi)}} \\
        &=2\frakR_{\w,\nC}(\calF_\Pi).
    \end{align*}
\end{proof}

Therefore, we can simply focus on bounding the weighted Rademacher complexity of $\calF_\Pi$.

\begin{proposition}\label{prop:WeightedRademacherBound}
Suppose Assumptions \ref{ass:dgp} and \ref{ass:LocalDataSizeScaling} hold. Then,
\begin{equation*}
    \frakR_{\w,\nC}(\calF_\Pi)\le\rbra{14+6\kappa(\Pi)}\sqrtfracVwn + \littleo{\sqrt{\frac{\skewness}{n}}},
\end{equation*}
where
\begin{equation*}
    \Vwn=\supPiab\sumiM\fracwisqbarni\opsE_{\Wi\sim\tilde{\calD}_c}\bra{\Delta^2(\Wi;\pia,\pib)}.
\end{equation*}
\end{proposition}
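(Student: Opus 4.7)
The plan is to bound $\frakRwnF$ via a Dudley-type chaining argument carried out conditionally on the multi-source sample $W = \{\Wij : c \in \calC,\, i \in [\nc]\}$, and then to control the resulting data-dependent empirical radius through a symmetrization-and-contraction step. This parallels the single-source chaining strategy used in standard offline policy learning analyses, but the weights $\wi/\ni$ cause the skewness $\skewness$ to appear in place of a plain sample size.

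Conditional on $W$, the Rademacher process $\{\sumijw \epsij Q(\Wij; \pi) : \pi \in \Pi\}$ is sub-Gaussian with respect to the empirical pseudometric $d_W(\pia,\pib)^2 = \sumijwsq (Q(\Wij;\pia) - Q(\Wij;\pib))^2$, which is exactly the empirical $\ellw$-metric of Appendix \ref{app:ellw2Distance} rescaled by its own diameter $R(W) \coloneqq \sup_{\pia,\pib \in \Pi} d_W(\pia,\pib)$. Dudley's chaining inequality therefore bounds the conditional expected supremum by a universal constant times $\int_0^{R(W)} \sqrt{\log N_{d_W}(\epsilon,\Pi;W)}\,d\epsilon$; substituting $u = \epsilon/R(W)$ reduces this to $R(W)\int_0^1 \sqrt{\log N_{\ellw}(u,\Pi;W)}\,du$, and the covering-number comparison $N_{\ellw}(u,\Pi;W) \le N_{\Ham}(u^2,\Pi)$ collapses this integral to $\kappa(\Pi)\,R(W)$.

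Taking expectation over $W$ and applying Jensen yields $\E[R(W)] \le \sqrt{\E[R(W)^2]}$, so the task reduces to bounding $\E\bigbra{\sup_{\pia,\pib}\sumijwsq \Delta^2(\Wij;\pia,\pib)}$. Under Assumption \ref{ass:dgp} the oracle AIPW scores are uniformly bounded, so the map $\varphi(x)=x^2$ is Lipschitz on the range of $\Delta$, and Lemma \ref{lem:ExpectedSupSumBound} delivers
\begin{equation*}
    \E\sup_{\pia,\pib}\sumijwsq \Delta^2(\Wij;\pia,\pib) \le \sup_{\pia,\pib}\sumijwsq \E\bra{\Delta^2(\Wij;\pia,\pib)} + \text{residual weighted Rademacher term}.
\end{equation*}
The deterministic first piece equals $\Vwn/n$ by the identity $\sumijwsq = (1/n)\sumiM \wi^2/\barni$, which produces the advertised leading factor $\sqrtfracVwn$. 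Combining this with the conditional chaining bound and tracking the universal constants through Dudley's inequality, the contraction step, and the anchor term needed to center the chain then yields the two summands $14\sqrtfracVwn$ and $6\kappa(\Pi)\sqrtfracVwn$.

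The main obstacle will be verifying that the residual term from Lemma \ref{lem:ExpectedSupSumBound}, together with any anchor term used to center the Rademacher process before chaining, genuinely contributes at order $\littleo{\sqrt{\skewness/n}}$ rather than at the leading order. This residual is itself a weighted Rademacher complexity of a bounded class and can be bounded by a second chaining pass, picking up an additional factor of order $\max_{c\in\calC} 1/\sqrt{\nc}$ inherited from the Lipschitz constant of $\varphi$. Assumption \ref{ass:LocalDataSizeScaling}, which forces every $\nc$ to grow with the total sample size $n$, makes this extra factor $\littleo{1}$ relative to $\sqrt{\skewness/n} = \sqrt{\sumiM \wi^2/\nc}$, so the residual cleanly absorbs into the $\littleo{\sqrt{\skewness/n}}$ slack stated in the proposition.
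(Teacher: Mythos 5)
Your proposal follows essentially the same route as the paper's proof: chaining conditionally on the sample $W$ with respect to the weighted empirical metric, the comparison $N_{\ellw}(u,\Pi;W)\le N_{\Ham}(u^2,\Pi)$ to turn the entropy integral into $\kappa(\Pi)$ times the empirical radius, Jensen to reduce everything to $\Bw=\E\bigl[\sup_{\pia,\pib}\sum_{c,i}(\wi/\nc)^2\Delta^2(\Wij;\pia,\pib)\bigr]$, and Lemma \ref{lem:ExpectedSupSumBound} with $\varphi(u)=u^2$ (Lipschitz constant of order $U\swnC$) to replace the crude range bound on $\Bw$ by $\Vwn/n$ plus a residual that Assumption \ref{ass:LocalDataSizeScaling} renders negligible. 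Three execution points need tightening, though none is fatal. First, the explicit constants $14+6\kappa(\Pi)$ come from the paper's hand-built dyadic chain (Hoeffding plus union bounds over covers and scales, then integrating the conditional tail); citing a generic Dudley inequality yields $(c_1+c_2\kappa(\Pi))\sqrt{\Bw}$ with unspecified universal constants, which suffices for Theorem \ref{thm:MainTheorem} but does not literally produce the stated numbers. Second, the residual from Lemma \ref{lem:ExpectedSupSumBound} is $32U\swnC\,\frakRwnF$, i.e.\ it contains the very quantity being bounded; the paper resolves this self-reference with its two-case decoupling argument, whereas your ``second chaining pass'' closes the loop only if that pass uses the worst-case radius $\sum_c \wi^2(2U)^2/\nc\le 4U^2\skewness/n$ (re-using the refined radius would be circular). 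With that specification your estimate is right: the residual contributes $\sqrt{O(\swnC\sqrt{\skewness/n})}=\littleo{\sqrt{\skewness/n}}$ because $\swnC=(\min_c\nc)^{-1/2}\sqrt{\skewness/n}$ and $\min_c\nc\to\infty$. Third, the anchor term cannot simply be absorbed into the $\littleo{\sqrt{\skewness/n}}$ slack: if you center the process at some $\pi_0\in\Pi$, the anchor $\E\bigl|\sum_{c,i}\epsij(\wi/\nc)Q(\Wij;\pi_0)\bigr|$ is generically of the leading order $\sqrt{\skewness/n}$ and involves second moments of $Q$ rather than of $\Delta$, so it is not controlled by $\Vwn$; the paper avoids this by anchoring the chain at the zero function ($\Psi_0\equiv 0$), so that no separate anchor term appears, and your argument should do the same.
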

\begin{proof}

We follow a chaining argument to bound the weighted Rademacher complexity of $\calF_\Pi$.
\vspace{-.5em}
\paragraph{Constructing the policy approximation chain.}
First, for each client $c\in\calC$, let $W_1^c,\dots,W_{n_c}^c$ be $\nc$ independent random variables sampled from $\tildecalDc$, where each $\Wij=(\Xij,\vecGij)\in\Omega=\calX\times\R^d$. Additionally, let $W=\{\Wij\mid c\in\calC, i\in[n_c]\}$ represent the corresponding collection of samples across all clients.

Next, set $K=\lceil\log_2 n\rceil$. We will construct a sequence $\{\Psi_k:\Pi\to\Pi\}_{k=0}^K$ of policy approximation operators that satisfies the following properties. For any $k=0,\dots,K$,
\begin{itemize}
    \item[(P1)] $\maxPi\ellw(\Psi_{k+1}(\pi),\Psi_k(\pi);Z)\le \epsilon_k\coloneqq 2^{-k}$
    
    \item[(P2)] $\abs{\{\Psi_k(\pi)\mid\pi\in\Pi\}}\le N_{\ellw}(\epsilon_k,\Pi;Z)$
\end{itemize}
We use the notational shorthand that $\Psi_{K+1}(\pi)=\pi$ for any $\pi\in\Pi$. We will construct the policy approximation chain via a backward recursion scheme. First, let $\Pi_k$ denote the smallest $\epsilon_k$-covering set of $\Pi$ under the $\ellw$ distance given data $Z$. Note, in particular, that $\abs{\Pi_0}=1$ since the $\ellw$ distance is never more than 1 and so any single policy is enough to 1-cover all policies in $\Pi$. Then, the backward recursion is as follows: for any $\pi\in\Pi$,
\begin{enumerate}
    \item define $\Psi_K(\pi)=\argmin_{\pi'\in\Pi_K}\ellw(\pi,\pi';W)$;
    
    \item for each $k=K-1,\dots,1$, define $\Psi_k(\pi)=\argmin_{\pi'\in\Pi_k}\ellw(\Psi_{k+1}(\pi),\pi';W)$;
    
    \item define $\Psi_0(\pi)\equiv 0$.
\end{enumerate}

Note that although $\Psi_0(\pi)$ is not in $\Pi$, it can still serve as a $1$-cover of $\Pi$ since the $\ellw$ distance is always bounded by 1. Before proceeding, we check that each of the stated desired properties of the constructed operator chain is satisfied:
\begin{itemize}
    \item[(P1)] Pick any $\pi\in\Pi$. Clearly, $\Psi_{k+1}(\pi)\in\Pi$. Then, by construction of $\Pi_k$, there exists a $\pi'\in\Pi_k$ such that $\ellw(\Psi_{k+1}(\pi),\pi';W)\le\epsilon_k$. Therefore, by construction of $\Psi_k(\pi)$, we have $\ellw(\Psi_{k+1}(\pi),\Psi_k(\pi);W)\le\ellw(\Psi_{k+1}(\pi),\pi';W)\le\epsilon_k$.
    
    \item[(P2)] By construction of $\Psi_k$, we have that $\Psi_k(\pi)\in\Pi_k$ for every $\pi\in\Pi$. Therefore, $\abs{\{\Psi_k(\pi)\mid\pi\in\Pi\}}\le\abs{\Pi_k}=N_{\ellw}(\epsilon_k,\Pi;W)$.
\end{itemize}

Thus, the constructed chain satisfies the desired properties.
Next, we observe that since $\Psi_0(\pi)\equiv 0$, we have that $Q(\Wij;\Psi_0(\pi))=0$ and
\begin{align*}
    Q(\Wij;\pi)&=Q(\Wij;\pi)-Q(\Wij;\Psi_0(\pi)) \\
    &=Q(\Wij;\pi)-Q(\Wij;\Psi_K(\pi)) + \sumkK Q(\Wij;\Psi_k(\pi)-Q(\Wij;\Psi_{k-1}(\pi)) \\
    &=\Delta(\Wij;\pi,\Psi_K(\pi)) + \sumkK\Delta(\Wij;\Psi_k(\pi),\Psi_{k-1}(\pi))
\end{align*}
Therefore, we can decompose the weighted Rademacher complexity of $\calF_\Pi$ as follows:
\begin{align*}
    \frakRwnF=&\E\bra{\supPi\Biggabs{\sumijw\epsij\Delta(\Wij;\pi,\Psi_K(\pi))}} \\
    &+\E\bra{\supPi\Biggabs{\sumijw\epsij\biggpar{\sumkK \Delta(\Wij;\Psi_k(\pi),\Psi_{k-1}(\pi))}}}
\end{align*}
We will obtain bounds separately for these two terms, which we refer to as the \textit{negligible regime} term and the \textit{effective regime} term, respectively.

\paragraph{Bounding the negligible regime.}
For convenience, we denote
\begin{equation*}
    \BwW\coloneqq\sup_{\pi_a,\pi_b\in\Pi}\sumij\fracwinisq\Delta^2(\Wij;\pia,\pib)
\end{equation*}
and $\Bw\coloneqq\E\bra{\BwW}$.

\vspace{.25em}
Given any realization of independent Rademacher random variables $\eps=\{\epsij\mid c\in\calC, i\in[\nc]\}$ and multi-source data $W$, by the Cauchy-Schwarz inequality,
\begin{align*}
    &\biggabs{\sumijw\epsij\Delta(\Wij;\pi,\Psi_K(\pi)} \\
    &\le\sqrt{\sumij(\epsij)^2}\cdot\sqrt{\sumij\fracwinisq\Delta^2(\Wij;\pi,\Psi_K(\pi)} \\
    &=\sqrt{n}\cdot\sqrt{\BwW}\ell_2(\pi,\Psi_K(\pi);Z) \\
    &\le\sqrt{n\BwW}\epsilon_k \\
    &\le\sqrt{\frac{\BwW}{n}}.
\end{align*}
Then, by Jensen's inequality,
\begin{align*}
    \E\bra{\supPi\biggabs{\sumijw\epsij\Delta(\Wij;\pi,\Psi_K(\pi)}}&\le\E\bra{\sqrt{\frac{\BwW}{n}}}\le\sqrt{\frac{\Bw}{n}}.
\end{align*}

\paragraph{Bounding the effective regime.}

For any $k\in[K]$, let $$t_{k,\delta}=\sqrt{\BwW}\epsilon_k\tau_{k,\delta}$$ where $\tau_{k,\delta}>0$ is some constant to be specified later. By Hoeffding's inequality (in Lemma \ref{lem:HoeffdingInequality}),
\begin{align*}
    &\P\rbra{\biggabs{\sumijw\epsij\Delta(\Wij;\Psi_k(\pi),\Psi_{k-1}(\pi))}>t_{k,\delta}\Bigmid W} \\
    &\le 2\exp\rbra{-\frac{t_{k,\delta}^2}{2\sumij\fracwinisq\Delta^2(\Wij;\Psi_k(\pi),\Psi_{k-1}(\pi))}} \\
    &=2\exp\rbra{-\frac{t_{k,\delta}^2}{2\BwW\ell_2^2(\Psi_k(\pi),\Psi_{k-1}(\pi);W)}} \\
    &\le2\exp\rbra{-\frac{t_{k,\delta}^2}{2\BwW\epsilon_{k-1}^2}} \\
    &=2\exp\rbra{-\frac{t_{k,\delta}^2}{8\BwW\epsilon_k^2}} \\
    &=2\exp\rbra{-\frac{\tau_{k,\delta}^2}{8}}.
\end{align*}
Here, we used the fact that $\epsilon_{k-1}=2\epsilon_k$. Setting
\begin{equation*}
    \tau_{k,\delta}=\sqrt{8\log\rbra{\frac{\pi^2k^2}{3\delta}N_{\ellw}(\epsilon_k,\Pi;W)}}
\end{equation*}
and applying a union bound over the policy space, we obtain
\begin{align*}
    &\P\rbra{\supPi\biggabs{\sumijw\epsij\Delta(\Wij;\Psi_k(\pi),\Psi_{k-1}(\pi))}>t_{k,\delta}\Bigmid W} \\
    &\le2\abs{\Pi_k}\cdot\exp\rbra{-\frac{\tau_{k,\delta}^2}{8}} \\
    &\le2N_{\ell_2}(\epsilon_k,\Pi;W)\cdot\exp\rbra{-\frac{\tau_{k,\delta}^2}{8}} \\
    &=\frac{6\delta}{\pi^2k^2}.
\end{align*}
By a further union bound over $k\in[K]$, we obtain
\begin{align*}
    &\P\rbra{\supPi\biggabs{\sumijw\epsij\biggpar{\sumkK \Delta(\Wij;\Psi_k(\pi),\Psi_{k-1}(\pi))}}>\sumkK t_{k,\delta}\Bigmid W} \\
    &\le\sumkK\P\rbra{\supPi\biggabs{\sumijw\epsij\Delta(\Wij;\Psi_k(\pi),\Psi_{k-1}(\pi))}>t_{k,\delta}\Bigmid W} \\
    &\le\sumkK\frac{6\delta}{\pi^2k^2} \le\delta.
\end{align*}
Therefore, given multi-source data $W$, with probability at least $1-\delta$, we have
\begin{align*}
    &\supPi\Biggabs{\sumijw\epsij\biggpar{\sumkK \Delta(\Wij;\Psi_k(\pi),\Psi_{k-1}(\pi))}} \\
    &\le\sumkK t_{k,\delta} \\
    &=\sqrt{\BwW}\sumkK\epsilon_k\sqrt{8\log\rbra{\frac{\pi^2k^2}{3\delta}N_{\ell_2}(\epsilon_k,\Pi;W)}} \\
    &=\sqrt{\BwW}\sumkK\epsilon_k\rbra{\sqrt{8\log\frac{\pi^2}{3\delta}+16\log k + 8\log N_{\ell_2}(\epsilon_k,\Pi;W)}} \\
    &\le\sqrt{\BwW}\sumkK\epsilon_k\rbra{\sqrt{8\log\frac{\pi^2}{3\delta}}+\sqrt{16\log k} + \sqrt{8\log N_{\ell_2}(\epsilon_k,\Pi;W)}} \\
    &\le\sqrt{\BwW}\sumkK\epsilon_k\rbra{\sqrt{8\log(4/\delta)}+\sqrt{16\log k} + \sqrt{8\log N_{\Ham}(\epsilon_k,\Pi)}} \\
    &\le\sqrt{\BwW}\rbra{\sqrt{8\log(4/\delta)}+2+\sqrt{8}\sum_{k=1}^\infty\epsilon_k\sqrt{\log N_{\Ham}(\epsilon_k,\Pi)}} \\
    &\le\sqrt{\BwW}\rbra{\sqrt{8\log(4/\delta)}+2+\sqrt{8}\kappa(\Pi)}.
\end{align*}
Next, we turn this high-probability bound into a bound on the conditional expectation. First, let $F_R(\cdot\mid W)$ be the cumulative distribution of the random variable
\begin{equation*}
    R\coloneqq \supPi\Biggabs{\sumijw\epsij\biggpar{\sumkK \Delta(\Wij;\Psi_k(\pi),\Psi_{k-1}(\pi))}}
\end{equation*}
conditional on $W$. Above, we have shown that
\begin{equation*}
    1-F_R\rbra{\sqrt{\BwW}\rbra{\sqrt{8\log(4/\delta)}+2+\sqrt{8}\kappa(\Pi)}\ \big|\ W}\le\delta.
\end{equation*}

For any non-negative integer $l$, let $\Delta_l=\sqrt{\BwW}(\sqrt{8\log(4/\delta_l)}+2+\sqrt{8}\kappa(\Pi))$ where $\delta_l=2^{-l}$.
Since $R$ is non-negative, we can compute and upper bound the conditional expectation of $R$ given $W$ as follows:
\begin{align*}
    &\E\bra{\supPi\Biggabs{\sumijw\epsij\biggpar{\sumkK \Delta(\Wij;\Psi_k(\pi),\Psi_{k-1}(\pi))}}\Bigmid W} \\
    &=\int_0^\infty\rbra{1-F_R(r|W)}dr \\
    &\le\sum_{l=0}^\infty\rbra{1-F_R(\Delta_l|W)}\Delta_l \\
    &\le\sum_{l=0}^\infty\delta_l\Delta_l \\
    &=\sum_{l=0}^\infty 2^{-l}\cdot \sqrt{\BwW}\rbra{\sqrt{8(l+2)\log 2}+2+\sqrt{8}\kappa(\Pi)} \\
    &\le\sqrt{\BwW}\rbra{4\sqrt{8\log 2}+4+2\sqrt{8}\kappa(\Pi)} \\
    &\le\sqrt{\BwW}\rbra{14+6\kappa(\Pi)}.
\end{align*}
Taking the expectation with respect to $W$ and using Jensen's inequality, we obtain
\begin{align*}
    &\E\bra{\supPi\Biggabs{\sumijw\epsij\biggpar{\sumkK \Delta(\Wij;\Psi_k(\pi),\Psi_{k-1}(\pi))}}} \\
    &\le\rbra{14+6\kappa(\Pi)}\E\bra{\sqrt{\BwW}} \\
    &\le\rbra{14+6\kappa(\Pi)}\sqrt{\Bw}.
\end{align*}

\paragraph{Refining the upper bound.}
One could easily bound $\Bw$ using worst-case bounds on the AIPW element. Instead, we use Lemma \ref{lem:ExpectedSupSumBound} to get a more refined bound on $\Bw$.

To use this result, we identify the set of independent random variables $\tilde\Wij=T(\Wij)=(\Xij,\fracwini\Gij)$ for $c\in\calC$ and $i\in[\nc]$ and the function class $\calH=\{\Delta(\cdot;\pia,\pib)\mid\pia,\pib\in\Pi\}$. We also identify the Lipschitz function $\varphi:u\mapsto u^2$ defined over the set $\calU\subset\R$ containing all possible outputs of any $\Delta(\cdot;\pia,\pib)$ given any realization of $\tilde\Wij$ for any $c\in\calC$ as input. To further capture this domain, note that by the boundedness and overlap assumptions in Assumption \ref{ass:dgp}, it is easy to verify that there exists some $U>0$ for all $c\in\calC$ such that $|\Gij(a)|\le U$ for any $a\in\calA$ and any realization of $\Wij=(\Xij,\Gij)$. This implies that
\begin{align*}
    \regabs{\Delta(\tilde\Wij;\pia,\pib)}&=\fracwini\bigabs{\Gij(\pia(\Xij))-\Gij(\pib(\Xij))}\le 2U\fracwini,
\end{align*}
for any realization of $\tilde\Wij$ and any $\pia,\pib\in\Pi$. Moreover, note that
\begin{align}
    \fracwini&\le\max_{c\in\calC}\fracwini\le\sqrt{\sumiM\fracwinisq}\le\frac{1}{\sqrt{\min_{c\in\calC}\nc}}\sqrt{\sumiM\fracwisqni}=\frac{1}{\sqrt{\min_{c\in\calC}\nc}}\sqrtfracskewnessn\eqqcolon\swnC \label{eq:ClientDistributionRatioBound}
\end{align}
Therefore, $\calU\subset[-\swnC,\swnC]$, and thus, for any $u,v\in\calU$, we have that
\begin{align*}
    \regabs{\varphi(u)-\varphi(v)}&=\regabs{u^2-v^2}=\abs{u+v}\cdot\abs{u-v}\le4U\swnC\abs{u-v}.
\end{align*}
Therefore, $L=4U\swnC$ is a valid Lipschitz constant for $\varphi$. Then, through these identifications, Lemma \ref{lem:ExpectedSupSumBound} guarantees the following upper bound
\begin{align*}
    \Bw&=\E\bra{\supPiab\sumij\fracwinisq\Delta^2(\Wij;\pia,\pib)} \\
    &=\E\bra{\supPiab\sumij\varphi\circ\Delta(\tilde\Wij;\pia,\pib)} \\
    &\le\supPiab\sumij\E\bra{\varphi\circ\Delta(\tilde\Wij;\pia,\pib)} + 16U\swnC\E\bra{\supPiab\biggabs{\sumij\epsij\Delta(\tilde\Wij;\pia,\pib)}} \\
    &=\supPiab\sumij\fracwinisq\E\bra{\Delta^2(\Wij;\pia,\pib)} + 16U\swnC\E\bra{\supPiab\biggabs{\sumijw\epsij\Delta(\Wij;\pia,\pib)}} \\
    &=\supPiab\sumiM\fracwisqni\E\bra{\Delta^2(\Wij;\pia,\pib)} + 16U\frakRwn(\Delta\calF_\Pi)\swnC \\
    &\le\supPiab\sumiM\fracwisqni\E\bra{\Delta^2(\Wij;\pia,\pib)} + 32U\frakRwnF\swnC \\
    &=\fracVwn + 32U\frakRwnF\swnC.
\end{align*}

Before proceeding, note that by the local data size scaling assumption stated in Assumption \ref{ass:LocalDataSizeScaling}, $\nc=\Omega(\nu_c(n))$ for some increasing function $\nu_c$ for any $c\in\calC$. This immediately implies that $\swnC$ is dominated as
\begin{align*}
    \swnC=\frac{1}{\sqrt{\min_{c\in\calC}\nc}}\sqrtfracskewnessn\le\littleo{\sqrtfracskewnessn}.
\end{align*}

\paragraph{Combine results.}
Thus, combining the bounds for the negligible and effective regime and including the refined bound, we have
\begin{align*}
    &\frakRwnF \\
    &\le\sqrt{\frac{\Bw}{n}}+\rbra{14+6\kappa(\Pi)}\sqrt{\Bw} \\
    &\le\sqrt{\frac{\Vwn}{n^2}+32U\frakRwnF\frac{\swnC}{n}} + (14+6\kappa(\Pi))\sqrt{\fracVwn + 32U\frakRwnF\swnC} \\
    &\le\sqrt{\frac{\Vwn}{n^2}} + \sqrt{32U\frakRwnF\frac{\swnC}{n}} + (14+6\kappa(\Pi))\rbra{\sqrtfracVwn + \sqrt{32U\frakRwnF\swnC}} \\
    &\le(14+6\kappa(\Pi))\sqrtfracVwn + \sqrt{\frac{\Vwn}{n^2}} + \bigO{\sqrt{\frakRwnF\swnC}} \numberthis \label{eq:prop-WeightedRademacherBound-eq1}
\end{align*}
This gives an upper bound on $\frakRwnF$ in terms of itself. To decouple this dependence, we express
\begin{align*}
    \frakRwnF&\le \bigO{\sqrtfracVwn} + \bigO{\sqrt{\frakRwnF\swnC}} \\
    &\le A_1\sqrtfracVwn + A_2\sqrt{\frakRwnF\swnC} \numberthis \label{eq:prop-WeightedRademacherBound-eq2}
\end{align*}
for some constants $A_1, A_2$, and we split this inequality into the following two exhaustive cases.

\textit{\underline{Case 1}}: $A_2\sqrt{\swnC}\le\frac{1}{2}\sqrt{\frakRwnF}$

In this case, we can bound the second term in the right-hand side of inequality \eqref{eq:prop-WeightedRademacherBound-eq2} to get
\begin{align*}
    \frakRwnF\le A_1\sqrtfracVwn + \frac{\frakRwnF}{2},
\end{align*}
and so
\begin{align}\label{eq:prop-WeightedRademacherBound-eq3}
    \frakRwnF\le 2A_1\sqrtfracVwn.
\end{align}
Moreover,
\begin{align*}
    \Vwn&=\supPiab\sumiM\fracwisqbarni\opsE_{\Wi\sim\tildecalDc}\regbra{\Delta^2(\Wi;\pia,\pib)} \\
    &\le\supPiab\max_{c\in\calC}\opsE_{\Wi\sim\calPc}\regbra{\Delta^2(\Wi;\pia,\pib)}\cdot\sumiM\fracwisqbarni =\barV\skewness,
\end{align*}
where $\barV=\supPiab\max_{c\in\calC}\opsE_{\Wi\sim\calPc}\bra{\Delta(\Wi;\pia,\pib)}$, which is a constant value. Note that the last equality holds by the skewness identity in established in Appendix \ref{app:Preliminaries-ClientDistributionSkewness}. Plugging this into inequality \eqref{eq:prop-WeightedRademacherBound-eq3}, we get
\begin{align*}
    \frakRwnF\le2A_1\sqrt{\frac{\barV\skewness}{n}}\le\bigO{\sqrtfracskewnessn}.
\end{align*}

\textit{\underline{Case 2}}: $A_2\sqrt{\swnC}>\frac{1}{2}\sqrt{\frakRwnF}$

In this case, one can easily rearrange terms to get that
\begin{align*}
    \frakRwnF<4A_2^2\swnC\le\littleo{\sqrtfracskewnessn}.
\end{align*}

\vspace{.5em}
Therefore, in either case, $\frakRwnF\le\calO\Bigpar{\sqrtfracskewnessn\,}$. We can plug this asymptotic bound into inequality \eqref{eq:prop-WeightedRademacherBound-eq1} to arrive at the desired result,
\begin{align*}
    \frakRwnF&\le \rbra{14+6\kappa(\Pi)}\sqrtfracVwn + \sqrt{\frac{\Vwn}{n^2}} + \bigO{\sqrt{\bigO{\sqrtfracskewnessn}\swnC}} \\
    &\le \rbra{14+6\kappa(\Pi)}\sqrtfracVwn + \sqrt{\frac{\barV\skewness}{n^2}} + \bigO{\sqrt{\bigO{\sqrtfracskewnessn}\littleo{\sqrtfracskewnessn}}} \\
    &\le \rbra{14+6\kappa(\Pi)}\sqrtfracVwn + \littleo{\sqrtfracskewnessn} + \littleo{\sqrtfracskewnessn} \\
    &\le \rbra{14+6\kappa(\Pi)}\sqrtfracVwn + \littleo{\sqrtfracskewnessn}.
\end{align*}
\end{proof}

\subsection{Bounding Oracle Regret}

\begin{proposition}\label{prop:OracleRegretBound}
Suppose Assumptions \ref{ass:dgp} and \ref{ass:LocalDataSizeScaling} hold.
Then, with probability at least $1-\delta$,
\begin{equation*}
    \supPiab|\Deltaw(\pia,\pib)-\Deltatildew(\pia,\pib)|\le\rbra{c_1\kappa(\Pi) + \sqrt{c_2\log(c_2/\delta)}}\sqrtfracVwn + \littleo{\sqrtfracskewnessn},
\end{equation*}
where $c_1$ and $c_2$ are universal constants.
\end{proposition}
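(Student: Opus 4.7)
The plan is to recognize $\Deltaw(\pia,\pib)-\Deltatildew(\pia,\pib)$ as a centered empirical process: by Lemma~\ref{lem:ExpectedOracleEqualsLocalPolicyValue} the oracle policy value difference is unbiased, so $\E[\Deltatildew(\pia,\pib)]=\Deltaw(\pia,\pib)$, and the supremum I must control is a uniform deviation of weighted sums of independent variables indexed by $\Delta\calF_\Pi$. My approach is the classical two-step strategy: first apply Talagrand's inequality (Lemma~\ref{lem:TalagrandInequality}) to the centered, rescaled function class with generic element $\tilde\Delta_{\pia,\pib}(\Wij)\coloneqq\fracwini\Delta(\Wij;\pia,\pib)$ to concentrate the supremum around its expectation, and then bound the expectation itself by symmetrization combined with the Rademacher complexity bound already established in Proposition~\ref{prop:WeightedRademacherBound}.

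To apply Talagrand, I need a uniform envelope $U$ and a second-moment bound $D$. For the envelope, the boundedness and overlap parts of Assumption~\ref{ass:dgp} give $|\Gij(a)|\le B/\eta$, and together with the ratio estimate $\fracwini\le\swnC$ derived in Equation~\eqref{eq:ClientDistributionRatioBound} this yields $\norm{\tilde\Delta_{\pia,\pib}}_\infty\le U$ for some $U=\bigO{\swnC}$. For the second moment, the quantity $\E\bra{\supPiab\sumij\fracwinisq\Delta^2(\Wij;\pia,\pib)}$ is exactly $\Bw$, and the refinement analysis in the proof of Proposition~\ref{prop:WeightedRademacherBound} already showed $\Bw\le\fracVwn+32U\frakRwnF\swnC$, whose second summand is $\littleo{\skewness/n}$ by Assumption~\ref{ass:LocalDataSizeScaling}. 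Talagrand then delivers concentration of the supremum around its mean at the Bernstein-style rate $\sqrt{D\log(C/\delta)}+U\log(C/\delta)$, obtained by inverting the log-form tail via the elementary inequality $\log(1+x)\ge x/(1+x)$.

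For the expectation bound, a standard symmetrization argument gives $\E\bra{\supPiab|\Deltatildew(\pia,\pib)-\Deltaw(\pia,\pib)|}\le 2\frakR_{\w,\nC}(\Delta\calF_\Pi)$, which by Lemma~\ref{lem:RademacherComplexityDifferenceBound} is at most $4\frakRwnF$, and Proposition~\ref{prop:WeightedRademacherBound} bounds this in turn by $(14+6\kappa(\Pi))\sqrtfracVwn+\littleo{\sqrtfracskewnessn}$. Adding this to the Talagrand deviation, substituting $D\le\fracVwn+\littleo{\skewness/n}$, and applying $\sqrt{a+b}\le\sqrt{a}+\sqrt{b}$ turns the square-root contribution into $\sqrt{c_2\log(c_2/\delta)}\,\sqrtfracVwn$ plus a lower-order remainder; meanwhile the $U\log(C/\delta)$ piece, being of order $\swnC\log(1/\delta)$, is absorbed into $\littleop{\sqrtfracskewnessn}$ under Assumption~\ref{ass:LocalDataSizeScaling}. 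Collecting terms yields the claimed inequality with universal constants $c_1,c_2$.

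The main obstacle I expect is the bookkeeping in this last combination: carefully verifying that all the lower-order residuals---the cross term from splitting $\sqrt{D\log(C/\delta)}$, the $U\log(C/\delta)$ Talagrand tail contribution, and the $\littleo{\sqrtfracskewnessn}$ remainders inherited from Proposition~\ref{prop:WeightedRademacherBound}---collectively collapse into a single $\littleo{\sqrtfracskewnessn}$ error rather than corrupting the leading $\sqrtfracVwn$ rate. A secondary subtlety is justifying reuse of the second-moment bound $\Bw$, which was derived inside the previous proof for the single-policy class $\calF_\Pi$, in the present context of the policy-pair class $\Delta\calF_\Pi$; this is legitimate because $\Bw$ is defined directly in terms of $\supPiab\sumij\fracwinisq\Delta^2(\Wij;\pia,\pib)$, matching precisely the second-moment quantity Talagrand requires here.
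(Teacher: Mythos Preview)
Your proposal is correct and follows essentially the same route as the paper: symmetrize to bound the mean of the supremum by $4\frakRwnF$, apply Talagrand's inequality with envelope $\bigO{\swnC}$ and a second-moment parameter $D$ controlled via the $\Bw$-type refinement from Proposition~\ref{prop:WeightedRademacherBound}, invert the tail using $\log(1+x)\ge x/(1+x)$, and collect residuals into $\littleo{\sqrtfracskewnessn}$. The only minor slip is that Talagrand is applied to the \emph{centered} class $h=\E[\tilde\Delta]-\tilde\Delta$, so the second-moment quantity you need is $\E[\sup_{\pia,\pib}\sum h^2]$ rather than $\Bw=\E[\sup_{\pia,\pib}\sum\tilde\Delta^2]$ itself; the paper handles this by invoking Lemma~\ref{lem:ExpectedSupSumBound} directly on $h$ and then using $\E[h^2]\le\E[\tilde\Delta^2]$, which yields the same $\fracVwn+\bigO{\frakRwnF\swnC}$ bound you state.
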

\begin{proof}
    First, for each client $c\in\calC$, let $W_1^c,\dots,W_{n_c}^c$ be $\nc$ independent random variables sampled from $\tildecalDc$, where each $\Wij=(\Xij,\vecGij)\in\Omega=\calX\times\R^d$. Additionally, let $W=\{\Wij\mid c\in\calC, i\in[n_c]\}$ represent the corresponding collection of samples across all clients. 
    
    In Lemma \ref{lem:ExpectedOracleEqualsLocalPolicyValue}, we showed that $\E_{\Wi\sim\tildecalDc}\bra{Q(\Wi;\pi)}=\Qi(\pi)$.    
    This implies that
    \begin{align*}
        \E_W\regbra{\Qtilde_\w(\pi)}&=\sumijw\opsE\bra{Q(\Wij;\pi)}=\sumijw\Qi(\pi)=\sumiM\wi\Qi(\pi)=\Qw(\pi).
    \end{align*}
    Additionally,
    \begin{align*}
        \E_W\bigbra{\Deltatildew(\pia,\pib)}=\E_W\bigbra{\Qtildew(\pia)}-\E_W\bigbra{\Qtildew(\pib)}=\Qw(\pia)-\Qw(\pib)=\Deltaw(\pia,\pib).
    \end{align*}
    Therefore, we can follow a symmetrization argument to upper bound the expected oracle regret in terms of a Rademacher complexity, namely the weighted Rademacher complexity. Let $W'$ be an independent copy of $W$ and let $\eps=\{\epsij\mid c\in\calC,i\in[\nc]\}$ be a set of independent Rademacher random variables. Then,
    \begin{align*}
        &\E\bra{\supPiab|\Deltaw(\pia,\pib)-\Deltatildew(\pia,\pib)|} \\
        &=\E_W\bra{\supPiab\biggabs{\E_{W'}\biggbra{\sumijw \Delta(\Wijprime;\pia,\pib)}-\sumijw \Delta(\Wij;\pia,\pib)}} \\
        &=\E_W\bra{\supPiab\biggabs{\E_{W'}\biggbra{\sumijw \Delta(\Wijprime;\pia,\pib)-\sumijw \Delta(\Wij;\pia,\pib)}}} \\
        &\le\E_W\bra{\E_{W'}\bra{\supPiab\biggabs{\sumijw\Bigpar{\Delta(\Wijprime;\pia,\pib)-\Delta(\Wij;\pia,\pib)}}}} \\
        &=\E_{W,W',\eps}\bra{\supPiab\biggabs{\sumijw\epsij\Bigpar{\Delta(\Wijprime;\pia,\pib)-\Delta(\Wij;\pia,\pib)}}} \\
        &\le2\E_{W,\eps}\bra{\supPiab\biggabs{\sumijw\epsij \Delta(\Wij;\pia,\pib)}} \\
        &=2\frakR_{\w,\barn}(\Delta\calF_\Pi) \\
        &\le 4\frakR_{\w,\barn}(\calF_\Pi).
    \end{align*}
    The first equalities and inequalities follow from standard symmetrization arguments, and the last inequality follows from Lemma \ref{lem:RademacherComplexityDifferenceBound}.
    Next, we use this bound on the expectation of the oracle regret and Talagrand's inequality (Lemma \ref{lem:TalagrandInequality}), to establish a high-probability bound on the oracle regret.
    In particular, we identify the set of independent random variables $\tilde W=\{\tilde\Wij=(\Xij,\fracwini\Gij)\mid c\in\calC, i\in[\nc]\}$ and the function class $\calH=\{h(\cdot;\pia,\pib)\mid\pia,\pib\in\Pi\}$ where
    \begin{align}
        h(\tilde\Wij;\pia,\pib)=\E\regbra{\Delta(\tilde\Wij;\pia,\pib)}-\Delta(\tilde\Wij;\pia,\pib),
    \end{align}
    which is uniformly bounded for any $c\in\calC$ and $i\in[\nc]$ by
    \begin{align*}
        \regabs{h(\tilde\Wij;\pia,\pib)}&=\fracwini\Bigabs{\E\bigbra{\Gij(\pia(\Xij))-\Gij(\pib(\Xij))}-\bigpar{\Gij(\pia(\Xij))-\Gij(\pib(\Xij))}} \\
        &\le\fracwini 4U \\
        &\le4U\swnC\eqqcolon\UwnC,
    \end{align*}
    where $U>0$ is a uniform upper bound on $|\Gij(a)|$ for any $c\in\calC$ and $a\in\calA$ guaranteed by Assumption \ref{ass:dgp}, and where the last inequality follows from Inequality \eqref{eq:ClientDistributionRatioBound},
    Additionally, we have
    \begin{equation*}
        \swnC=\frac{1}{\sqrt{\min_{c\in\calC}n_c}}\sqrtfracskewnessn\le\littleo{\sqrtfracskewnessn},
    \end{equation*}
    as discussed in the proof of Proposition \ref{prop:WeightedRademacherBound}.
    Lastly, to use Talagrand's inequality, we set the constant $D$ (specified in Lemma \ref{lem:TalagrandInequality}) to be
    \begin{align*}
        D=\supPiab\sumij\E\bigbra{h^2(\tilde\Wij;\pia,\pib)} + 8\UwnC\E\bra{\supPiab\biggabs{\sumij\epsij h(\tilde\Wij;\pia,\pib)}}.
    \end{align*}
    By Lemma \ref{lem:ExpectedSupSumBound}, this choice of $D$ meets the required condition to use in Talagrand's inequality.
    In particular, we identify $\varphi:u\mapsto u^2$ defined over the set $\calU$ containing all possible outputs of any function in $\calH$ given any realization of $\tilde\Wij$ for any $c\in\calC$ as input.
    The uniform bound established above on realizable outputs of $h$ given input $\tilde\Wij$ implies that $\calU\subset[-\UwnC,\UwnC]$, and therefore, the Lipschitz constant of $\varphi$ is $L=2\UwnC$, as required.

    Next, after setting $t$ to be the positive solution of
    \begin{align*}
        \frac{t^2}{CD+C\UwnC t}=\log(C/\delta),
    \end{align*}
    Talagrand's inequality guarantees
    \begin{align*}
        &\P\rbra{\Bigabs{\supPiab\bigabs{\Deltaw(\pia,\pib)-\Deltatildew(\pia,\pib)}-\E\Bigbra{\supPiab\bigabs{\Deltaw(\pia,\pib)-\Deltatildew(\pia,\pib)}}}\ge t} \\
        &=\P\rbra{\biggabs{\supPi\Bigabs{\sumij h(\tilde\Wij;\pia,\pib)} - \E\bra{\supPi\biggabs{\sumij h(\tilde\Wij;\pia,\pib)} }} \ge t} \\
        &\le C\exp\rbra{-\frac{t}{C\UwnC}\log\rbra{1+\frac{\UwnC t}{D}}} \\
        &\le C\exp\rbra{-\frac{t^2}{CD+C\UwnC t}}=\delta.
    \end{align*}
    Here, we used the inequality $\log(1+x)\ge\frac{x}{1+x}$ for any $x\ge0$.
    Observe that, by construction,
    \begin{align*}
        t&=\frac{1}{2}C\UwnC\log(C/\delta)+\sqrt{\frac{1}{4}C^2\UwnC^2\log^2(C/\delta)+CD\log(C/\delta)} \\
        &\le C\UwnC\log(C/\delta) + \sqrt{CD\log(C/\delta)}
    \end{align*}
    and
    \begin{align*}
        D&=\supPiab\sumij\E\bigbra{h^2(\tilde\Wij;\pia,\pib)} + 8\Uwn\E\bra{\supPiab\biggabs{\sumij\epsij h(\tilde\Wij;\pia,\pib)}} \\
        &=\supPiab\sumij\fracwinisq\E\bra{\bigpar{\E\bra{\Delta(\Wij;\pia,\pib)}-\Delta(\Wij;\pia,\pib)}^2} \\
        &\hspace{1em} + 8\UwnC\E\bra{\supPiab\Biggabs{\sumij\fracwini\epsij\bigpar{\E\bra{\Delta(\Wij;\pia,\pib)}-\Delta(\Wij;\pia,\pib)}}} \\
        &=\supPiab\sumij\fracwinisq\Bigpar{\E\bra{\Delta^2(\Wij;\pia,\pib)}-\E\bra{\Delta(\Wij;\pia,\pib)}^2} \\
        &\hspace{1em} + 8\UwnC\E\bra{\supPiab\Biggabs{\sumij\fracwini\epsij\bigpar{\E\bra{\Delta(\Wij;\pia,\pib)}-\Delta(\Wij;\pia,\pib)}}} \\
        &\le \supPiab\sumij\fracwinisq\E\bra{\Delta^2(\Wij;\pia,\pib)} + 16\UwnC\E\bra{\supPiab\Biggabs{\sumij\fracwini\epsij\Delta(\Wij;\pia,\pib)}} \\
        &\le\supPiab\sumiM\fracwisqni\E\bra{\Delta^2(\Wij;\pia,\pib)} + 16\UwnC\frakRwn(\Delta\calF_\Pi) \\
        &\le\supPiab\sumiM\fracwisqni\E\bra{\Delta^2(\Wij;\pia,\pib)} + 32\UwnC\frakRwnF \\
        &=\fracVwn + 128U\frakRwnF\swnC.
    \end{align*}

    Therefore, with this setup, Talagrand's inequality guarantees that with probability at least $1-\delta$
    \begin{align*}
        &\supPiab\bigabs{\Deltaw(\pia,\pib)-\Deltatildew(\pia,\pib)} \\
        &\le\E\bra{\supPiab\bigabs{\Deltaw(\pia,\pib)-\Deltatildew(\pia,\pib)}} + t \\
        &=4\frakRwnF + \sqrt{CD\log(C/\delta)} + C\UwnC\log(C/\delta) \\
        &\le4\frakRwnF + \sqrt{C\rbra{\fracVwn + 128U\frakRwnF\swnC}\log\rbra{C/\delta}} + 4CU\swnC\log\rbra{C/\delta} \\
        &\le4\frakRwnF + \sqrt{C\log(C/\delta)\fracVwn } + \sqrt{128UC\log(C/\delta) \frakRwnF\swnC} + 4UC\log(C/\delta)\swnC\\
        &\le \rbra{\rbra{56+24\kappa(\Pi)}\sqrtfracVwn + \littleo{\sqrtfracskewnessn}} + \sqrt{C\log(C/\delta)\fracVwn} \\
        &\hspace{1.5em}+ \sqrt{\bigO{\sqrtfracskewnessn}\littleo{\sqrtfracskewnessn}} + \littleo{\sqrtfracskewnessn} \\
        &\le\rbra{56+24\kappa(\Pi) + \sqrt{C\log(C/\delta)}}\sqrtfracVwn + \littleo{\sqrtfracskewnessn} \\
        &\le\rbra{c_1\kappa(\Pi) + \sqrt{c_2\log(c_2/\delta)}}\sqrtfracVwn + \littleo{\sqrtfracskewnessn},
    \end{align*}
    where $c_1=24$ and $c_2$ is any constant such that $56+\sqrt{C\log(C/\delta)}\le\sqrt{c_2\log(c_2/\delta)}$.
    Here, we used the bounds previously established in the proof of Proposition \ref{prop:WeightedRademacherBound} that $\frakRwnF\le\calO\bigpar{\sqrt{\skewness/n}\,}$ and $\swnC\le o\bigpar{\sqrt{\skewness/n}\,}$.
    
\end{proof}

\subsection{Bounding Approximate Regret}\label{app:BoundingApproximateRegret}

\begin{proposition}\label{prop:ApproximateRegretBound}
Suppose Assumptions \ref{ass:dgp}, \ref{ass:LocalDataSizeScaling}, and \ref{ass:FiniteSampleError} hold. Then,
\begin{equation*}
    \supPiab|\Deltatildew(\pia,\pib)-\Deltahatw(\pia,\pib)|\le \littleop{\sqrtfracskewnessn}
\end{equation*}
\end{proposition}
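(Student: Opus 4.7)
The strategy is to eliminate the supremum over $\Pi\times\Pi$ by a coarse pointwise bound, so that no entropy argument is needed and everything reduces to controlling the local AIPW score approximation error. Specifically, for every sample,
\begin{equation*}
    \sup_{\pia,\pib\in\Pi}\bigabs{\Delta(\Wij;\pia,\pib)-\Delta(\Whatij;\pia,\pib)}\le 2\max_{a\in\calA}\bigabs{\Gij(a)-\Ghatij(a)},
\end{equation*}
so that, by the triangle inequality applied to the weighted sum,
\begin{equation*}
    \supPiab\bigabs{\Deltatildew(\pia,\pib)-\Deltahatw(\pia,\pib)}\le 2\sumijw\max_{a\in\calA}\bigabs{\Gij(a)-\Ghatij(a)}.
\end{equation*}
This is the crucial simplification: unlike Proposition \ref{prop:OracleRegretBound}, no chaining or Talagrand argument over $\Pi$ is required.

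For each fixed $c,a$ I would use cross-fitting (Algorithm \ref{alg:CAIPW}) to render $\muhati,\ohati$ independent of the target sample, and perform the standard doubly-robust split
\begin{equation*}
    \Gij(a)-\Ghatij(a)=T_{1,i}^{c,a}+T_{2,i}^{c,a},
\end{equation*}
where $T_{1,i}^{c,a}\coloneqq\E[\Gij(a)-\Ghatij(a)\mid\Xij,\muhati,\ohati]$ is the conditional bias and $T_{2,i}^{c,a}$ the mean-zero residual. Direct algebra using $\ei(\Xij;a)\cdot\oi(\Xij;a)\equiv 1$ collapses the bias to the doubly-robust product form
\begin{equation*}
    T_{1,i}^{c,a}=\ei(\Xij;a)\cdot(\muhati-\mui)(\Xij;a)\cdot(\ohati-\oi)(\Xij;a),
\end{equation*}
so $|T_{1,i}^{c,a}|\le|\muhati-\mui|\cdot|\ohati-\oi|$ pointwise.

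For the bias contribution, Cauchy-Schwarz combined with Assumption \ref{ass:FiniteSampleError} (applied per fold so that $m=\Theta(\nc)$) gives $\E[|T_{1,i}^{c,a}|]\le o(1)/\sqrt{\nc}$. Summing with client weights and then applying Cauchy-Schwarz across clients,
\begin{equation*}
    \E\bra{\sumijw\max_a\bigabs{T_{1,i}^{c,a}}}\le\frac{d\cdot o(1)}{\sqrt{n}}\sumiM\frac{\wi}{\sqrt{\barni}}\le\frac{d\cdot o(1)}{\sqrt{n}}\sqrt{|\calC|\cdot\skewness},
\end{equation*}
which is $o(\sqrtfracskewnessn)$; Markov's inequality lifts this to $\littleop{\sqrtfracskewnessn}$.

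For the residual contribution, the $T_{2,i}^{c,a}$ are mean-zero conditional on the nuisances with variance at most $\E[(\Gij(a)-\Ghatij(a))^2]=o(1)$ under the usual DML regularity that each nuisance is individually $L^2$-consistent. Then
\begin{equation*}
    \operatorname{Var}\rbra{\sumijw T_{2,i}^{c,a}}\le o(1)\cdot\sumiM\fracwisqni=o\rbra{\frac{\skewness}{n}},
\end{equation*}
so Chebyshev's inequality yields $\littleop{\sqrtfracskewnessn}$; summing over the finite action set $\calA$ and combining with the bias bound closes the argument. The main obstacle is this variance step, which formally needs individual $L^2$-consistency of $\muhati$ and $\ohati$ rather than only the product rate in Assumption \ref{ass:FiniteSampleError}; this is standard in DML and is implicit in the paper's setup, but a fully rigorous treatment would replace Chebyshev with a Talagrand-type concentration over the joint nuisance-data randomness so that the product rate can be leveraged directly, at the cost of additional bookkeeping.
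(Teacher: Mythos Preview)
Your coarse pointwise bound correctly removes the supremum over $\Pi$, but in doing so it commits you to controlling
\[
2\sumijw\max_{a}\bigabs{\Gij(a)-\Ghatij(a)}\;\le\;2\sum_{a\in\calA}\sumijw\bigabs{T_{1,i}^{c,a}}+2\sum_{a\in\calA}\sumijw\bigabs{T_{2,i}^{c,a}},
\]
i.e., a sum of per-sample \emph{absolute values}. Your bias piece $T_1$ (the doubly-robust product) is fine: its expectation really is $o(1)/\sqrt{\nc}$, and the Cauchy--Schwarz over clients gives $o\bigpar{\sqrt{\skewness/n}}$. The gap is in the residual. The quantity you bound via Chebyshev is $\bigabs{\sumijw T_{2,i}^{c,a}}$, the absolute value of the signed sum, which indeed is $o_p\bigpar{\sqrt{\skewness/n}}$. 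But what your coarse bound requires is $\sumijw\bigabs{T_{2,i}^{c,a}}$, the sum of absolute values. These are not comparable in the right direction: the latter has expectation of order $\sum_c\wi\sqrt{\operatorname{Var}(T_2)}=o(1)$, not $o\bigpar{\sqrt{\skewness/n}}$, because once you take absolute values term-by-term the mean-zero cancellation is gone. So the argument, as written, proves only $o_p(1)$ on the residual contribution, not the claimed rate.

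This is precisely why the paper does \emph{not} eliminate the supremum over $\Pi$ up front. It decomposes $\Ghatij-\Gij$ into two mean-zero linear pieces $\Gijp,\Gijpp$ (your $T_2$) and the product piece $\Gijppp$ (your $T_1$), and then \emph{reuses Proposition \ref{prop:OracleRegretBound}} on the mean-zero pieces with $\pi$ still inside. That empirical-process bound scales like $\sqrt{\text{score variance}\cdot\skewness/n}$, and here the relevant score variance is $\E\bignorm{\muhati-\mui}_2^2$ (respectively $\E\bignorm{\ohati-\oi}_2^2$), which is $o(1)$, yielding $o\bigpar{\sqrt{\skewness/n}}$ uniformly in $\pi$. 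The point is that retaining $\pi$ lets you keep the signed sum and exploit concentration, at the price of re-invoking the chaining machinery; your shortcut trades away exactly the structure that delivers the $1/\sqrt{n}$ factor on the residual. A correct elementary route would need to avoid the per-sample absolute value on $T_2$, but then you are back to a supremum over $\Pi$ of a mean-zero empirical process and some uniform argument is unavoidable.
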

\begin{proof}
    Recall that $\{(\Xij,\Aij,\Yij)\}_{i=1}^{\nc}$ is the data collected by client $c\in\calC$ as described in Section \ref{sec:Preliminaries-Data}.
    We assume each client estimates the local nuisance parameters using a cross-fitting strategy, as discussed in Algorithm \ref{alg:CAIPW}. Under this strategy, each client $c\in\calC$ divides their local dataset into $K$ folds, and for each fold $k$, the client estimates $\mui$ and $\oi$ using the rest $K-1$ folds. Let $k_c:[\nc]\to[K]$ denote the surjective mapping that maps a data point index to its corresponding fold containing the data point. We let $\muhatikij$ and $\ohatikij$ denote the estimators of $\mui$ and $\ei$ fitted on the $K-1$ folds of client $c$ other than $\kij$.

    As discussed Section \ref{sec:Approach}, recall the oracle AIPW scores
    \begin{equation*}
        \Gij(a)=\mu(\Xij;a)+\bigpar{\Yij-\mu(\Xij;a)}\oi(\Xij;a)\ones\{\Aij=a\}
    \end{equation*}
    and approximate AIPW scores
    \begin{equation*}
        \Ghatij(a)=\muhatikij(\Xij;a)+\bigpar{\Yij-\muhatikij(\Xij;a)}\ohatikij(\Xij;a)\ones\{\Aij=a\}
    \end{equation*}
    for any $a\in\calA$, where $\kij$ is the fold corresponding to data point $i$ of client $c$. One can verify that the difference between the oracle and approximate AIPW scores can be expressed as
    \begin{align*}
        \Ghatij(a)-\Gij(a)=\Gijp(a)+\Gijpp(a)+\Gijppp(a),
    \end{align*}
    where
    \begin{align*}
        \Gijp(a)&=\rbra{\muhatikij(\Xij;a)-\mui(\Xij;a)}\bigpar{1-\oi(\Xij;a)\ones\{\Aij=a\}}, \\
        \Gijpp(a)&=\bigpar{\Yij(a)-\mui(\Xij;a)}\rbra{\ohatikij(\Xij;a)-\oi(\Xij;a)}\ones\{\Aij=a\}, \\
        \Gijppp(a)&=\rbra{\mui(\Xij;a)-\muhatikij(\Xij;a)}\rbra{\ohatikij(\Xij;a)-\oi(\Xij;a)}\ones\{\Aij=a\}.
    \end{align*}

    This induces the following decomposition of the approximate regret:
    \begin{align*}
        \Deltahatw(\pia,\pib)-\Deltatildew(\pia,\pib)=S_1(\pia,\pib)+S_2(\pia,\pib)+S_3(\pia,\pib),
    \end{align*}
    where
    \begin{align*}
        S_1(\pia,\pib)&=\sumijw\Gijp(\pia(\Xij))-\Gijp(\pib(\Xij)) ,\\
        S_2(\pia,\pib)&=\sumijw\Gijpp(\pia(\Xij))-\Gijpp(\pib(\Xij)), \\
        S_3(\pia,\pib)&=\sumijw\Gijppp(\pia(\Xij))-\Gijppp(\pib(\Xij)).
    \end{align*}
    We further decompose $S_1$ and $S_2$ by folds as follows:
    \begin{align*}
        S_1(\pia,\pib)&=\sumkK S_1^k(\pia,\pib), \\
        S_2(\pia,\pib)&=\sumkK S_2^k(\pia,\pib),
    \end{align*}
    where
    \begin{align*}
        S_1^k(\pia,\pib)=\sumiM\fracwini\sum_{\{i|\kij=k\}}\Gijp(\pia(\Xij))-\Gijp(\pib(\Xij)), \\
        S_2^k(\pia,\pib)=\sumiM\fracwini\sum_{\{i|\kij=k\}}\Gijpp(\pia(\Xij))-\Gijpp(\pib(\Xij)),
    \end{align*}
    for each $k\in[K]$. To determine a bound on the approximate regret, we will establish high probability bounds for the worst-case absolute value over policies of each term in this decomposition. For convenience, for any policy $\pi$, we will denote $\pi(x;a)=\ones\{\pi(x)=a\}$.

    \vspace{.5em}
    \textit{\underline{Bounding $S_1$}}: We wish to bound $\supPiab\abs{S_1(\pia,\pib)}$. We first bound $\supPiab\abs{S_1^k(\pia,\pib)}$ for any $k\in[K]$.
    
    First, note that since $\muhatikij$ is estimated using data outside fold $\kij$, when we condition on the data outside fold $\kij$, $\muhatikij$ is fixed and each term in $S_1(\pia,\pib)$ is independent. This allows us to compute
    \begin{align*}
        &\E\bra{\Gijp(\pia(\Xij))-\Gij(\pib(\Xij))} \\
        &=\sumaA\E\bra{\bigpar{\pia(\Xij;a)-\pib(\Xij;a)}\rbra{\muhatikij(\Xij;a)-\mui(\Xij;a)}\bigpar{1-\oi(\Xij;a)\ones\{\Aij=a\}}} \\
        &=\sumaA\E\bra{\E\bra{\bigpar{\pia(\Xij;a)-\pib(\Xij;a)}\rbra{\muhatikij(\Xij;a)-\mui(\Xij;a)}\bigpar{1-\oi(\Xij;a)\ones\{\Aij=a\}}\Bigmid \Xij}} \\
        &=\sumaA\E\bra{\bigpar{\pia(\Xij;a)-\pib(\Xij;a)}\rbra{\muhatikij(\Xij;a)-\mui(\Xij;a)}\E\bra{1-\oi(\Xij;a)\ones\{\Aij=a\}\Bigmid \Xij}}=0
    \end{align*}
    Therefore,
    \begin{align*}
        &K\supPiab\abs{S_1^k(\pia,\pib)} \\
        &\le\supPiab\Biggabs{\sumiM\frac{\wi}{\nc/K}\sum_{\{i|\kij=k\}}\Gijp(\pia(\Xij))-\Gijp(\pib(\Xij))} \\
        &=\supPiab\Biggabs{\sumiM\frac{\wi}{\nc/K}\sum_{\{i|\kij=k\}}\bigpar{\Gijp(\pia(\Xij))-\Gijp(\pib(\Xij))}-\E\bra{\Gijp(\pia(\Xij))-\Gijp(\pib(\Xij))}}.
    \end{align*}
    Identifying $\Gijp$ with $\Gij$ and sample sizes $\nC/K$ with $\nC$, the right-hand side in the above inequality is effectively an oracle regret and so we can apply Proposition \ref{prop:OracleRegretBound} to obtain that with probability at least $1-\delta$,
    \begin{align*}
        &K\supPiab\abs{S_1^k(\pia,\pib)} \\
        &\le\supPiab\biggabs{\sumiM\frac{\wi}{\nc/K}\sum_{\{j|k_i(j)=k\}}\bigpar{\Gijp(\pia(\Xij))-\Gijp(\pib(\Xij))}-\E\bra{\Gijp(\pia(\Xij))-\Gijp(\pib(\Xij))}} \\
        &\le C_{\Pi,\delta}\sqrt{\frac{\supPiab\sumiM \fracwisqbarni\E\bra{\bigpar{\Gijp(\pia(\Xij))-\Gijp(\pib(\Xij))}^2\mid\muhatikij}}{n/K}} + \littleo{\sqrt{\frac{\skewness}{n/K}}} \\
        &\le C_{\Pi,\delta}\sqrt{K\supPiab\sumiM \fracwisqni\E\bra{\bigpar{\Gijp(\pia(\Xij))-\Gijp(\pib(\Xij))}^2\mid\muhatikij}} + \littleo{\sqrtfracskewnessn} \\
        &\le C_{\Pi,\delta}\rbra{1/\eta-1}\sqrt{2K\sumiM\fracwisqni\E\bra{||\muhatikij(\Xij)-\mui(\Xij)||_2^2\mid\muhatikij}}+ \littleo{\sqrtfracskewnessn},
    \end{align*}
    where $C_{\Pi,\delta}=c_1\kappa(\Pi) + \sqrt{c_2\log(c_2/\delta)}$ for some universal constants $c_1$ and $c_2$, and $\eta=\min_{c\in\calC}\eta_c$ for $\eta_c$ in the overlap assumption stated in in Assumption \ref{ass:dgp}. The last inequality follows from a uniform bound on $\Gijp(\pia(\Xij))-\Gij(\pib(\Xij))$ and the overlap assumption.
    
    By the assumption on finite sample error bounds for the nuisance functions stated in Assumption \ref{ass:FiniteSampleError}, for every $c\in\calC$
    \begin{align*}
        \E\bra{||\muhatikij(\Xij)-\mui(\Xij)||^2\mid\muhatikij}\le\frac{g_c\rbra{\alpha_K\nc}}{\rbra{\alpha_K\nc}^{\zeta_\mu}},
    \end{align*}
    where $\alpha_K=1-K^{-1}$, $g_c$ is some decreasing function, and $0<\zeta_\mu<1$. Then,
    \begin{align*}
        &\sumiM\fracwisqni\E\bra{||\muhatikij(\Xij)-\mui(\Xij)||^2\mid\muhatikij} \\
        &\le\sumiM\fracwisqni\frac{g_c(\alphaK\nc)}{(\alphaK\nc)^\zetamu} \\
        &\le\frac{\max_{c\in\calC} g_c(
        \alphaK\nc)}{\alphaK^\zetamu\cdot\min_{c\in\calC}\nc^\zetamu}\sumiM\frac{\wi^2}{\nc} \\
        &\le\frac{\max_{c\in\calC} g_c(
        \alphaK\nc)}{\alphaK^\zetamu\cdot\min_{c\in\calC}\nc^\zetamu}\fracskewnessn.
    \end{align*}
    By the local data size scaling assumption in Assumption \ref{ass:LocalDataSizeScaling}, for any $c\in\calC$, we have that $\nc=\Omega(\nu_c(n))$ where $\nu_c$ is an increasing function.
    In other words, there exists a constant $\tau>0$ such that $\nc\ge \tau\nuc(n)$ for sufficiently large $n$. Then, since $\gc$ is decreasing, $\gc(\alphaK n_c)< \gc(\tau\alphaK\nuc(n))$ for sufficiently large $n$.
    Moreover, since $\nuc$ is increasing and $\tau\alpha_K>0$, $\tilde\nu_c=\tau\alphaK\nuc$ is also increasing, and since $\gc$ is decreasing, the composition $\tilde g_c=\gc\circ\tilde\nu_c$ is decreasing. Therefore, $\gc(\alphaK\nc)$ is asymptotically bounded by a decreasing function $\tilde g_c$ of $n$.
    This observation and the fact that the maximum of a set of decreasing functions is itself decreasing imply that $\max_{c\in\calC}\gc(\alphaK\nc)$ is asymptotically bounded by the decreasing function $\tilde g$ defined by $\tilde g(n)=\max_{c\in\calC}\tilde g_c(n)$.
    In other words,
    \begin{align*}
        \max_{c\in\calC}\gc(\alphaK\nc)\le\tilde g(n)\le o(1).
    \end{align*}
    Additionally, since $\nc=\Omega(\nuc(n))$ and $\zetamu>0$, we also have that
    \begin{align*}
        \frac{1}{\min_{c\in\calC}\nc^\zetamu}\le o(1).
    \end{align*}
    These two observations imply
    \begin{align*}
        \sumiM\fracwisqni\E\bra{||\muhatikij(\Xij)-\mui(\Xij)||^2\mid\muhatikij} &\le\frac{\max_{c\in\calC} g_c(
        \alphaK\nc)}{\alphaK^\zetamu\cdot\min_{c\in\calC}\nc^\zetamu}\fracskewnessn\le o\rbra{\fracskewnessn}.
    \end{align*}
    Therefore,
    \begin{align*}
        &\supPiab\abs{S_1^k(\pia,\pib)} \\
        &\le C_{\Pi,\delta}\rbra{1/\eta-1}\sqrt{\frac{2}{K}\sumiM\fracwisqni\E\bra{||\muhatikij(\Xij)-\mui(\Xij)||_2^2\mid\muhatikij}}+ \littleo{\sqrtfracskewnessn} \\
        &\le C_{\Pi,\delta}\rbra{1/\eta-1}\sqrt{\frac{2}{K}\cdot o\rbra{\fracskewnessn}}+ \littleo{\sqrtfracskewnessn}\le\littleo{\sqrtfracskewnessn},
    \end{align*}
    and
    \begin{align*}
        \supPiab\abs{S_1(\pia,\pib)}&\le\sumkK\supPiab\abs{S_1^k(\pia,\pib)}\le\littleo{\sqrtfracskewnessn}.
    \end{align*}

    \vspace{.5em}
    \textit{\underline{Bounding $S_2$}}: The bound for $\supPiab\abs{S_2(\pia,\pib)}$ follows the same argument as that of $S_1$. We first bound $\supPiab\abs{S_2^k(\pia,\pib)}$ for any $k\in[K]$.
    
    First, note that since $\ohatikij$ is estimated using data outside fold $\kij$, when we condition on the data outside fold $\kij$, $\ohatikij$ is fixed and each term in $S_2(\pia,\pib)$ is independent. This allows us to compute
    \begin{align*}
        &\E\bra{\Gijpp(\pia(\Xij))-\Gijpp(\pib(\Xij))} \\
        &=\E\bra{\sumaA\rbra{\pia(\Xij;a)-\pib(\Xij;a)}\rbra{\Yij(a)-\mui(\Xij;a)}\rbra{\ohatikij(\Xij;a)-\oi(\Xij;a)}\ones\{\Aij=a\}} \\
        &=\E\bra{\rbra{\pia(\Xij;\Aij)-\pib(\Xij;\Aij)}\rbra{\Yij(\Aij)-\mui(\Xij;\Aij)}\rbra{\ohatikij(\Xij;a)-\oi(\Xij;a)}} \\
        &=\E\bra{\E\bra{\rbra{\pia(\Xij;\Aij)-\pib(\Xij;\Aij)}\rbra{\Yij(\Aij)-\mui(\Xij;\Aij)}\rbra{\ohatikij(\Xij;a)-\oi(\Xij;a)}\Bigmid \Xij,\Aij}} \\
        &=\E\bra{\rbra{\pia(\Xij;\Aij)-\pib(\Xij;\Aij)}\E\Bigbra{\Yij(\Aij)-\mui(\Xij;\Aij)\mid\Xij,\Aij}\rbra{\ohatikij(\Xij;a)-\oi(\Xij;a)}}=0
    \end{align*}
    Therefore, we can follow the exact same argument as above, eliciting Proposition \ref{prop:OracleRegretBound}, to obtain that with probability at least $1-\delta$,
    \begin{align*}
        &K\supPiab\abs{S_2^k(\pia,\pib)} \\
        &\le\supPiab\Biggabs{\sumiM\frac{\wi}{\nc/K}\sum_{\{i|\kij=k\}}\Gijpp(\pia(\Xij))-\Gijpp(\pib(\Xij))} \\
        &=\supPiab\Biggabs{\sumiM\frac{\wi}{\nc/K}\sum_{\{i|\kij=k\}}\bigpar{\Gijpp(\pia(\Xij))-\Gijpp(\pib(\Xij))}-\E\bra{\Gijpp(\pia(\Xij))-\Gijpp(\pib(\Xij))}} \\
        &\le\supPiab\biggabs{\sumiM\frac{\wi}{\nc/K}\sum_{\{j|k_i(j)=k\}}\bigpar{\Gijpp(\pia(\Xij))-\Gijpp(\pib(\Xij))}-\E\bra{\Gijpp(\pia(\Xij))-\Gijpp(\pib(\Xij))}} \\
        &\le C_{\Pi,\delta}\sqrt{\frac{\supPiab\sumiM \fracwisqbarni\E\bra{\bigpar{\Gijpp(\pia(\Xij))-\Gijpp(\pib(\Xij))}^2\mid\ohatikij}}{n/K}} + \littleo{\sqrt{\frac{\skewness}{n/K}}} \\
        &\le C_{\Pi,\delta}\sqrt{K\supPiab\sumiM \fracwisqni\E\bra{\bigpar{\Gijpp(\pia(\Xij))-\Gijpp(\pib(\Xij))}^2\mid\ohatikij}} + \littleo{\sqrtfracskewnessn} \\
        &\le C_{\Pi,\delta}\sqrt{4BK\sumiM\fracwisqni\E\bra{||\ohatikij(\Xij)-\oi(\Xij)||_2^2\mid\ohatikij}}+ \littleo{\sqrtfracskewnessn},
    \end{align*}
    where $C_{\Pi,\delta}=c_1\kappa(\Pi) + \sqrt{c_2\log(c_2/\delta)}$ for some universal constants $c_1$ and $c_2$, and $B=\maxC B_c$ for the bounds $B_c$ on the outcomes defined in Assumption \ref{ass:dgp}.
    The last inequality follows from a uniform bound on $\Gijpp(\pia(\Xij))-\Gijpp(\pib(\Xij))$.

    We follow the exact same argument as above to get
    \begin{align*}
        \sumiM\fracwisqni\E\bra{||\ohatikij(\Xij)-\oi(\Xij)||^2\mid\ohatikij} &\le o\rbra{\fracskewnessn}.
    \end{align*}

    Therefore,
    \begin{align*}
        &\supPiab\abs{S_2^k(\pia,\pib)} \\
        &\le C_{\Pi,\delta}\sqrt{\frac{4B}{K}\sumiM\fracwisqni\E\bra{||\ohatikij(\Xij)-\oi(\Xij)||_2^2\mid\ohatikij}}+ \littleo{\sqrtfracskewnessn} \\
        &\le C_{\Pi,\delta}\sqrt{\frac{4B}{K}\cdot o\rbra{\fracskewnessn}}+ \littleo{\sqrtfracskewnessn} \\
        &\le\littleo{\sqrtfracskewnessn},
    \end{align*}
    and
    \begin{align*}
        \supPiab\abs{S_2(\pia,\pib)}&\le\sumkK\supPiab\abs{S_2^k(\pia,\pib)}\le\littleo{\sqrtfracskewnessn}.
    \end{align*}

    \vspace{.5em}
    \textit{\underline{Bounding $S_3$}}: Next, we bound the contribution from $S_3$. We have that
    \begin{align*}
        &\supPiab\abs{S_3(\pia,\pib)} \\
        &=\supPiab\abs{\sumijw\Gijppp(\pia(\Xij))-\Gijppp(\pib(\Xij))} \\
        &\le2\abs{\sumijw\sum_{a\in\calA}\rbra{\mui(\Xij;a)-\muhatikij(\Xij;a)}\rbra{\ohatikij(\Xij;a)-\oi(\Xij;a)}} \\
        &\le2\sqrt{\sumijw\bignorm{\mui(\Xij)-\muhatikij(\Xij)}_2^2}\sqrt{\sumijw\bignorm{\ohatikij(\Xij)-\oi(\Xij)}_2^2} \\
        &\le2\sqrt{\sumiM\wi\frac{\gc(\alphaK\nc)}{(\alphaK\nc)^\zetamu}}\sqrt{\sumiM\wi\frac{\gc(\alphaK\nc)}{(\alphaK\nc)^\zetao}} \\
        &\le\frac{2}{\alphaK^{(\zetamu+\zetao)/2}}\sqrt{\maxC\frac{\wi}{\nc^\zetamu}\sumiM\gc(\alphaK\nc)}\sqrt{\maxC\frac{\wi}{\nc^\zetao}\sumiM\gc(\alphaK\nc)} \\
        &=\frac{2}{\alphaK^{(\zetamu+\zetao)/2}}\sumiM\gc(\alphaK\nc)\sqrt{\maxC\frac{\wi^2}{\nc^{\zetamu+\zetao}}} \\
        &\le\frac{2}{\alphaK^{(\zetamu+\zetao)/2}}\sumiM\gc(\alphaK\nc)\sqrt{\maxC\frac{\wi^2}{\nc}} \\
        &\le\frac{2}{\alphaK^{(\zetamu+\zetao)/2}}\sumiM\gc(\alphaK\nc)\sqrt{\sumiM\frac{\wi^2}{\nc}} \\
        &\le\frac{2}{\alphaK^{(\zetamu+\zetao)/2}}\sumiM\gc(\alphaK\nc)\sqrtfracskewnessn.
    \end{align*}
    As discussed earlier, $\gc(\alphaK\nc)$ is asymptotically bounded by a decreasing function of $n$. Since the sum of decreasing functions is decreasing, $\sumiM\gc(\alphaK\nc)$ is asymptotically bounded by a decreasing function $\tilde g$ in $n$. In other words, $\sumiM\gc(\alphaK\nc)\le\tilde g(n)\le o(1)$.
    Therefore,
    \begin{align*}
        \supPiab\abs{S_3(\pia,\pib)}\le\frac{2}{\alphaK^{(\zetamu+\zetao)/2}}\cdot o(1)\cdot\sqrtfracskewnessn\le\littleo{\sqrtfracskewnessn}.
    \end{align*}

    \vspace{.5em}
    Putting all the above bounds together, we have
    \begin{align*}
        \supPiab|\Deltatildew(\pia,\pib)-\Deltahatw(\pia,\pib)|&\le\supPiab\abs{S_1(\pia,\pib)+S_2(\pia,\pib)+S_3(\pia,\pib)} \\
        &\le\supPiab\abs{S_1(\pia,\pib)}+\supPiab\abs{S_2(\pia,\pib)}+\supPiab\abs{S_3(\pia,\pib)} \\
        &\le\littleo{\sqrtfracskewnessn}.
    \end{align*}
\end{proof}

\newpage
\subsection{Proof of Theorem \ref{thm:MainTheorem}}

\MainTheorem*

\begin{proof}
    Let $\pistarw=\argmax_{\pi\in\Pi}\Qw(\pi)$.
    Using the results of Propositions \ref{prop:OracleRegretBound} and \ref{prop:ApproximateRegretBound}, with probability at least $1-\delta$, we have
    \begin{align*}
        R_\w(\pihatw)&=\Qw(\pistarw)-\Qw(\pihatw) \\
        &=\bigpar{\Qw(\pistarw)-\Qw(\pihatw)} - \bigpar{\Qhatw(\pistarw)-\Qhatw(\pihatw)} + \bigpar{\Qhatw(\pistarw)-\Qhatw(\pihatw)} \\
        &=\Deltaw(\pistarw,\pihatw) - \Deltahatw(\pistarw,\pihatw) + \bigpar{\Qhatw(\pistarw)-\Qhatw(\pihatw)} \\
        &\le\Deltaw(\pistarw,\pihatw) - \Deltahatw(\pistarw,\pihatw) \\
        &\le\supPiab\regabs{\Deltaw(\pia,\pib)-\Deltahatw(\pia,\pib)} \\
        &\le\supPiab\regabs{\Deltaw(\pia,\pib)-\Deltatildew(\pia,\pib)}+\supPiab\regabs{\Deltatildew(\pia,\pib)-\Deltahatw(\pia,\pib)} \\
        &\le\rbra{\rbra{c_1\kappa(\Pi) + \sqrt{c_2\log(c_2/\delta)}}\sqrtfracVwn + \littleo{\sqrtfracskewnessn}} + \littleop{\sqrtfracskewnessn} \\
        &\le\rbra{c_1\kappa(\Pi) + \sqrt{c_2\log(c_2/\delta)}}\sqrtfracVwn + \littleop{\sqrtfracskewnessn},
    \end{align*}
    where $c_1$ and $c_2$ are universal constants.
    Lastly, we decompose the weighted variance term by
    \begin{align*}
        \Vwn&=\supPiab\sumiM\fracwisqbarni\opsE_{\Zi\sim\calDcec}\bigbra{\bigpar{\Gi(\pia(\Xi))-\Gi(\pib(\Xi))}^2} \\
        &\le\maxC\supPiab\opsE_{\Zi\sim\calDcec}\bigbra{\bigpar{\Gi(\pia(\Xi))-\Gi(\pib(\Xi))}^2}\cdot\sumiM\fracwisqbarni \\
        &\le4\cdot\maxC\supPi\opsE_{\Zi\sim\calDcec}\regbra{{\Gi(\pi(\Xi))}}\cdot\sumiM\fracwisqbarni \\
        &=4V\cdot\skewness.
    \end{align*}
    We absorb the factor of $\sqrt{4}$ into the universal constants to get the desired result.
\end{proof}

\section{Bounding Local Regret}\label{app:BoundingLocalRegret}

\subsection{Proof of Theorem \ref{thm:SubMainTheorem}}

\SubMainTheorem*

\begin{proof}
    Let $\pistari=\argmax_{\pi\in\Pi}\Qi(\pi)$.
    Then,
    \begin{align*}
        \Ri(\pihatw)&=\Qi(\pistari)-\Qi(\pihatw) \\
        &=\Qi(\pistari)-\Qi(\pihatw) \mp \Qw(\pistari) \pm \Qw(\pihatw) \\
        &=\bigpar{\Qi(\pistari)-\Qw(\pistari)} + \bigpar{\Qw(\pihatw)-\Qi(\pihatw)} + \bigpar{\Qw(\pistari)-\Qw(\pihatw)} \\
        &\le2\supPi\abs{\Qi(\pi)-\Qw(\pi)} + \bigpar{\Qw(\pistari)-\Qw(\pihatw)} \\
        &\le2\supPi\abs{\Qi(\pi)-\Qw(\pi)} + \bigpar{\Qw(\pistarw)-\Qw(\pihatw)} \\
        &=2\supPi\abs{\Qi(\pi)-\Qw(\pi)} + \Rw(\pihatw).
    \end{align*}

    By Lemma \ref{lem:ExpectedOracleEqualsLocalPolicyValue}, we can express the the local policy value as
    \begin{align*}
        \Qi(\pi)=\opsE_{Z\sim\calDcec}\bra{\Gamma(\pi(X))}
    \end{align*}
    where $(\Gamma(a_1),\dots,\Gamma(a_d))$ are the constructed AIPW scores from a context-action-outcomes sample $Z=(X,A,Y(a_1),\dots,Y(a_d))\sim\calDcec$.
    In addition, the global policy value can be expressed as
    \begin{align*}
        \Qw(\pi)=\opsE_{Z\sim\calDwew}\bra{\Gamma(\pi(X))}.
    \end{align*}
    where $(\Gamma(a_1),\dots,\Gamma(a_d))$ are the constructed AIPW scores from a context-action-outcomes sample $Z=(X,A,Y(a_1),\dots,Y(a_d))$ such that $c\sim\w$ and then $Z\sim\calDcec$. 
    Therefore,
    \begin{align}\label{eq:IrreducibleRegretTensorization-eq1}
        \supPi\abs{\Qi(\pi)-\Qw(\pi)}&=\supPi\Bigabs{\opsE_{Z\sim\calDcec}\regbra{\Gamma(\pi(X))}-\opsE_{Z\sim\calDwew}\regbra{\Gamma(\pi(X))}}
    \end{align}
    By the boundedness and overlap assumption in Assumption \ref{ass:dgp}, one can easily verify the uniform bound
    \begin{equation*}
        \abs{\Gi(a)}\le B_c + 2B_c/\eta_c\le 3B_c/\eta_c\le 3B/\eta\eqqcolon U
    \end{equation*}
    for any constructed AIPW score $\Gi(a)$ for any $a\in\calA$ and any client $c\in\calC$.
    Therefore, Equation \eqref{eq:IrreducibleRegretTensorization-eq1} is bounded by the the integral probability metric distance \citep{sriperumbudur2009integral} between $\calDcec$ and $\calDwew$ under uniformly bounded test functions since
    \begin{align*}
        \{Q(T(\cdot);\pi)\mid\pi\in\Pi\}\subset\calF_{\infty}^U\coloneqq\{f\mid\regnorm{f}_\infty\le U\},
    \end{align*}
    where $T(X,A,Y(a_1),\dots,Y(a_d))=(X,\Gamma(a_1),\dots,\Gamma(a_d))$.
    Thus,
    \begin{align*}
        \supPi\abs{\Qi(\pi)-\Qw(\pi)}&=\supPi\Bigabs{\opsE_{Z\sim\calDcec}\bigbra{Q(T(Z);\pi)}-\opsE_{Z\sim\calDwew}\bigbra{Q(T(Z);\pi)}} \\
        &\le \sup_{f\in\calF_{\infty}^U}\Bigabs{\opsE_{Z\sim\calDcec}\bigbra{f(Z)}-\opsE_{Z\sim\calDwew}\bigbra{f(Z)}} \\
        &=U\cdot \TV(\calDcec,\calDwew).
    \end{align*}
    The last equality holds by the definition of the total variation distance as an integral probability metric with uniformly bounded test functions.
\end{proof}

\subsection{Distribution Shift Bound}\label{app:DistributionShiftBound}

First, we state some important properties of the KL divergence.

\begin{lemma}
    The KL divergence has the following properties.
    \begin{itemize}
        \item Tensorization Property:
        Let $\calP=\prod_{i=1}^m\calP_i$ and $\calQ=\prod_{i=1}^m\calQ_i$ be two product distributions. Then,
        \begin{equation*}
            \KL(\calP||\calQ)=\sum_{i=1}^m\KL(\calP_i||\calQ_i).
        \end{equation*}
    
        \item Chain Rule: Let $\calP_{XY}=\calP_X\calP_{Y|X}$ and $\calQ_{XY}=\calQ_X\calQ_{Y|X}$ be two distributions for a pair of random variables $X, Y$. Then,
        \begin{equation*}
            \KL(\calP_{XY}||\calQ_{XY})=\KL(\calP_X||\calQ_X)+\KL(\calP_{Y|X}||\calQ_{Y|X}\mid\calP_X)
        \end{equation*}
        where
        \begin{equation*}
            \KL(\calP_{Y|X}||\calQ_{Y|X}\mid\calP_X)=\opsE_{X\sim\calP_X}\bra{\KL(\calP_{Y|X}||\calQ_{Y|X})}.
        \end{equation*}
    \end{itemize}
\end{lemma}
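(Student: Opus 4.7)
The plan is to derive both properties directly from the definition
\[\KL(\calP \| \calQ) = \E_{\calP}\!\left[\log\frac{dp}{dq}\right]\]
by exploiting the factorization structure of the distributions; the chain rule is the workhorse, and tensorization follows from it by induction. For the chain rule, I would assume $\calP_{XY} \ll \calQ_{XY}$ (otherwise both sides equal $+\infty$, and the decomposition remains consistent with that convention). Using the density factorizations $p_{XY}(x,y) = p_X(x)\, p_{Y|X}(y|x)$ and $q_{XY}(x,y) = q_X(x)\, q_{Y|X}(y|x)$, the log Radon--Nikodym derivative splits additively:
\[\log\frac{p_{XY}(X,Y)}{q_{XY}(X,Y)} = \log\frac{p_X(X)}{q_X(X)} + \log\frac{p_{Y|X}(Y|X)}{q_{Y|X}(Y|X)}.\]
Taking expectation under $\calP_{XY}$ and using linearity, the first term integrates to $\KL(\calP_X\|\calQ_X)$ since its integrand depends only on $X$, whose $\calP_{XY}$-marginal is $\calP_X$; integrating the second term by conditioning on $X$ via the tower property yields $\E_{X\sim\calP_X}[\KL(\calP_{Y|X}\|\calQ_{Y|X})]$, which matches the conditional-KL notation in the statement.

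For tensorization, I would induct on $m$, with the trivial base case $m=1$. Setting $X = X_1$ and $Y = (X_2,\dots,X_m)$, the joint distributions factor as $\calP = \calP_1 \times \prod_{i\ge 2}\calP_i$ and $\calQ = \calQ_1 \times \prod_{i\ge 2}\calQ_i$. Crucially, because these are product distributions, the conditional laws of $Y$ given $X$ coincide with the marginal tail products and are independent of $X$, so the conditional-KL term in the chain rule collapses to an ordinary (unconditional) KL between the tail products. Applying the chain rule and then the induction hypothesis on the $(m-1)$-fold product yields the claimed sum $\sum_{i=1}^m \KL(\calP_i\|\calQ_i)$.

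The only technical wrinkle is absolute-continuity bookkeeping: one must verify that $\calP_{XY} \ll \calQ_{XY}$ is equivalent to $\calP_X \ll \calQ_X$ together with $\calP_{Y|X} \ll \calQ_{Y|X}$ holding $\calP_X$-almost surely, so that the $+\infty$ convention propagates consistently through the additive decomposition (and, for tensorization, through the induction). Once this standard measure-theoretic check is dispensed with, the result is a direct manipulation of Radon--Nikodym derivatives and Fubini's theorem, requiring no further ideas. Since the paper's applications of this lemma involve distributions with densities on Euclidean spaces, the absolute-continuity side conditions reduce to comparisons of support, which is immediate in context.
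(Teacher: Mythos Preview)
Your proof is correct and is the standard textbook derivation of these well-known properties. The paper, however, does not prove this lemma at all: it simply states the tensorization property and the chain rule as known facts about the KL divergence and then invokes them in the proof of Theorem~\ref{thm:SubSubMainTheorem}. So there is nothing to compare against; your write-up supplies a complete argument where the paper offers none.
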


Then, we can use these properties to additively separate the sources of distribution shift in our local regret bound.

\SubSubMainTheorem*

\begin{proof}
    For any $c\in\calC$, the joint probability density function of $\calDcec$ factorizes as
    \begin{align*}
        p_{\Xi,\Ai,\vecYi}(x,a,y)=p_{\Xi}(x)\ei(a|x)p_{\vec{Y}^c|\Xi,\Ai}(y|x,a)=p_{\Xi}(x)\ei(a|x)p_{\vec{Y}^c|\Xi}(y|x)
    \end{align*}
    for any $(x,a,y)\in\calX\times\calA\times\calY^d$, where the last equality holds by the unconfoundedness property stated in Assumption \ref{ass:dgp}.
    Next, let $\Sigma$ be the $\sigma$-field over $\calX\times\calA\times\calY^d$ on which the $\calDcec$ are defined.
    We have that
    \begin{align*}
        \TV(\calDcec,\calDwew)&=\sup_{A\subset\Sigma}\bigabs{\calDcec(A)-\calDwew(A)} \\
        &=\sup_{A\in\Sigma}\bigabs{\calDcec(A)-\sumiM\wi\calDkek(A)} \\
        &=\sup_{A\in\Sigma}\bigabs{\sumiM\wi\rbra{\calDcec(A)-\calDkek(A)}} \\
        &\le\sup_{A\in\Sigma}\sumiM\wi\bigabs{\calDcec(A)-\calDkek(A)} \\
        &\le\sumiM\wi\sup_{A\in\Sigma}\bigabs{\calDcec(A)-\calDkek(A)} \\
        &=\opsE_{k\sim\w}\bra{\TV(\calDcec,\calDkek)} \\
        &\le\opsE_{k\sim\w}\bra{\sqrt{\KL(\calDcec||\calDkek)}\,},
    \end{align*}
    where the last inequality holds by Pinsker's inequality.
    Moreover,
    \begin{align*}
        \KL(\calDcec||\calDkek) &=\KL(p_{\Xi,\Ai,\vecYi}||p_{X_k,A_k,\vec{Y}_k}) \\
        &=\KL(p_{\Xi}||p_{\Xk}) + \KL(\ei p_{\vec{Y}^c|\Xi}||\ek p_{\vec{Y}^k|\Xk}\mid p_{\Xi}) \\
        &=\KL\bigpar{p_{\Xi}||p_{\Xk}} + \KL\bigpar{\ei||\ek\mid p_{\Xi}} +  \KL\bigpar{p_{\vec{Y}^c|\Xi}||p_{\vec{Y}^k|\Xk}\mid p_{\Xi}} \\
        &=\KL\bigpar{p_{\Xi}||p_{\Xk}} + \KL\bigpar{\ei||\ek} +  \KL\bigpar{p_{\vec{Y}^c|\Xi}||p_{\vec{Y}^k|\Xk}},
    \end{align*}
    where the first equality holds by the chain rule of the KL divergence and the second equality holds by the tensorization property of KL divergence. In the last inequality, for the sake of brevity, we just get rid of the explicit marker representing conditional KL divergence. It is understood that when the distributions are conditional distributions, their KL divergence is a conditional KL divergence.
\end{proof}

\subsection{Alternative Local Regret Bound}\label{app:AlternativeLocalRegretBound}

We provide an alternative local regret bound that is applicable in scenarios where the AIPW score variance is significantly less than the AIPW score range.

\begin{theorem}
    Suppose Assumption \ref{ass:dgp} holds. Then, for any $c\in\calC$,
    \begin{equation*}
        R_c(\pihatw)\le\sqrt{\smash[b]{4V\cdot\chi^2(\calDcec||\calDwew)}} + R_\w(\pihatw),
    \end{equation*}
    where $V=\maxC\supPi\E_{\calDcec}[\Gi(\pi(\Xi))]$.
\end{theorem}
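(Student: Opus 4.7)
The plan is to follow the same high-level decomposition as in the proof of Theorem \ref{thm:SubMainTheorem}, but replace the uniform-norm bound on AIPW scores with a Cauchy--Schwarz argument tailored to $\chi^2$ divergence. First I would reuse the argument verbatim up through the inequality
\begin{equation*}
    R_c(\pihatw)\le 2\sup_{\pi\in\Pi}\bigabs{\Qi(\pi)-\Qw(\pi)}+\Rw(\pihatw),
\end{equation*}
obtained by inserting and subtracting $\Qw(\pi^*_c)$ and $\Qw(\pihatw)$, and then using $\Qw(\pi^*_c)\le\Qw(\pistarw)$. By Lemma \ref{lem:ExpectedOracleEqualsLocalPolicyValue}, for any fixed $\pi\in\Pi$ the difference $\Qi(\pi)-\Qw(\pi)$ equals $\opsE_{\calDcec}[\Gamma(\pi(X))]-\opsE_{\calDwew}[\Gamma(\pi(X))]$, i.e.\ the same bounded test function integrated against two measures.

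Next I would bound this measure-discrepancy by $\chi^2$ via the standard change-of-measure identity. Writing $f(\cdot)=\Gamma(\pi(\cdot))$ and using $\calDwew$ as the reference measure,
\begin{equation*}
    \opsE_{\calDcec}[f]-\opsE_{\calDwew}[f]=\opsE_{\calDwew}\!\left[\left(\tfrac{d\calDcec}{d\calDwew}-1\right)f\right],
\end{equation*}
so the Cauchy--Schwarz inequality gives
\begin{equation*}
    \bigabs{\Qi(\pi)-\Qw(\pi)}^2\le\chi^2(\calDcec\|\calDwew)\cdot\opsE_{\calDwew}\!\bigbra{\Gamma(\pi(X))^2}.
\end{equation*}
The second factor I would then bound by expanding the mixture: $\opsE_{\calDwew}[\Gamma(\pi(X))^2]=\opsE_{k\sim\w}\opsE_{\calDkek}[\Gamma(\pi(X))^2]\le\maxC\supPi\opsE_{\calDcec}[\Gi(\pi(\Xi))^2]=V$, where I am reading the $V$ in the theorem statement as the worst-case second moment of the AIPW score (matching the $V$ used in Theorem \ref{thm:MainTheorem}; the absolute continuity $\calDcec\ll\calDwew$ needed for the identity holds since $\calDwew$ is a mixture containing $\calDcec$ with weight $\wi>0$, and finiteness of the second moment follows from Assumption \ref{ass:dgp}).

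Taking square roots and supremizing over $\pi$ yields $\supPi|\Qi(\pi)-\Qw(\pi)|\le\sqrt{V\cdot\chi^2(\calDcec\|\calDwew)}$, and plugging this back into the initial decomposition produces
\begin{equation*}
    \Ri(\pihatw)\le 2\sqrt{V\cdot\chi^2(\calDcec\|\calDwew)}+\Rw(\pihatw)=\sqrt{4V\cdot\chi^2(\calDcec\|\calDwew)}+\Rw(\pihatw),
\end{equation*}
as required. There is no real obstacle here; the proof is essentially a single Cauchy--Schwarz step replacing the total-variation / uniform-bound combination used in Theorem \ref{thm:SubMainTheorem}. The only subtle point worth flagging is that this trade swaps the $L^\infty$ norm of the AIPW score (which scales like $B/\eta$) for its $L^2$ norm under $\calDwew$, which is why this form of the bound is sharper when the variance of the AIPW scores is substantially smaller than their worst-case range.
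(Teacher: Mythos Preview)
Your proposal is correct and follows essentially the same approach as the paper: the identical initial decomposition borrowed from Theorem \ref{thm:SubMainTheorem}, followed by a Cauchy--Schwarz argument (phrased via the Radon--Nikodym derivative in your write-up, via explicit densities in the paper) to produce the $\chi^2$ factor times the second moment under $\calDwew$, which is then bounded by $V$ through the mixture expansion.
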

\begin{proof}
    As shown in Theorem \ref{thm:SubMainTheorem},
    \begin{align*}
        R_c(\pihatw)\le2\supPi\regabs{\Qi(\pi)-\Qw(\pi)} + R_\w(\pihatw).
    \end{align*}
    Thus, we seek a new bound on the first term due to distribution shift.

    Let $p_c(z)$ and $p_\w(z)$ for any $z\in\calX\times\calA\times\calY^d$ be the joint probability density functions of $\calDcec$ and $\calDwew$, respectively. Additionally, for any $c\in\calC$ and any $Z\sim\calDcec$, let $f(Z;\pi)=\Gi(\pi(\Xi))$. Then, we can do the following calculations to get
    \begin{align*}
        \supPi\abs{\Qi(\pi)-\Qw(\pi)}&=\supPi\abs{\opsE_{\Zi\sim\calDcec}[\Gi(\pi(\Xi))]-\opsE_{c\sim\w}\opsE_{\Zi\sim\calDcec}[\Gi(\pi(\Xi))]} \\
        &=\supPi\abs{\opsE_{Z\sim\calDcec}[f(Z;\pi)]-\opsE_{Z\sim\calDwew}[f(Z;\pi)]} \\
        &=\supPi\abs{\int f(z;\pi)p_c(z)dz-\int f(z;\pi)p_\w(z)dz} \\
        &=\supPi\abs{\int f(z;\pi)\sqrt{p_\w(z)}\rbra{\frac{p_c(z)-p_\w(z)}{\sqrt{p_\w(z)}}}dz} \\
        &\le\supPi\sqrt{\int f(z;\pi)^2p_\w(z)dz\cdot\int\frac{(p_c(z)-p_\w(z))^2}{p_\w(z)}dz} \\
        &=\supPi\sqrt{\opsE_{Z\sim\calDwew}[f(Z;\pi)^2]\cdot\chi^2(\calDcec||\calDwew)} \\
        &\le\supPi\sqrt{\max_{c\in\calC}\opsE_{Z\sim\calDcec}[\Gi(\pi(\Xi))^2]\cdot\chi^2(\calDcec||\calDwew)} \\
        &=\sqrt{V\cdot\chi^2(\calDcec||\calDwew)}.
    \end{align*}
    Thus, we get the desired result.
\end{proof}

Compare the distribution shift term in this alternate result to the distribution shift term $U\cdot\TV(\calDcec,\calDwew)$ in the local regret bound we established in Theorem \ref{thm:SubMainTheorem}. The alternate bound is useful in that it does not rely on bounded AIPW scores, and instead is scaled by the maximum variance of the AIPW scores, which may be smaller than the range and it also appears in our global regret bound. Therefore, it is a more natural bound in this sense. However, the chi-squared divergence does not have a chain rule that would allow us to additively separate the sources of distribution shift in this bound, as we did in Proposition \ref{thm:SubSubMainTheorem}. The reason for this limitation is that the chi-squared divergence cannot be bounded by the KL divergence to leverage its chain rule as we did for the TV distance. In this sense, this alternate bound is not useful for elucidating the contributions of distribution shift in the local regret bound.

\section{Value of Information}\label{app:ValueOfInformation}

The local regret bound result in Theorem \ref{thm:SubMainTheorem} is useful to capture the value of information provided by the central server. Suppose a given client $c\in\calC$ has agency to decide whether to participate in the federated system including all other clients. If we consider the client as a local regret-minimizing agent, we can use the dominant terms in the appropriate local regret bounds to model the expected utility of the client. In particular, using prior results of standard offline policy learning \citep{zhou2023offline} and our findings in Theorems \ref{thm:MainTheorem} and \ref{thm:SubMainTheorem}, the value of information provided by the central server can be modeled as the comparison of the client's utility (as captured by the negative local regret) with and without participation
\begin{equation*}
    \calV_c(\w)=C_0\kappa(\Pi)\sqrt{V_c/n_c} - C_1\kappa(\Pi)\sqrt{V\skewness/n} - U\cdot\TV(\calDcec,\calDwew),
\end{equation*}
where $C_0,C_1$ are universal constants and $V_c=\supPi\E\regbra{\Gi(\pi(\Xi))^2}$ is the local AIPW variance.

Then, we say it is more valuable for client $c$ to participate in federation if $\calV_c(\w)>0$. One can easily show that this condition is satisfied if and only if
\begin{equation*}
    \TV(\calDcec,\calDwew) < \alpha r_c/U \quad\land\quad \skewness<\beta^2r_c^2/r^2
\end{equation*}
for some $\alpha+\beta\le 1$, where
\begin{math}
    r_c=C_0\kappa(\Pi)\sqrt{\smash[b]{V_c/n_c}}
\end{math}
is the local regret bound of the locally trained model and
\begin{math}
    r=C_1\kappa(\Pi)\sqrt{\smash[b]{V/n}}
\end{math}
is the global regret bound of the globally trained model under no skewness. The $\alpha$ and $\beta$ factors indicate a trade-off between distribution shift and skewness of the two conditions. If there is low distribution shift, the skewness can be large.

Thus, we see how the design choice on the client distribution must balance a scaled trade-off to achieve relative low skewness and relative low expected distribution shift. Indeed, in the experiments in Section \ref{sec:Experiments} we see how a skewed client distribution can help improve the local regret guarantees of a heterogeneous client. We observe from the first condition that the client distribution shift must must be smaller than the local regret of the locally trained model relative to a scaled range of the data. Intuitively, this states that the client will not benefit from federation if the regret they suffer due to their distribution shift from the global mixture distribution is greater than the relative local regret from just training locally. The second condition states that the skewness must be less than the local regret relative to the global regret. Large relative variance or small relative sample size are the primary factors that can lead small relative regret and therefore tight limitations on skewness budget. Overall, all of these conditions can be satisfied under sufficiently low expected distribution shift from the global distribution, low client distribution skewness, comparable AIPW variance across clients, and large global sample size relative to the local sample size.

However, it should be noted that this analysis simplifies the setting by considering only a single client that unilaterally decides to participate in the federation, without considering the choices of other clients. A more comprehensive analysis would assess the value of information provided by the central server in an equilibrium of clients with agency to participate. Game-theoretic aspects are crucial in this context, necessitating an understanding of client behavior and incentives in federated settings. Recent research has started to delve into game-theoretic considerations in federated supervised learning. \cite{donahue2021optimality} provided valuable insights into the behavior of self-interested, error-minimizing clients forming federated coalitions to learn supervised models. Moreover, designing incentive mechanisms in federated learning has been identified as a significant research area \citep{zhan2021survey}. This work aims to understand the optimal ways to incentivize clients to share their data. Applying these concepts to our setting would offer valuable insights on the incentives and behavior that motivate clients to participate in federated policy learning systems.

\section{Additional Algorithm Details}\label{app:AdditionalAlgorithmDetails}

\subsection{Nuisance Parameter Estimation}\label{app:NuisanceParameterEstimation}

Our results rely on efficient estimation of $\Qw(\pi)$ for any policy $\pi$, which in turn relies on efficient estimation of $\Qi(\pi)$. We leverage ideas of double machine learning \citep{chernozhukov2018double} to guarantee efficient policy value estimation given only high-level conditions on the predictive accuracy of machine learning methods on estimating the nuisance parameters of doubly robust policy value estimators. In this work, we use machine learning and cross-fitting strategies to estimate the nuisance parameters locally. The nuisance parameter estimates must satisfy the conditions of Assumption \ref{ass:FiniteSampleError}. Under these conditions, extensions of the results of \citep{chernozhukov2018double,athey2021policy} would imply that the doubly robust local policy value estimates $\Qhati(\pi)$ for any policy $\pi$ are asymptotically efficient for estimating $\Qi(\pi)$.

The conditions and estimators that guarantee these error assumptions have been extensively studied in the estimation literature. These include parametric or smoothness assumptions for non-parametric estimation. The conditional response function $\mui(x;a)=\E_{\calDcec}[\Yi(a)|\Xi=x]$ can be estimated by regressing observed rewards on observed contexts. The inverse conditional propensity function $\oi(x;a)=1/\P_{\calDcec}(\Ai=a|\Xi=x)$ can be estimated by estimating the conditional propensity function $\ei(x;a)=\P_{\calDcec}(\Ai=a|\Xi=x)$ and then taking the inverse. Under sufficient regularity and overlap assumptions, this gives accurate estimates. We can take any flexible approach to estimate these nuisance parameters. We could use standard parametric estimation methods like logistic regression and linear regression, or we could use non-parametric methods like classification and regression forests to make more conservative assumptions on the true models. Lastly, we note that if it is known that some clients have the same data-generating distribution, it should be possible to learn the nuisance parameters across similar clients.

In our experiments, we decided to estimate the $\mui$ with linear regression and $\ei$ with logistic regression. We used the sklearn Python package to fit the nuisance parameters. The true expected rewards are non-linear but the propensities are simple uniform probabilities. So our experiments emulate the scenario where accurate estimation of $\mui$ is not perfectly possible but accurate estimation of $\ei$ is easy, thus leveraging the properties of double machine learning for policy value estimation.

\subsection{Cross-fitted AIPW Estimation}\label{app:CAIPWEstimation}

Once the nuisance parameters are estimated, they can be used for estimating AIPW scores. Refer to Algorithm \ref{alg:CAIPW} for the pseudocode on how we conduct the cross-fitting strategy for AIPW score estimation. Under this strategy, each client $c\in\calC$ divides their local dataset into $K$ folds, and for each fold $k$, the client estimates $\mui$ and $\oi$ using the rest $K-1$ folds. During AIPW estimation for a single data point, the nuisance parameter estimate that is used in the AIPW estimate is the one that was not trained on the fold that contained that data point. This cross-fitting estimation strategy is described in additional detail in \citep{zhou2023offline}.

\begin{algorithm}
    \caption{Cross-fitted AIPW: Client-Side}
    \label{alg:CAIPW}
\begin{algorithmic}[1]
    \REQUIRE local data $\{(\Xij,\Aij,\Yij)\}_{i=1}^{\nc}$, nuisance parameter estimates $\muhati$ and $\ohati$, number of folds $K$
    \vspace{1pt}
    \STATE Partition local data into $K$ folds
    \STATE Define surjective mapping $k_c:[\nc]\to[K]$ of point index to corresponding fold index
    \FOR{$k=1,\dots,K$}
        \STATE Fit estimators $\muhati^{-k}$ and $\ohati^{-k}$ using rest of data not in fold $k$
    \ENDFOR

    \FOR{$i=1,\dots,\nc$}
        \FOR{$a\in\calA$}
            \STATE $\Ghatij(a)\leftarrow \muhatikij(\Xij;a)+\bigpar{\Yij-\muhatikij(\Xij;a)}\cdot\ohatikij(\Xij;a)\cdot\ones\{\Aij=a\}$
        \ENDFOR
    \ENDFOR
\end{algorithmic}
\end{algorithm}

\subsection{Implementation Details}

The local optimization problems we face in our formulation in Section \ref{sec:Algorithm} are equivalent to cost-sensitive multi-class classification (CSMC). There are many off-the-shelf methods available for such problem. We rely on implementations that can do fast online learning for parametric models in order to be able to do quick iterated updates on the global models at each local client and send these models for global aggregation. So we make use of the cost-sensitive one-against-all (CSOAA) implementation for cost-sensitive multi-class classification in the Vowpal Wabbit library \citep{vowpalwabbit}. This implementation performs separate online regressions of costs on contexts for each action using stochastic gradient descent updates.
At inference time, to make an action prediction, the action whose regressor gives the lowest predicted cost is chosen.

The idea behind this method is that if the classifiers admit regression functions that predict the costs, i.e., $\pi_\theta(x)=\argmax_{a\in\calA}f_\theta(x;a)$ for some $f_\theta\in\calF_\Theta$ such that $f^*(x;a)\in\calF_\Theta$ where $f^*(x;a)=\E[\smash[t]{\Gi(a)}|\Xi=x]$, then efficient regression oracles will return an (near) optimal model \citep{agarwal2017effective}. If realizability does not hold one may need to use more computationally expensive CSMC optimization techniques \citep{beygelzimer2009error}. For example, we could use the weighted all pairs (WAP) algorithm \citep{beygelzimer2008machine} that does $\binom{d}{2}$ pairwise binary classifications and predicts the action that receives majority predictions. Unlike the CSOAA implementation, the WAP method is always consistent in that an optimal model for the reduced problem leads to an optimal cost-sensitive prediction classifier. In our experiments, the rewards are non-linear so realizability does not exactly hold. Yet, we still observe good performance with the CSOAA regression-based algorithm.

\section{Additional Experimental Results}\label{app:AdditionalExperimentalResults}

We follow up on the simulations with heterogeneous clients in Section \ref{sec:Experiments}. Here, we observe the regret performance for one of the other clients that have less distribution shift from the average. Figure \ref{fig:image4} plots the local regret for client 2 of the globally trained policy (green) and the global regret of the globally trained policy (orange), all using the empirical mixture $\w=\barn$. For comparison, we also plot the local regret for client 2 of the locally trained policy (blue). The bands show the one standard deviation from the regrets over five different runs. As expected, we see that the other clients have less distribution shift so the local regret of the global policy nearly matches the global regret, similar to what was observed in the homogeneous experiments but with some level of degradation. Indeed, the local distributions nearly match the global distribution, by construction. In Figure \ref{fig:image5} we plot the same type of regret curves, but instead with the global policy trained with the skewed mixture. We see that their performance slightly degrades, in particular at the lower sample sizes where we are downscaling the local regret relatively more. This is in contrast to what we observe for client 1 where the skewed mixture improved performance. This is because, we are increasing distribution shift as measured by $\TV(\calDcec,\calDwew)$. This is another indicator that our theoretical regret guarantees may be tight.

\begin{figure}
    \centering
    \begin{subfigure}{0.49\linewidth}
        \centering
        \includegraphics[trim=10pt 0pt 50pt 10pt, clip, width=\linewidth]{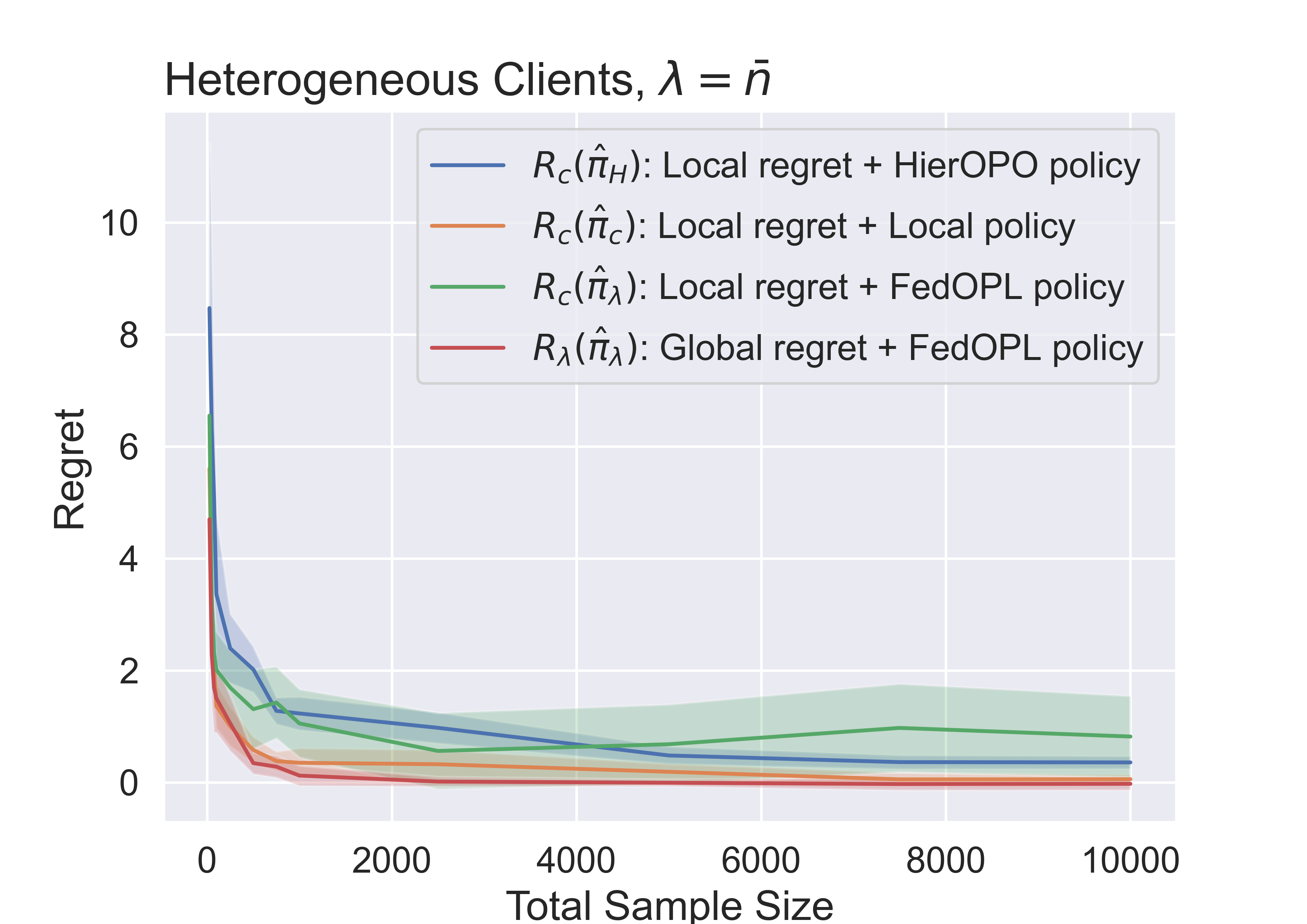}
        \caption{Heterogeneous Clients, $\w=\barn$}
        \label{fig:image4}
    \end{subfigure}
    \begin{subfigure}{0.49\linewidth}
        \centering
        \includegraphics[trim=10pt 0pt 50pt 10pt, clip, width=\linewidth]{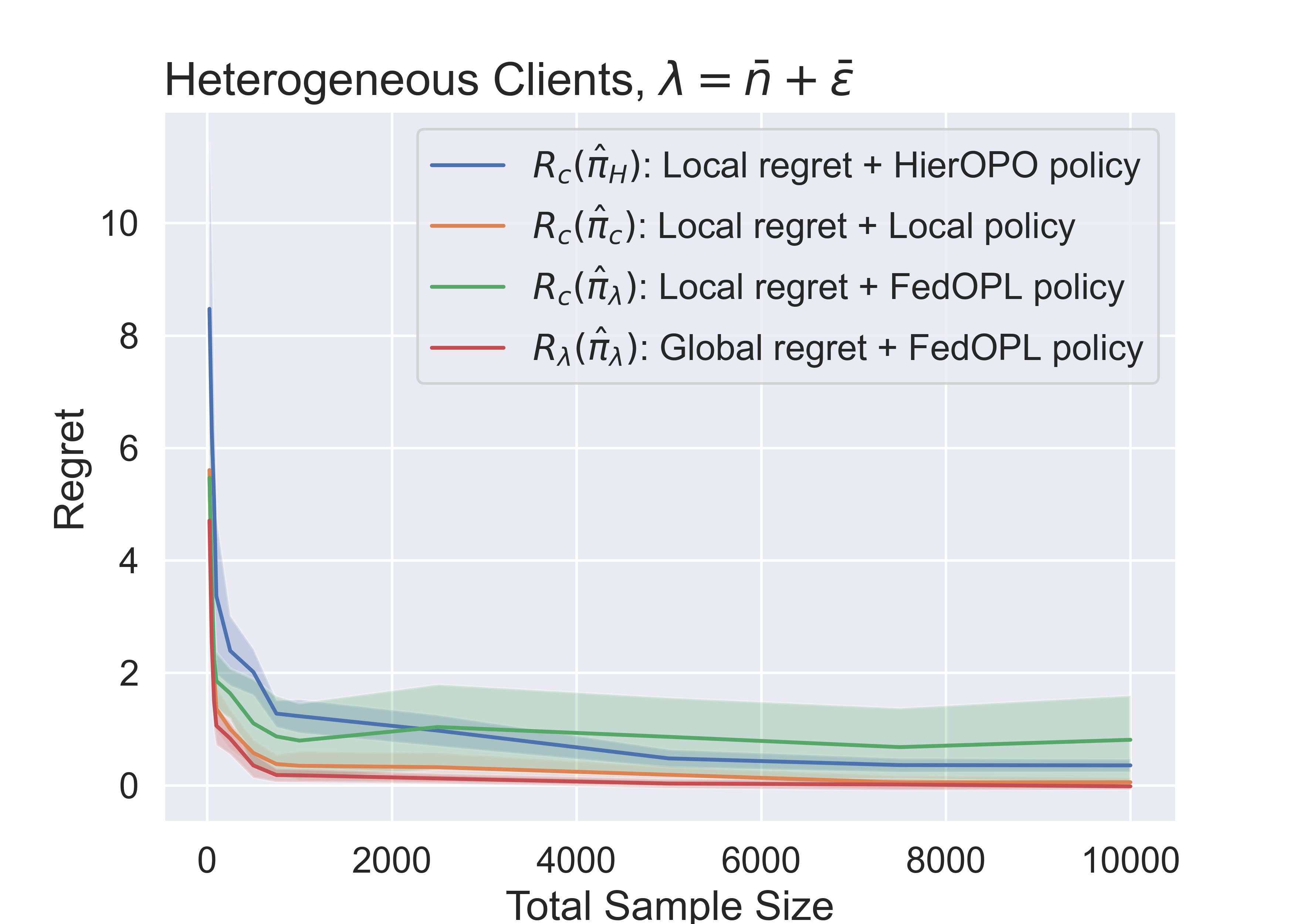}
        \caption{Heterogeneous Clients, $\w=\barn+\bar{\varepsilon}$}
        \label{fig:image5}
    \end{subfigure}
    \caption{Empirical regret curves for simulation experiments. Local regrets are for client 2.}
    \label{fig:client2_img}
\end{figure}

\section{Additional Discussion}\label{app:AdditionalDiscussion}

\subsection{Broader Impact}

The broader impact of this research lies in its potential to revolutionize how we learn and implement decision policies across diverse data sources. By enabling personalized policy learning from observational bandit feedback data, this work could significantly enhance the effectiveness of decision-making processes in various fields, including healthcare and social sciences. The ability to establish both global and local regret upper bounds provides a robust measure of the effectiveness of these policies, ensuring that they are beneficial on both an aggregate and individual level.

From an ethical standpoint, the federated setting of this research respects the privacy of individual data sources, as it does not require the collection of raw data. This approach aligns with the growing societal demand for data privacy and security. However, it also raises questions about the equitable distribution of benefits derived from shared data, especially when there are significant differences between data sources. As we move forward, it will be crucial to ensure that the benefits of such advanced policy learning methods are accessible to all participants, and that the trade-offs involved in the participation of heterogeneous data sources are transparent and fair. This research opens up new avenues for exploring these important issues, paving the way for a more inclusive and ethical data-driven future.

\subsection{Limitations \& Future Work}

There are several limitations to our present work that warrant further consideration, many of which are discussed throughout the main paper. However, here we provide a more exhaustive list of these limitations.

First, our work relies on certain assumptions about the data-generating process, which may not always hold. While we have discussed how some of these assumptions can potentially be relaxed, such as the boundedness and uniform overlap assumption in the data-generating distributions, further investigation is needed. An interesting question that arises from this is whether the pessimism principle, proposed by \cite{jin2022policy}, could be applied to overcome the uniform overlap assumption. Specifically, we must ask whether it is necessary to have coverage under the locally optimal policy for each data source, or if coverage under the globally optimal policy would suffice. Furthermore, the role that mixture weights play in satisfying this assumption should be examined.

Our approach also estimates nuisance parameters locally, which may be inefficient in some cases. For example, if it is known that some clients share the same data-generating distribution, it could be possible to learn these nuisance parameters across similar clients rather than estimating them individually. This presents an opportunity for improving the efficiency of our framework.

Additionally, while our framework represents a significant step toward privacy-preserving policy learning, it does not account for differential privacy considerations. Future work should explore the impact of differential privacy on our regret analysis and empirical results, as this would be highly relevant for practical applications.

Another limitation is the reliance of our optimization procedure on efficient online cost-sensitive classification methods. Although many fast implementations of these methods exist, they are often restricted to particular parametric classes. In policy learning scenarios, especially in public policy settings where decisions are subject to audits, simpler policy classes such as tree-based models are often preferred. Developing efficient federated learning algorithms for tree-based and other non-parametric policies would be a valuable extension of this work. More generally, further research is needed to develop federated methods that can accommodate a wider range of policy classes, including finite policies and neural policies.

We also assumed that the mixture distribution $\w$ is known in our analysis. A promising extension would be to address a more agnostic setting where the mixture distribution is optimized, perhaps using a minimax framework like the one proposed in \cite{mohri2019agnostic}. Such an approach could enhance the robustness and fairness of the global policy.

Moreover, the local regret bounds for each client include an irreducible term that arises from distribution shifts. It remains an open question whether this irreducible regret can be quantified in a federated manner, allowing the central server to determine whether any given client benefits from participation in federation.

In Section \ref{app:ValueOfInformation}, we discussed the value of information provided by the central server to an individual client in the context where all clients are assumed to participate in the federation. However, a more complete analysis would consider the value of information in an equilibrium where clients have the agency to choose whether to participate, rather than assuming all clients automatically engage in federation.

Finally, we leave open the question of whether the bounds we establish are regret optimal. As discussed in \cite{mohri2019agnostic}, skewness-based bounds for distributed supervised learning can be regret optimal. In the homogeneous setting, our results align with regret-optimal outcomes similar to those found in \cite{athey2021policy} and \cite{zhou2023offline}. While this suggests that our bounds may also be regret optimal, establishing lower bounds remains a topic for future research.


\end{document}